\def\thanksnosymbol#1{\protected@xdef\@thanks{\@thanks
        \protect\footnotetext{#1}}}
\begin{document}

\title{Robust Fine-Tuning of Deep Neural Networks with Hessian-based Generalization Guarantees}

\author{
Haotian Ju$^{\ddag}$
\thanksnosymbol{$^{\ddag}$First two authors contributed equally. Email addresses: \texttt{\{ju.h, li.dongyu, ho.zhang\}@northeastern.edu}.}
\qquad\quad
Dongyue Li$^{\ddag}$
\qquad\quad
Hongyang R. Zhang$^{\ddag}$\\\\
{Northeastern University, Boston}
}
\date{}
\maketitle

\begin{abstract}
We consider fine-tuning a pretrained deep neural network on a target task. We study the generalization properties of fine-tuning to understand the problem of overfitting, which has often been observed (e.g., when the target dataset is small or when the training labels are noisy). Existing generalization measures for deep networks depend on notions such as distance from the initialization (i.e., the pretrained network) of the fine-tuned model and noise stability properties of deep networks. This paper identifies a Hessian-based distance measure through PAC-Bayesian analysis, which is shown to correlate well with observed generalization gaps of fine-tuned models. Theoretically, we prove Hessian distance-based generalization bounds for fine-tuned models. We also describe an extended study of fine-tuning against label noise, where overfitting remains a critical problem. We present an algorithm and a generalization error guarantee for this algorithm under a class conditional independent noise model. Empirically, we observe that the Hessian-based distance measure can match the scale of the observed generalization gap of fine-tuned models in practice. We also test our algorithm on several image classification tasks with noisy training labels, showing gains over prior methods and decreases in the Hessian distance measure of the fine-tuned model.
\end{abstract}

\section{Introduction}

Fine-tuning a large pretrained model is a common approach to performing deep learning
as foundation models become readily available. %
While this approach improves upon supervised learning in various cases, overfitting has also been observed during fine-tuning.
Understanding the cause of overfitting is challenging since dissecting the issue in practice requires a precise measurement of generalization errors for deep neural networks.
In this work, we analyze the generalization error of fine-tuned deep models using PAC-Bayes analysis and data-dependent measurements based on deep net Hessian.
With this analysis, we also study the robustness of fine-tuning against label noise.

There is a large body of work concerning generalization in deep neural networks, whereas less is understood for fine-tuning \cite{neyshabur2020being}.
A central result of the recent literature shows norm or margin bounds that improve over classical capacity bounds \cite{bartlett2017spectrally,neyshabur2017pac}.
These results can be applied to the fine-tuning setting, following a distance from the initialization perspective \cite{nagarajan2019generalization,long2020generalization}.
\citet{li2021improved} show generalization bounds depending on various norms between fine-tuned and initialization models.
These results highlight that distance from initialization crucially affects generalization for fine-tuning and informs the design of distance-based regularization to mitigate overfitting due to fine-tuning a large model on a small training set.

Our motivating observation is that in addition to distance from initialization, Hessian crucially affects generalization.
We follow the PAC-Bayesian analysis approach of \citet{neyshabur2017pac} and \citet{arora2018stronger}.
Previous work has identified noise stability properties of deep networks that require Lipschitz-continuity of the activation mappings.
We propose to quantify the stability of a deep model against noise perturbations using Hessian.
This is a practical approach as computational frameworks for deep net Hessian have been developed  \cite{yao2020pyhessian}.
We show that incorporating Hessian into the distance-based measure accurately correlates with the generalization error of fine-tuning.
See Figure \ref{table_intro} for an illustration.

\begin{figure*}
	\begin{minipage}[b]{0.38\textwidth}
		\centering
		\includegraphics[width=0.90\textwidth]{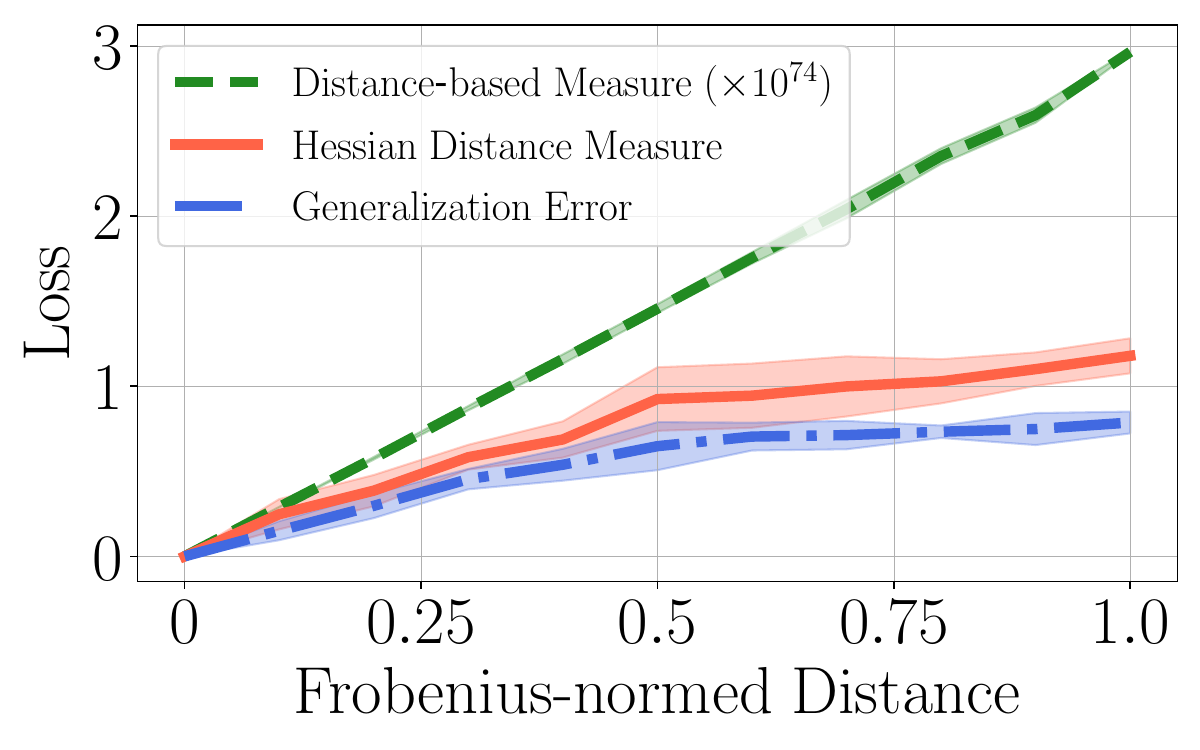}
		\caption{We identify a Hessian distance measure (cf. equation \eqref{eq_intro_hess}) that better captures empirical generalization errors of fine-tuned models.}\label{fig_intro}
	\end{minipage}\hfill
	\begin{minipage}[b]{0.58\textwidth}
		\centering
		\begin{tabular}{@{}cc@{}}
			\toprule
			Method        & Generalization error bound \\
			\hline
			Basic fine-tuning     & $\frac{\sum_{i=1}^L \sqrt{v_i^{\top} \bH_i^{+} v_i}}{\sqrt n}$ \\
			Distance-based Reg.  & $\frac{\sum_{i=1}^L \sqrt{\bigtr{\bH_i^+}\cdot \norm{v_i}^2}}{\sqrt n}$ \\
			Consistent loss w/ Reg. & $\frac{\sum_{i=1}^L \sqrt{\bignorm{(F^{-1})^{\top}}_{1,\infty}\cdot \abs{\bigtr{\bH_i}} \cdot \norm{v_i}^2}}{\sqrt n}$ \\
			\bottomrule
		\end{tabular}
	    \vspace{0.1in}
		\captionof{table}{A summary of the theoretical bounds: To use these results in practice, we can compute Hessian-vector product libraries. See Section \ref{sec_setup} for the definition of these notations.}
		\label{table_intro}
	\end{minipage}
\end{figure*}

Our theoretical contribution is to develop generalization bounds for fine-tuned multilayer feed-forward networks using Hessian.
Our result applies to many fine-tuning approaches, including vanilla fine-tuning with or without early stopping, distance-based regularization, and fine-tuning with weighted losses.
To describe the result, we introduce a few notations.
Let $f_W$ be an $L$ layer feed-forward neural network with weight matrices $W_i$ for every $i$ from $1$ to $L$.
Given a pretrained initialization $f_{W^{(s)}}$ with weight matrices $W^{(s)}_1, \dots, W^{(s)}_L$, let $v_i$ be a flatten vector of $W_i - {W}_i^{(s)}$.
For every $i = 1,\dots, L$, let $\bH_i$ be the \textit{loss} Hessian over $W_i$.
For technical reasons, denote $\bH_i^+$ as a truncation of $\bH_i$ with only non-negative eigenvalues.
We prove that on a training dataset of size $n$ sampled from a distribution $\cD$, with high probability, the generalization error of a fine-tuned $L$-layer network $f_W$ is bounded by:
\begin{align}
    \sum_{i=1}^L \sqrt{\frac{{\max_{(x, y)\sim \cD} v_i^{\top} {\bH_i^+[\ell(f_W(x), y)]} v_i}}{ n}}\label{eq_intro_hess}
\end{align}
See Theorem \ref{theorem_hessian} for the statement. Based on this result, one can derive a generalization bound for distance-based regularization of fine-tuning, assuming constraints on the norm of $v_i$ for every $i$.

Next, we study fine-tuning given noisy labels within a class conditional independent noise setting \cite{natarajan2013learning}.
Given training labels independently flipped with some probability according to a confusion matrix $F$, we extend our result to this setting by incorporating a statistically-consistent loss.
Table \ref{table_intro} summarizes the theoretical results.
Compared with the proof technique of \citet{arora2018stronger}, our proof involves a perturbation analysis of the loss Hessian, which may be of independent interest.

We extend our study to fine-tuning against label noise. We describe an algorithm that {incorporates statistically consistent losses with distance-based regularization} for improving the robustness of fine-tuning.
We evaluate this algorithm on various tasks.
First, under class-conditional label noise, our algorithm performs robustly compared to the baselines, even when the noise rate is as high as 60\%.
Second, on six image classification tasks whose labels are generated by programmatic labeling \cite{mazzetto2021adversarial}, our approach improves the prediction accuracy of fine-tuned models by {3.26\%} on average compared to existing fine-tuning approaches.
Third, for various settings, including fine-tuning vision transformers and language models, we observe similar benefits from applying our approach.
In ablation studies, we find that the strong regularization used in the method reduces the Hessian distance measure more than six times compared with several fine-tuning methods.

\subsection{Related work}\label{sec_related}

Recent work  finds that modern deep nets can fit random labels \cite{zhang2016understanding,hardt2021patterns,arora2021technical}.
Norm and margin bounds are proposed for deep nets \cite{bartlett2017spectrally,neyshabur2017pac}.
These results improve upon classical capacity notions such as the VC dimension of neural networks, which scales with the number of parameters of the network \cite{bartlett2019nearly}.
\citet{neyshabur2018towards} find that (path) norm bounds consistently correlate with empirical generalization loss for overparametrized networks.
\citet{nagarajan2018deterministic} explores noise resilience properties of deep nets to provide generalization bounds.

Recent work finds that data-dependent generalization measures better capture the empirical generalization properties of deep nets \cite{nagarajan2019generalization,wei2019improved,jiang2019fantastic}.
Geometric measures such as sharpness have been shown to correlate with empirical performance  \cite{,foret2020sharpness,tsuzuku2020normalized}.
Several studies consider numerically optimizing the prior and the posterior of the PAC-Bayes bound \cite{dziugaite2017computing,yang2022does}.
To our knowledge, provable generalization bounds that capture geometric notions such as sharpness have not been developed.
We note that the Hessian-based measure differs from the sharpness-based measure of \citet{foret2020sharpness} since the Hessian measures average perturbation, whereas sharpness measures worst-case perturbation.
Compared with existing generalization bounds, our techniques require Lipschitz-continuity of second-order gradients.
Our experiments with numerous data sets and architectures suggest that Hessian is practical for measuring generalization in deep neural networks.

Next, we discuss related approaches for studying transfer learning from a theoretical perspective.
The work of \citet{ben2010theory} and the earlier work of \citet{ben2008notion,crammer2008learning} derive generalization bounds for learning from multiple sources.
Recent works explore new theoretical frameworks and techniques to explain transfer, including  task diversity \cite{tripuraneni2020theory}, random matrix theory \cite{yang2020analysis,wu2020understanding}, and minimax estimators \cite{lei2021near}.
\citet{shachaf2021theoretical} consider a deep linear network and the impact of the similarity between the source (on which a model is pretrained) and target domains on transfer.

We use PAC-Bayesian analysis \cite{mcallester1999some} for fine-tuning, building on a distance from initialization perspective \cite{nagarajan2018deterministic}.
Early works apply PAC-Bayes bounds to analyze generalization in model averaging \cite{mcallester1999pac} and co-training \cite{dasgupta2001pac}.
Recent work has applied this approach to analyze generalization in deep networks \cite{neyshabur2017pac,arora2018stronger,nagarajan2018deterministic}.
Besides, the PAC-Bayesian analysis has been used to derive the generalization loss of graph neural networks \cite{liao2020pac} and data augmentation \cite{chatzipantazis2021learning}.
We refer the readers to a recent survey for additional references \cite{guedj2019primer}.

There is a significant body of work about learning with noisy labels, leaving a comprehensive discussion beyond the current scope.
Some related works include the design of robust losses \cite{liu2015classification}, the estimation of the confusion matrix \cite{patrini2017making,yao2020dual}, the implicit regularization of early stopping \cite{li2020gradient,yao2007early}, and label smoothing.
The work most related to ours within this literature is \citet{natarajan2013learning}, which provides Rademacher complexity-based learning bounds under class conditional label noise.
Our work builds on this work and expands their result in two aspects.
First, our bounds are derived using the PAC-Bayesian analysis, and the bounds depend on the data through the Hessian matrix.
Second, we consider deep neural networks as the hypothesis class.
Our experimental results highlight the importance of explicit regularization for improving generalization given limited and noisy labels.

\paragraph{Organization.}
The rest of this paper is organized as follows.
In Section \ref{sec_main}, we first define the problem setup formally.
Then, we present our main result following a Hessian-based PAC-Bayes analysis.
In Section \ref{sec_exp}, we present experiments to validate our proposed algorithm.
In Section \ref{sec_conclude}, we summarize our contributions and discuss several open questions for future work.
We provide the complete proofs for our results in Appendix \ref{sec_proofs}.
Lastly, we fill in missing experiment details in Appendix \ref{sec_add_exp}.

\section{Hessian-based Generalization Bounds for Fine-tuned Models}\label{sec_main}

This section presents our approach to understanding generalization in fine-tuning.
After setting up the problem formally, we present PAC-Bayesian bounds for vanilla fine-tuning and distance-based regularization.
Then, we extend the results for fine-tuning from noisy labels by incorporating a statistically consistent loss.
Lastly, we overview the proof techniques behind our main result.

\subsection{Problem setup}\label{sec_setup}

Consider predicting a target task given a training dataset of size $n$.
Denote the feature vectors and labels as $x_i$ and $y_i$, for $i = 1, \dots, n$, in which $x_i$ is $d$-dimensional and $y_i$ is a class label between $1$ to $k$.
Assume that the training examples are drawn independently from an unknown distribution $\cD$.
Let $\cX$ be the support set of the feature vectors of $\cD$.

Given a pretrained $L$-layer feed-forward network with weight matrices $\hat{W}^{(s)}_i$, for $i = 1, \dots, L$, we fine-tune the weights to solve the target task.
Let $f_W$ be an $L$-layer network initialized with $\hat{W}^{(s)}$.
The dimension of layer $i$, $W_i$, is $d_i$ by $d_{i-1}$.
Let $d_i$ be the output dimension of layer $i$.
Thus, $d_0$ is equal to the input dimension $d$, and $d_L$ is equal to the output dimension $k$.
Let $\phi_i(\cdot)$ be the activation function of layer $i$.
Given a feature vector $x\in\cX$, the output of $f_W$ is
\begin{align} %
    f_W(x) = \phi_L\left(W_L\cdot\phi_{L-1}\Big(W_{L-1}\cdots\phi_{1}\big(W_1\cdot x\big)\Big)\right).\label{eq_nn}
\end{align}%
Given a loss function $\ell: \real^k \times \set{1, \dots, k} \rightarrow \real$, let $\ell(f_W(x), y)$ be the loss of $f_W$.
The empirical loss of $f_W$, denoted by $\hat{\cL}(f_W)$, is the loss of $f_W$ averaged among the training examples.
The expected loss of $f_W$, denoted by ${\cL}(f_W)$, is the expectation of $\ell(f_W(x), y)$ over $x$ sampled from $\cD$ with label $y$.
The {generalization error} of $f_W$ is defined as its expected loss minus its empirical loss.

\paragraph{Notations.}
For any vector $v$, let $\norm{v}$ be the Euclidean norm of $v$.
For any matrix $X\in\real^{m\times n}$, let $\normFro{X}$ be the Frobenius norm and $\bignorms{X}$ be the spectral norm of $X$.
Let $\bignorm{X}_{1,\infty}$ be defined as $\max_{1\leq j\leq n}\sum_{i=1}^m\bigabs{X_{i,j}}$.
If $X$ is a squared matrix, then the trace of $X$, $\bigtr{X}$, equals the sum of $X$'s diagonal entries.
For two matrices $X$ and $Y$ with the same dimension, let $\langle X, Y\rangle = \bigtr{X^{\top}Y}$ be the matrix inner product of $X$ and $Y$.
For every $i$ from $1$ to $L$, let $\bH_{i}[\ell(f_W(x), y)]$ be a $d_id_{i-1}$ by $d_id_{i-1}$ Hessian matrix of the loss $\ell(f_W(x), y)$ over $W_i$.
Given an eigen-decomposition of the Hessian matrix, $U D U^{\top}$, let $\bH_i^+[\ell(f_W(x), y)] = U \max(D, 0) U^{\top}$ be a truncation of $\bH_i^+$ inside the positive eigen-space.
Let $v_i$ be a flatten vector of $W_i - \hat{W}_i^{(s)}$.

For two functions $f(n)$ and $g(n)$, we write $g(n) = O(f(n))$ if there exists a fixed value $C$ that does not grow with $n$ such that $g(n) \le C \cdot f(n)$ when $n$ is large enough.

\subsection{Generalization bounds using PAC-Bayesian analysis}\label{sec_pac}

To better understand what impacts fine-tuning performance, we begin by investigating it following a PAC-Bayesian approach, which has been crucial for deriving generalization bounds for deep nets \cite{neyshabur2017pac,arora2018stronger,liao2020pac}.
We consider a prior distribution $\cP$ defined as a {noisy perturbation} of the {pretrained} weight matrices $\hat{W}^{(s)}$.
We also consider a posterior distribution $\cQ$ defined as a {noisy perturbation} of the {fine-tuned} weights $\hat{W}$.
From a PAC-Bayesian perspective, the generalization performance of a deep network depends on how much noisy perturbations affect its predictions.

Previous work by \citet{arora2018stronger} has explored the noise stability properties of deep networks and showed how to derive stronger generalization bounds that depend on the properties of the network.
Let $\ell_{\cQ}(f_W(x), y)$ be the loss of $f_W(x)$, after a noisy perturbation following $\cQ$.
Denote $\ell_{\cQ}(f_W(x), y)$ minus $\ell(f_W(x), y)$ as $\cI(f_W(x), y)$:
A lower value of $\cI(f_W(x), y)$ means the network is more error resilient.
The result of \citet{arora2018stronger} shows how to quantify the error resilience of $f_W$ using Lipschitz-continuity and smoothness of the activation functions across different layers.
However, there is still a large gap between the generalization bounds achieved by \citet{arora2018stronger} and the empirical generalization errors.
Thus, a natural question is whether one can achieve provable generalization bounds that better capture empirical performance.

The key idea of our approach is to measure generalization using the Hessian of $\ell(f_W(x), y)$.
From a technical perspective, this goes beyond the previous works by leveraging Lipschitz-continuity and smoothness of the derivatives of $f_W$.
Besides, computational frameworks \cite{yao2020pyhessian} are developed for efficiently computing deep net Hessian, such as Hessian vector products \cite{sagun2016eigenvalues,papyan2018full,papyan2019measurements,ghorbani2019investigation,wu2020dissecting}.
Our central observation is that incorporating Hessian into the distance from initialization leads to provable bounds that accurately correlate with empirical performance.

To illustrate the connection between Hessian and generalization,
consider a Taylor's expansion of $\ell(f_{W+U}(x), y)$ at $W$, for some small perturbation $U$ over $W$. Suppose the mean of $U$ is and its covariance matrix is $\Sigma$. Then, $\cI(f_W(x), y)$ is equal to $\inner{\Sigma}{\bH[\ell(f_W(x),y)]}$ plus higher-order expansion terms. %
While this expansion applies to every sample, it remains to argue the uniform convergence of the Hessian.
This requires a perturbation analysis of the Hessian of all layers but can be achieved assuming Lipschitz-continuity and smoothness of the derivatives of the activation mappings.
We state our results formally below.

\begin{theorem}\label{theorem_hessian}
    Assume the activation functions $\phi_i(\cdot)$ for all $i=1, \dots, L$ and the loss function $\ell(\cdot, \cdot)$ are all twice-differentiable, and their first-order and second-order derivatives are all Lipschitz-continuous.
    Suppose $\ell(x, y)$ is bounded by a fixed value $C$ for any $x\in\cX$ with class label $y$.
    Given an $L$-layer network $f_{\hat{W}}$,
    with probability at least 0.99, for any fixed $\epsilon$ close to zero, we have
    {\begin{align}
        \cL\big(f_{\hat W}\big) \le  \big(1 + \epsilon \big) \hat{\cL}\big(f_{\hat W}\big) 
       +  {\frac{ (1 + \epsilon) \sqrt{C} \sum_{i=1}^L \sqrt{\cH_i}}{\sqrt n}} + \xi, \label{eq_main}
    \end{align}}%
    where $\cH_i$ is defined as $\max_{(x,y) \in \cD} v_i^{\top} \bH_i^+[\ell(f_{\hat{W}}(x), y)] v_i$,\footnote{In practice, to apply this result, one may set $\cD$ as the union of the training, validation, and testing datasets.} for all $i = 1,\dots, L$, and $\xi = O(n^{-3/4})$ represents an error term from the Taylor's expansion.
\end{theorem}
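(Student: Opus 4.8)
The strategy is a PAC-Bayesian argument in the spirit of \citet{neyshabur2017pac} and \citet{arora2018stronger}, but with the Lipschitz-based noise-stability estimates replaced by a second-order Taylor expansion governed by the loss Hessian. First I would fix a Gaussian prior $\cP=\mathcal{N}(\hat W^{(s)},\Sigma)$ centered at the pretrained weights and a Gaussian posterior $\cQ=\mathcal{N}(\hat W,\Sigma)$ centered at the fine-tuned weights, sharing one block-diagonal covariance $\Sigma=\operatorname{diag}(\Sigma_1,\dots,\Sigma_L)$ that is left free until the final step. Because the two Gaussians share a covariance, $\mathrm{KL}(\cQ\,\|\,\cP)=\tfrac12\sum_{i=1}^L v_i^{\top}\Sigma_i^{+}v_i$ (a pseudo-inverse, with $v_i$ in the range of $\Sigma_i$). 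Since $\ell$ is bounded by $C$, I would invoke a multiplicative (``fast-rate'') form of the PAC-Bayes theorem to get, with probability at least $0.99$,
\[
\cL_{\cQ}\big(f_{\hat W}\big)\ \le\ (1+\epsilon)\,\hat{\cL}_{\cQ}\big(f_{\hat W}\big)\ +\ O\!\left(\frac{C\,\big(\mathrm{KL}(\cQ\,\|\,\cP)+1\big)}{\epsilon\,n}\right),
\]
where $\cL_{\cQ}$ and $\hat{\cL}_{\cQ}$ are the expected and empirical losses averaged additionally over the perturbed weights $W\sim\cQ$.

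The core step is to pass from $\cL_{\cQ},\hat{\cL}_{\cQ}$ back to $\cL(f_{\hat W}),\hat{\cL}(f_{\hat W})$ by noise stability. For a fixed $(x,y)$ I would Taylor-expand $t\mapsto\ell(f_{\hat W+tU}(x),y)$ about $t=0$: the first-order term is linear in $U$ and vanishes once we average over $U\sim\cQ$; the second-order term is $\tfrac12\,U^{\top}\bH\,U$, whose off-diagonal (cross-layer) blocks also vanish in expectation because the per-layer perturbations are independent and mean-zero, leaving $\tfrac12\sum_{i=1}^L\inner{\Sigma_i}{\bH_i[\ell(f_{\hat W}(x),y)]}$; and since $\bH_i\preceq\bH_i^{+}$ this is at most $\tfrac12\sum_i\inner{\Sigma_i}{\bH_i^{+}[\ell(f_{\hat W}(x),y)]}$. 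Taking the worst case over the support of $\cD$ introduces $\cH_i$ once $\Sigma_i$ is fixed, so $\hat{\cL}_{\cQ}\le\hat{\cL}(f_{\hat W})+\tfrac12\sum_i\inner{\Sigma_i}{\bH_i^{+}}+(\text{remainder})$, and a symmetric argument controls $\cL_{\cQ}$ from below by $\cL(f_{\hat W})$ minus the same quantities; the truncation to nonnegative eigenvalues is exactly what turns the second-order term into an upper bound rather than an absolute value.

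The main obstacle is making the remainder uniform over $\cX$, i.e. the perturbation analysis of the loss Hessian: I need $\bH_i[\ell(f_{\hat W+tU}(x),y)]$ to stay within $O(\|U\|)$ of $\bH_i[\ell(f_{\hat W}(x),y)]$ for all $t\in[0,1]$ and all $x\in\cX$, so that the third-order Taylor remainder at every point is bounded by a constant times $\mathbb{E}\|U\|^3$. Establishing this requires differentiating $f_W(x)$ twice through the chain rule and propagating the Lipschitz-continuity hypotheses on $\phi_j',\phi_j'',\ell',\ell''$ across all $L$ layers; the resulting bound inevitably carries products of the spectral norms $\bignorms{W_j}$ and grows with depth, so some care is needed to keep it subdominant. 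This is the ``perturbation analysis of the loss Hessian'' flagged in the introduction and is the technical heart; given it, the remainder is of lower order.

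Finally I would choose $\Sigma_i$ to balance the two contributions layer by layer. Writing $\Sigma_i=t_i M_i$ for a shape $M_i$ and a scalar scale $t_i$, the per-layer cost $\tfrac{t_i}{2}\inner{M_i}{\bH_i^{+}}+O\!\big(\tfrac{C}{n}\cdot(v_i^{\top}M_i^{+}v_i)/t_i\big)$ is minimized first over $t_i$ by AM--GM and then over $M_i$ by a Cauchy--Schwarz computation that forces $M_i$ to be rank-one along $v_i$; the minimum has order $\sqrt{C\,(v_i^{\top}\bH_i^{+}v_i)/n}=\sqrt{C\,\cH_i/n}$, which up to the $(1+\epsilon)$ factor is the $i$-th summand of \eqref{eq_main}. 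Because this optimal covariance depends on the data through $\hat W$, I would keep the prior data-independent by letting $\cP$ be a mixture over a fixed net of candidate covariances and paying the associated $\log$ term, which does not grow with $n$. The same choice fixes $\operatorname{tr}(\Sigma_i)=O(n^{-1/2})$, so the summed cubic Taylor remainders contribute only $\xi=O(n^{-3/4})$, completing the argument.
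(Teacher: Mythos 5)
Your overall architecture---Gaussian prior and posterior sharing a per-layer covariance, a multiplicative PAC-Bayes bound, a second-order Taylor expansion whose first-order and cross-layer terms vanish in expectation, a Lipschitz bound on the loss Hessian to control the cubic remainder, and a rank-one optimization of $\Sigma_i$ giving $\bignormFro{\Sigma_i}=O(n^{-1/2})$ and hence $\xi=O(n^{-3/4})$---matches the paper's. But there is a genuine gap in how you convert $\cL_\cQ$ back to $\cL(f_{\hat W})$. You claim a ``symmetric argument'' controls $\cL_\cQ$ from below by $\cL(f_{\hat W})$ minus $\tfrac12\sum_i\inner{\Sigma_i}{\bH_i^+}$ (maximized over the support), with the truncation $\bH_i\preceq\bH_i^{+}$ doing the work. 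The sign is wrong on this side: an upper bound on $\cL(f_{\hat W})-\cL_\cQ(f_{\hat W})$ requires an upper bound on $-\exarg{(x,y)\sim\cD}{\inner{\Sigma_i}{\bH_i}}$, which is governed by the \emph{negative} eigenvalues of $\bH_i$. Writing $\bH_i=\bH_i^+-\bH_i^-$ with both parts positive semidefinite, one gets $-\inner{\Sigma_i}{\bH_i}\le\inner{\Sigma_i}{\bH_i^-}$, and this is not controlled by $\inner{\Sigma_i}{\bH_i^+}$ (take $\bH_i$ negative definite, so $\bH_i^+=0$). Followed literally, your route yields a bound in terms of $v_i^{\top}(\bH_i^++\bH_i^-)v_i$ rather than $v_i^{\top}\bH_i^+v_i$, and---since the Hessian term you balance against the KL carries an $O(1)$ rather than $O(\epsilon)$ coefficient---an extra $\epsilon^{-1/2}$ factor in front of $\sqrt{\cH_i/n}$, neither of which matches the stated theorem.

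The paper avoids this by never bounding $\cL-\cL_\cQ$ in isolation. It pairs the population term $-\exarg{(x,y)\sim\cD}{\inner{\Sigma_i}{\bH_i}}$ with the empirical term $+\frac{1}{n\beta}\sum_j\inner{\Sigma_i}{\bH_i[\ell(f_{\hat W}(x_j),y_j)]}$, so the signed Hessian contributions cancel up to (a) the deviation of the empirical average Hessian from its population mean and (b) a residual $\big(\tfrac1\beta-1\big)\exarg{(x,y)\sim\cD}{\inner{\Sigma_i}{\bH_i}}$, which enters with a \emph{positive} sign and an $O(\epsilon)$ coefficient, so the truncation $\bH_i\preceq\bH_i^+$ applies legitimately and the balancing yields $(1+\epsilon)\sqrt{C\cH_i/n}$. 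Step (a) is a second, entirely separate use of the Hessian Lipschitz analysis that your sketch omits: the uniform convergence statement of Lemma \ref{lemma_union_bound} (matrix Bernstein plus an $\epsilon$-net over weight space, using Proposition \ref{prop_hessian_lip}) gives $\bignormFro{\frac1n\sum_j\bH_i[\cdot]-\exarg{(x,y)\sim\cD}{\bH_i[\cdot]}}=O(\sqrt{\log n/n})$, which after multiplying by $\bignormFro{\Sigma_i}=O(n^{-1/2})$ is absorbed into $\xi$. You invoke the Hessian perturbation analysis only for the Taylor remainder; to recover the theorem as stated you need it here as well.
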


Theorem \ref{theorem_hessian} applies to vanilla fine-tuning, with or without early stopping.
A corollary of this result can be derived for distance-based regularization,
which restricts the distance between $W_i$ and $\hat{W}_i^{(s)}$ for every layer:
\begin{equation}
	\bignormFro{W_i - \hat{W}_i^{(s)}} \le \alpha_i,~\forall\, i = 1,\dots,L. \label{eq_constraint}
\end{equation}
Notice that $\cH_i \le \alpha_i^2 \bigbrace{\max_{(x,y)\in\cX}\tr[\bH_i^+(\ell(f_{\hat W}(x), y))]}$, the result from equation \eqref{eq_main} could then be adopted for distance-based regularization of fine-tuning.

We demonstrate that the generalization bound of Theorem \ref{theorem_hessian} accurately correlates with empirical generalization errors.
We experiment with seven methods, including fine-tuning with and without early stopping, distance-based regularization, label smoothing, mixup, etc.
Figure \ref{fig:hessian_reg} shows that the Hessian measure (i.e., the second part of equation \eqref{eq_main}) correlates with the generalization errors of these methods.
Second, we plot the value of $\cD_i$ (averaged over all layers) between fine-tuning with implicit (early stopping) regularization and explicit (distance-based) regularization.
We find that $\cD_i$ is smaller for explicit regularization than implicit regularization.

\begin{figure*}[!t]
    \begin{subfigure}[b]{0.24\textwidth}
        \centering
        \includegraphics[width=0.9\textwidth]{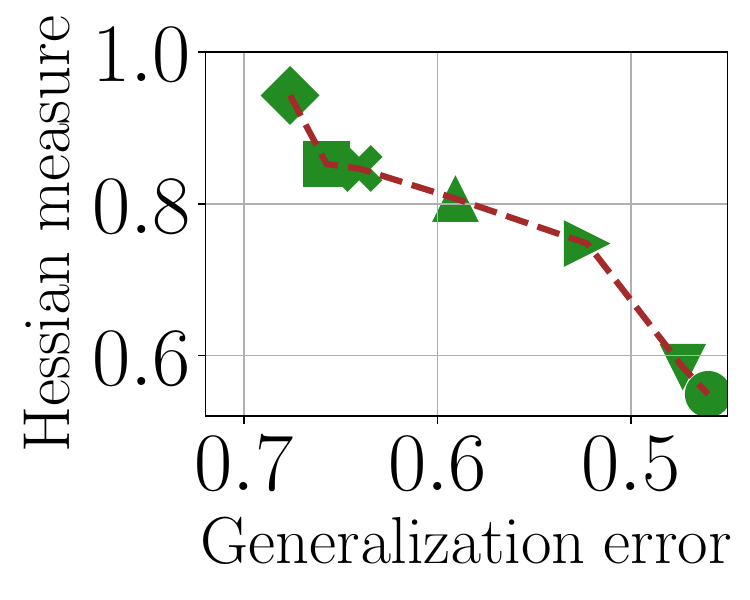}
        \caption{ResNet-50}\label{fig_reg_res1}
    \end{subfigure}\hfill
    \begin{subfigure}[b]{0.24\textwidth}
        \centering
        \includegraphics[width=0.9\textwidth]{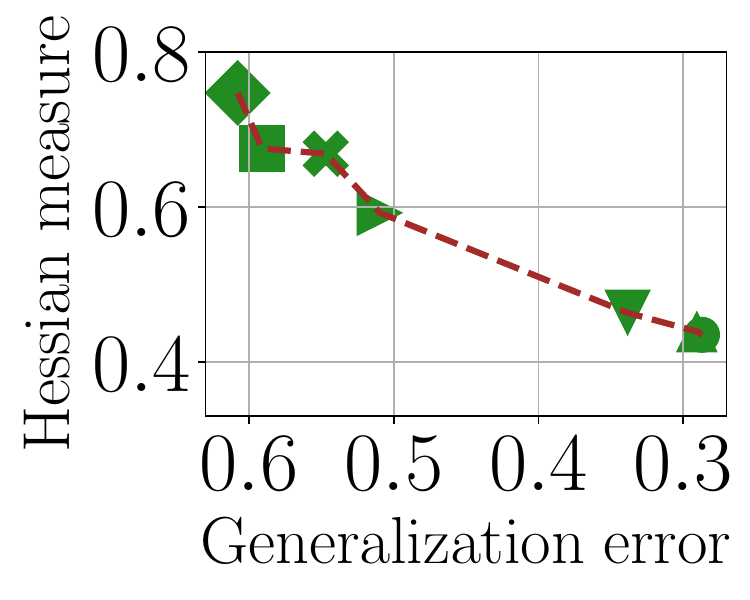}
        \caption{ResNet-50}\label{fig_reg_res2}
    \end{subfigure}\hfill
    \begin{subfigure}[b]{0.24\textwidth}
        \centering
        \includegraphics[width=0.9\textwidth]{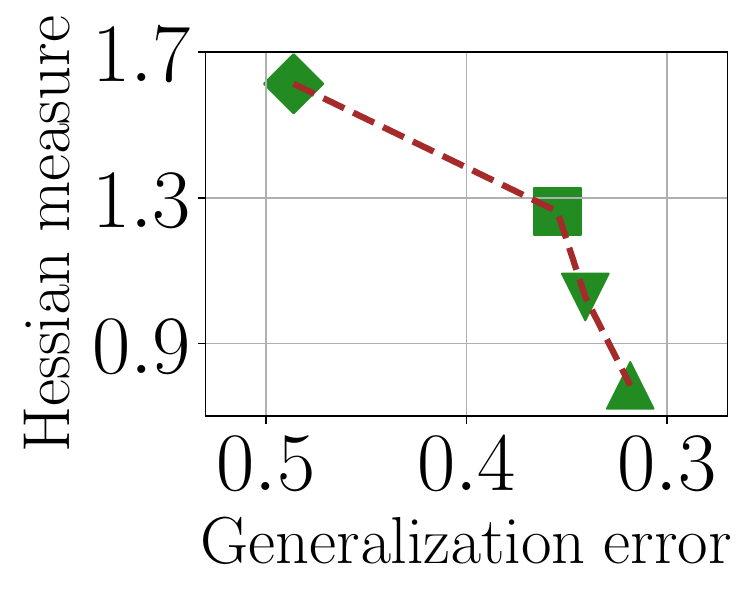}
        \caption{BERT-Base}\label{fig_reg_bert}
    \end{subfigure}\hfill
    \begin{subfigure}[b]{0.24\textwidth}
        \centering
        \includegraphics[width=0.9\textwidth]{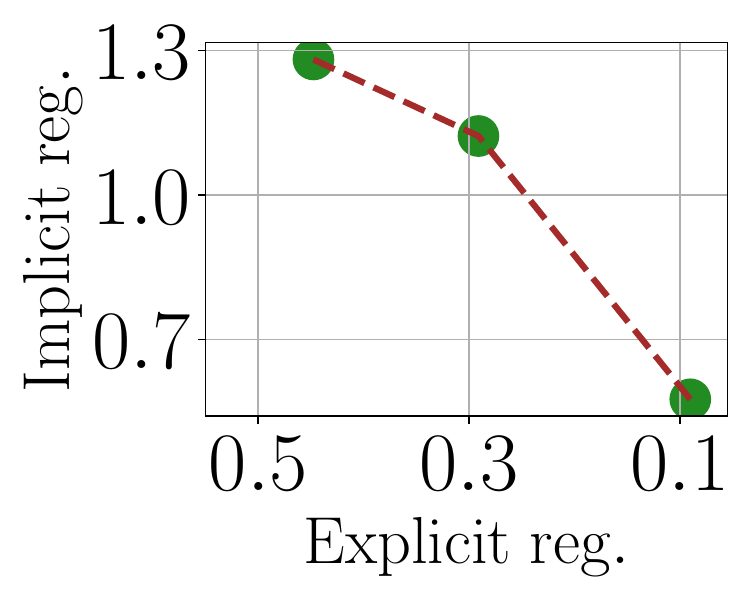}
        \caption{Average $\cD_i$}\label{fig_avg_d}
    \end{subfigure}
    \caption{
    \ref{fig_reg_res1}, \ref{fig_reg_res2}, \ref{fig_reg_bert}: The Hessian measures accurately correlate with empirical generalization errors for seven fine-tuning methods.
    \ref{fig_avg_d}: $\cD_i$ is smaller for fine-tuning with explicit distance-based regularization than implicit early-stopped regularization.
    Each dot represents the generalization error and the Hessian measures for one fine-tuned model using a particular regularization method.
    Further details are described in Appendix \ref{sec_add_exp}.}
    \label{fig:hessian_reg}
\end{figure*}

\subsection{Incorporating consistent losses with distance-based regularization}\label{sec_unbiased_loss}

Next, we consider the robustness of fine-tuning with noisy labels.
We consider a random classification noise model where in the training dataset, each label $y_i$ is independently flipped to $1,\dots,k$ with some probability.
Denote the noisy label by $\tilde y_i$.
With class conditional label noise \cite{natarajan2017cost}, $\tilde y$ is equal to $z$ with probability $F_{y, z}$, for any $z = 1,\dots, k$, where $F$ is a $k$ by $k$ confusion matrix.

Previous works \cite{natarajan2013learning,patrini2017making} have suggested minimizing statistically-consistent losses for learning with noisy labels.
Let $\bar \ell$ be a weighted loss parameterized by a $k$ by $k$ weight matrix $\Lambda$,
\begin{align}
    { \bar{\ell}(f_W(x), \tilde y) = \sum_{i=1}^k \Lambda_{\tilde y, i} \cdot \ell(f_W(x), i). }%
    \label{eq_weighted_loss}
\end{align}
It is known that for $\Lambda = F^{-1}$ (recall that $F$ is the confusion matrix), the weighted loss $\bar{\ell}$ is unbiased in the sense that the expectation of $\bar{\ell}(f_W(x), \tilde y)$ over the labeling noise is $\ell(f_W(x), y)$:
\begin{align}{
    \exarg{\tilde y}{\bar{\ell}(f_W(x), \tilde y) \mid y}
    = \ell(f_W(x), y). \label{eq_unbiased}}
\end{align}
See Lemma 1 in \citet{natarajan2013learning} for the binary setting and Theorem 1 in \citet{patrini2017making} for the multi-class setting.
As a result, a natural approach to designing a theoretically-principled fine-tuning algorithm is to minimize weighted losses with $\Lambda = F^{-1}$.
However, this approach does not take overfitting into consideration.

Our proposed approach involves incorporating consistent losses with distance-based regularization.
Let $\bar{\cL}(f_W)$ be the average of the weighted loss $\bar{\ell}(f_W(x_i), \tilde y_i)$ over $i = 1,\dots, n$.
We extend Theorem \ref{theorem_hessian} to this setting as follows.

\begin{theorem}\label{theorem_noisy}
    Assume the activation functions $\set{\phi_i(\cdot)}_{i=1}^L$ and the loss function $\ell(\cdot,\cdot)$ satisfy the assumptions stated in Theorem \ref{theorem_hessian}.
    Suppose the noisy labels are independent of the feature vector conditional on the class label.
    Suppose the loss function $\ell(x, z)$ is bounded by $C$ for any $x\in\cX$ and any $z\in\set{1,\dots,k}$.
    With probability $0.99$, for any fixed $\epsilon$ close to zero, we have
    {\begin{align*}
        \cL\big(f_{\hat W}\big)
        \le& (1 + \epsilon) \bar{\cL}\big(f_{\hat W}\big) \\
        &+ \frac{(1 + \epsilon) \sqrt{{C \bignorm{(F^{-1})^{\top}}_{1,\infty}}}  \sum_{i=1}^L \sqrt{\alpha_i^2 \max_{x\in\cX, y\in\set{1,\dots,k}} \abs{\tr[\bH_i(\ell(f_{\hat W}(x), y))]}}}{\sqrt n} + \xi,
    \end{align*}}%
    where $\alpha_i$ is any value greater than $\normFro{\hat W_i - \hat W_i^{(s)}}$, for any $i = 1,\dots, L$, and $\xi = O\big(n^{-3/4}\big)$ represents an error term from the Taylor's expansion.
\end{theorem}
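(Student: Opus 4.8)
The plan is to run the Hessian-based PAC-Bayesian argument behind Theorem~\ref{theorem_hessian}, but with the weighted loss $\bar\ell$ of~\eqref{eq_weighted_loss} (with $\Lambda = F^{-1}$) in place of $\ell$ and with the noisy data distribution in place of $\cD$, and then to convert the resulting population bound back to the clean population loss $\cL(f_{\hat W})$ using the unbiasedness identity~\eqref{eq_unbiased}. Write $\kappa := \bignorm{(F^{-1})^{\top}}_{1,\infty}$, which is the largest absolute row sum of $F^{-1}$; since each $\ell(f_W(x), j)$ is bounded by $C$ we have $\bigabs{\bar\ell(f_W(x), \tilde y)} \le \sum_{j=1}^k \bigabs{(F^{-1})_{\tilde y, j}}\,\bigabs{\ell(f_W(x), j)} \le C\kappa$, and, because $\bar\ell$ is a finite linear combination of the $\ell(\cdot, j)$'s, $\bar\ell$ inherits all the smoothness hypotheses of Theorem~\ref{theorem_hessian} (with Lipschitz constants of its first and second derivatives inflated by at most $\kappa$). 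Let $\cQ$ be the per-layer isotropic Gaussian perturbation $\hat W_i + \mathcal{N}(0, \sigma_i^2 I)$ of the fine-tuned weights and $\cP$ the same perturbation of the pretrained weights $\hat W^{(s)}$, so that $\mathrm{KL}(\cQ \,\|\, \cP) \le \sum_{i=1}^L \normFro{\hat W_i - \hat W_i^{(s)}}^2/(2\sigma_i^2) \le \sum_{i=1}^L \alpha_i^2/(2\sigma_i^2)$, and apply a multiplicative (fast-rate) PAC-Bayes inequality for losses of range $C\kappa$: with probability $0.99$, the $\cQ$-averaged clean population loss (which equals the $\cQ$-averaged population weighted loss by~\eqref{eq_unbiased}) is at most $(1+\epsilon)$ times the $\cQ$-averaged empirical weighted loss plus a term of order $C\kappa\,(\mathrm{KL}+1)/n \le C\kappa \sum_i \alpha_i^2/(\sigma_i^2 n)$.

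It then remains to pass from these $\cQ$-averaged quantities to the unperturbed network $f_{\hat W}$, which I would do via the second-order Taylor expansion of $\bar\ell(f_{\hat W + U}(x), \tilde y)$ in $U$ exactly as in the proof of Theorem~\ref{theorem_hessian}. The leading correction on each side involves $\tfrac12 \sum_{i=1}^L \sigma_i^2\, \tr[\bH_i[\bar\ell(f_{\hat W}(x), \tilde y)]]$, and, by linearity of the Hessian, $\bH_i[\bar\ell(f_{\hat W}(x), \tilde y)] = \sum_{j=1}^k (F^{-1})_{\tilde y, j}\, \bH_i[\ell(f_{\hat W}(x), j)]$. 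The point that keeps only $\sqrt{\kappa}$ rather than the naive $\kappa$ in the final bound is that the Hessian, being a linear functional of the loss, is \emph{also} unbiased under the label noise: the same computation as in~\eqref{eq_unbiased} gives $\mathbb{E}_{\tilde y \mid y}\big[\bH_i[\bar\ell(f_{\hat W}(x), \tilde y)]\big] = \bH_i[\ell(f_{\hat W}(x), y)]$ (using $\mathbb{E}_{\tilde y\mid y}[(F^{-1})_{\tilde y, j}] = (F F^{-1})_{y, j} = \delta_{y,j}$). Hence the population-side correction collapses, after the expectation over $(x, \tilde y)$, to $\tfrac12 \sum_i \sigma_i^2\, \mathbb{E}_{(x,y)}\big[\tr[\bH_i[\ell(f_{\hat W}(x), y)]]\big]$, which is at most $\tfrac12 \sum_i \sigma_i^2 \beta_i$ with $\beta_i := \max_{x\in\cX,\, y}\bigabs{\tr[\bH_i[\ell(f_{\hat W}(x), y)]]}$ and \emph{no} factor of $\kappa$; and the empirical-side correction is an average of $n$ independent bounded summands, each of magnitude at most $\sigma_i^2 \kappa \beta_i$ but with expectation equal to the same $\kappa$-free quantity $\sigma_i^2\, \mathbb{E}_{(x,y)}[\tr[\bH_i[\ell]]]$, so (handling the data dependence of $\hat W$ by the same covering argument as in Theorem~\ref{theorem_hessian}) it equals the population-side quantity up to a deviation of order $\sigma_i^2 \kappa \beta_i / \sqrt n$, which is of lower order.

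Finally, choosing $\sigma_i^2$ of order $\alpha_i \sqrt{C\kappa/(n\beta_i)}$ balances, layer by layer, the $\kappa$-free noise-stability term $\tfrac12 \sum_i \sigma_i^2 \beta_i$ against the PAC-Bayes term $C\kappa \sum_i \alpha_i^2/(\sigma_i^2 n)$; each layer then contributes a term of order $\sqrt{C\kappa\, \alpha_i^2 \beta_i / n}$, which sums to $\sqrt{C\kappa}\, \sum_{i=1}^L \sqrt{\alpha_i^2 \beta_i}/\sqrt n$ as claimed, while the third-order Taylor remainders are of order $\sigma_i^3 = O(n^{-3/4})$ and, together with the lower-order deviation above, are collected into $\xi$; the population-side loss is $\cL(f_{\hat W})$ by~\eqref{eq_unbiased} and the empirical-side loss is $\bar{\cL}(f_{\hat W})$ by definition, yielding the stated inequality. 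I expect the main obstacle to be the step inherited from Theorem~\ref{theorem_hessian}: the uniform-over-$\cX$ control of the second-order Taylor remainder through a layer-by-layer perturbation analysis of the Hessians $\bH_i$, now carried out for $\bar\ell$, and in particular showing that this remainder stays $O(n^{-3/4})$ as the perturbation covariances $\sigma_i^2 I$ shrink like $n^{-1/2}$; the $\kappa$-bookkeeping, the reduction through unbiasedness, and the concentration step that prevents $\kappa$ from being squared are comparatively routine by contrast.
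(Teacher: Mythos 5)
Your proposal is correct in substance and follows the same architecture as the paper's proof: the three-part decomposition of $\cL(f_{\hat W}) - \tfrac{1}{\beta}\bar\cL(f_{\hat W})$ into population-side noise stability, a PAC-Bayes gap, and empirical-side noise stability; the layerwise Taylor/Hessian estimate of Lemma \ref{lemma_perturbation}; the unbiasedness identity \eqref{eq_unbiased} to collapse the population side to the clean-label Hessian; the uniform-convergence/covering step of Lemma \ref{lemma_union_bound}; and the final optimization over $\sigma_i$. The one genuine difference is where $\rho := \bignorm{(F^{-1})^{\top}}_{1,\infty}$ enters. The paper keeps the PAC-Bayes complexity term at range $C$ and injects $\rho$ through a worst-case bound on the empirical weighted Hessian traces, $\sum_{l}\Lambda_{\tilde y,l}\,\tr\bH_i[\ell(\cdot,l)] \le \rho\max_{x,y}\abs{\tr\bH_i[\ell]}$ (equation \eqref{eq_combine_2_noisy}), so the balance is between $\epsilon\rho\sigma_i^2\beta_i$ and $C\alpha_i^2/(\sigma_i^2 n)$. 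You instead put $\rho$ into the PAC-Bayes term via the range $C\rho$ of $\bar\ell$, and argue the Hessian corrections are $\rho$-free because the Hessian of $\bar\ell$ is itself unbiased, with the empirical side concentrating to the clean population trace up to an $O(\sigma_i^2\rho\beta_i/\sqrt n)$ deviation absorbed into $\xi$; your balance is between $\epsilon\sigma_i^2\beta_i$ and $C\rho\alpha_i^2/(\sigma_i^2 n)$. Both yield $\sqrt{C\rho\,\alpha_i^2\beta_i/n}$ per layer. Your route costs an extra concentration step (still covered by the same $\epsilon$-net machinery, and lower order as you verified), but it is arguably more faithful at the PAC-Bayes step: Theorem \ref{lemma_pac} as invoked in the paper assumes the loss lies in $[0,C]$, whereas $\bar\ell$ has range up to $C\rho$ and can be negative, an issue your range-$C\rho$ accounting at least partially addresses (a shift to restore nonnegativity introduces an additional additive $O(\epsilon\, C\rho)$ term that both treatments would need to acknowledge). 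The remaining discrepancies—your $\tfrac12$ in the second-order Taylor term versus the paper's convention, and dropping the $\epsilon$ factor from the noise-stability side of the balance—only affect absolute constants.
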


Theorem \ref{theorem_noisy} shows that combining consistent losses leads to a method that is provably robust under class conditional label noise.
Notice that this result uses the trace of the Hessian $\bH_i$ instead of the truncated Hessian $\bH_i^+$. The reweighing matrix $\Lambda$ might include negative coefficients, and $\bH_i^+$ deals with this problem.
Based on the theory, we design an algorithm to instantiate this idea.
In Section \ref{sec_exp}, we evaluate the robustness of Algorithm \ref{alg:reg_weighted_loss} for both image and text classification tasks.

\begin{algorithm}[!t]
	\caption{Statistically-consistent loss re-weighting with layer-wise projection}\label{alg:reg_weighted_loss}
	\begin{small}
		\textbf{Input}: Training dataset $\{(x_i, \tilde{y}_i)\}_{i=1}^{n}$ with input feature $x_i$ and noisy label $\tilde y_i$, for $i = 1,\dots, n$. \\
		\textbf{Require}: Initialization $f_{\hat W^{(s)}}$, layer-wise distance $\alpha_i$ for $i = 1,\dots,L$, number of epochs $T$, learning rate $\eta$, and confusion matrix $F$. \\
		\textbf{Output}: A trained model $f_{W^{(T)}}$.
		\begin{algorithmic}[1]
			\STATE Let $\Lambda = F^{-1}$.
			\STATE At $t = 0$, initialize model parameters with weight matrices $W^{(0)} = \hat{W}^{(s)}$.
			\WHILE{$0 < t < T$}
			\STATE Let $\bar{\cL}(f_{W^{(t-1)}})  = \frac 1 n \sum_{i=1}^n \sum_{c=1}^k \Lambda_{\tilde{y}_i, c} \cdot \ell(f_{W^{(t-1)}}(x_i), c)$.
			\STATE Update $W^{(t)} \leftarrow W^{(t-1)} - \eta \cdot \nabla \bar{\cL}(f_{W^{(t-1)}})$ using stochastic gradient descent.
			\STATE Project $W_i^{(t)}$ inside the constrained ball: $W_i^{(t)} \leftarrow \min\Big(1, \frac{\alpha_i}{\normFro{W_i^{(t)} - W_i^{(0)}}}\Big)\big(W_i^{(t)} - W_i^{(0)}\big) + W_i^{(0)}$, $\forall\, i=1,\dots,L$.
			\ENDWHILE
		\end{algorithmic}
	\end{small}
\end{algorithm}

\subsection{Proof overview}\label{sec_proof_overview}

We give an overview of the proof of Theorems \ref{theorem_hessian} and \ref{theorem_noisy}.
First, we show that the noise stability of an $L$-layer neural network $f_W$ admits a layer-wise Hessian approximation.
Let $U \sim \cQ$ be a random variable drawn from a posterior distribution  $\cQ$.
We are interested in the perturbed loss, $\ell_{\cQ}(f_U(x),y)$, which is the expectation of $\ell(f_{U}(x),y)$ over $U$.
\begin{lemma}\label{lemma_perturbation}
    In the setting of Theorem \ref{theorem_hessian}, for any $i=1, 2, \cdots, L$, let $U_i \in \real^{d_i d_{i-1}}$ be a random vector sampled from a Gaussian distribution with mean zero and variance $\Sigma_i$.
    Let the posterior distribution $\cQ$ be centered at $W_i$ and perturbed with an appropriately reshaped $U_i$ at every layer.
    Then, there exists a fixed value $C_1 > 0$ that does not grow with $n$, such that the following holds for any $x\in\cX$ and $y\in\set{1,\dots,k}$:
    \begin{align}
        \ell_\cQ(f_W(x),y) - \ell(f_W(x),y)
        \leq \sum_{i=1}^L \left( \big\langle \Sigma_i,\mathbf{H}_i[\ell(f_W(x),y)]\big\rangle + C_1\bignormFro{\Sigma_i}^{3/2} \right). \label{eq_taylor}
    \end{align}
\end{lemma}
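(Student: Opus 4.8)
The plan is to fix a sample $(x,y)$ and analyze the perturbed loss $\ell_\cQ(f_W(x),y)$ by a layer-by-layer hybrid argument, applying a second-order Taylor expansion of $\ell(f_W(x),y)$ in one layer's weights at a time. Concretely, I would define a sequence of hybrid weight configurations $W^{(0)},W^{(1)},\dots,W^{(L)}$, where $W^{(0)}=W$ and $W^{(j)}$ has layers $1,\dots,j$ perturbed by the reshaped $U_1,\dots,U_j$ and layers $j+1,\dots,L$ unperturbed. Writing the telescoping sum $\ell_\cQ(f_W(x),y)-\ell(f_W(x),y)=\sum_{j=1}^L \mathbb{E}\big[\ell(f_{W^{(j)}}(x),y)-\ell(f_{W^{(j-1)}}(x),y)\big]$, each term involves perturbing only layer $j$'s weights by $U_j$ on top of an already-perturbed base. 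For a single layer, Taylor's theorem with integral (Lagrange) remainder gives $\ell(f_{W'+U_j}(x),y)-\ell(f_{W'}(x),y)=\langle \nabla_{W_j}\ell, \mathrm{vec}(U_j)\rangle + \tfrac12 \mathrm{vec}(U_j)^\top \bH_j[\ell(f_{W'}(x),y)]\,\mathrm{vec}(U_j) + R_j$, where the remainder $R_j$ is controlled by the Lipschitz constant of the Hessian of $\ell\circ f$ in $W_j$ and scales like $\|U_j\|^3$. Taking expectation over the zero-mean Gaussian $U_j$ kills the linear term, turns the quadratic term into $\tfrac12\langle \Sigma_j, \bH_j[\ell(f_{W'}(x),y)]\rangle$, and bounds $\mathbb{E}\|U_j\|^3 \lesssim \|\Sigma_j\|_F^{3/2}$ (Gaussian moment bound), yielding the $C_1\|\Sigma_j\|_F^{3/2}$ term.

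The remaining gap is that the Hessian appearing in each hybrid term is $\bH_j$ evaluated at the partially-perturbed point $W^{(j-1)}$, not at the original $W$, and similarly the factor $\tfrac12$ needs to be absorbed. To close this I would invoke a perturbation bound on the loss Hessian itself: because the activation functions and the loss have Lipschitz-continuous first and second derivatives, the map $W \mapsto \bH_j[\ell(f_W(x),y)]$ is Lipschitz in a neighborhood, so $\|\bH_j[\ell(f_{W^{(j-1)}}(x),y)] - \bH_j[\ell(f_W(x),y)]\|_F \le C'\sum_{k<j}\|U_k\|$. Applying Cauchy–Schwarz, $\langle \Sigma_j, \bH_j[\ell(f_{W^{(j-1)}})] \rangle \le \langle \Sigma_j, \bH_j[\ell(f_W)]\rangle + C'\|\Sigma_j\|_F \sum_{k<j}\mathbb{E}\|U_k\|$, and the cross terms $\|\Sigma_j\|_F\|\Sigma_k\|_F^{1/2}$ can be folded into the $\sum_i \|\Sigma_i\|_F^{3/2}$ budget by AM–GM (up to adjusting $C_1$), since all these are lower-order compared to the leading inner-product terms. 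The factor $\tfrac12$ versus $1$ is harmless because $\langle\Sigma_i,\bH_i\rangle$ is the quantity being bounded from above and can only help; alternatively one states the bound with the $\tfrac12$ and notes it is absorbed.

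The main obstacle I anticipate is the Hessian perturbation step: establishing a clean, uniform (over $x \in \cX$) Lipschitz bound on $W \mapsto \bH_j[\ell(f_W(x),y)]$ requires propagating the Lipschitzness and smoothness assumptions on $\phi_i$ and $\ell$ through the chain rule for second derivatives of a deep composition, which produces many terms (products of Jacobians, Hessians of intermediate layers, etc.) and needs boundedness of the intermediate activations and their derivatives along the perturbation path. I would handle this by first arguing that on the compact-ish region traced out by the small perturbations the relevant quantities (layer outputs, Jacobians, layer Hessians) are uniformly bounded, then bounding the Lipschitz constant of $\bH_j$ in terms of these bounds and the assumed Lipschitz constants. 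This is the technically heavy "perturbation analysis of the loss Hessian" the introduction flags as possibly of independent interest; everything else (hybrid telescoping, Gaussian moments, Cauchy–Schwarz, AM–GM bookkeeping) is routine.
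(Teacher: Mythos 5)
Your proposal is correct in substance but takes a genuinely different route from the paper. The paper performs a \emph{single} joint second-order Taylor expansion of $\ell(f_{W+U}(x),y)$ around $W$ in all layers simultaneously (Claim \ref{lemma_taylor}): the linear term vanishes in expectation, the cross-layer blocks $\mathbb{E}[U_i^{\top}\,\partial^2\ell/\partial W_i\partial W_j\, U_j]$ vanish because the $U_i$ are independent and zero-mean, so the full Hessian quadratic form collapses directly to $\sum_i\langle\Sigma_i,\mathbf{H}_i\rangle$, and the remainder is controlled by the Lipschitzness of the loss Hessian (Proposition \ref{prop_hessian_lip}) together with a Gaussian third-moment bound. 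Your layer-by-layer hybrid telescoping avoids ever writing down the off-diagonal Hessian blocks, at the cost of having to transport each $\mathbf{H}_j$ from the partially perturbed point $W^{(j-1)}$ back to $W$; your observation that the resulting cross terms $\bignormFro{\Sigma_j}\bignormFro{\Sigma_k}^{1/2}$ fold into the $\sum_i\bignormFro{\Sigma_i}^{3/2}$ budget via Young's inequality is right, and both routes ultimately rest on the same heavy ingredient, namely the Hessian perturbation bound you correctly identify as the crux (this is exactly Proposition \ref{prop_hessian_lip}). If anything, your hybrid handles the remainder more transparently: in the paper's mean-value form the intermediate point $\eta$ depends on $U$, so the off-diagonal blocks of the remainder do not obviously vanish in expectation, an issue your one-layer-at-a-time expansion never encounters. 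One small caution: your claim that the factor $\tfrac12$ on the quadratic term ``can only help'' is not automatic, since $\langle\Sigma_i,\mathbf{H}_i\rangle$ may be negative and the discrepancy $\tfrac12|\langle\Sigma_i,\mathbf{H}_i\rangle|$ is of order $\bignormFro{\Sigma_i}$, not $\bignormFro{\Sigma_i}^{3/2}$; the clean fix is the alternative you mention, i.e., stating the lemma with the $\tfrac12$ and carrying it through (the paper's own statement silently drops this factor as well).
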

See Appendix \ref{proof_perturbation} for the proof of Lemma \ref{lemma_perturbation}.
Interestingly, we find that the Hessian estimate is remarkably accurate in practice.
We use ResNet-18 and ResNet-50, fine-tuned on the CIFAR-100 dataset.
We estimate the left-hand side of equation \eqref{eq_taylor} by randomly sampling $500$ isotropic Gaussian perturbations and averaging the perturbed losses.
Then, we compute the Hessian term on the right-hand side of equation \eqref{eq_taylor} by measuring the traces of the loss's Hessian at each layer of the network and average over the same data samples.
Table \ref{table:measure_noise_stability} shows the result.
The relative error of the Hessian estimate is within 3\% on ResNet.

\begin{table*}[t]
\centering
\caption{Showing the relative residual sum of squares (RSS) of the Hessian approximation \eqref{eq_taylor} under isotropic noise perturbation of variance $\sigma^2$. The noise stability results are estimated by the average of 500 noise injections. All the experiments are done with the CIFAR-100 dataset.}\label{table:measure_noise_stability}
\begin{small}
\begin{tabular}{@{}lcccc@{}}
\toprule
\multirow{2}{*}{$\sigma$} & \multicolumn{2}{c}{ResNet-18} &  \multicolumn{2}{c}{ResNet-50} \\ \cmidrule(l){2-3} \cmidrule(l){4-5} 
& Noise Stability  & Hessian Approx.                   & Noise Stability  & Hessian Approx.    \\ \midrule
0.01 &   $0.86	\pm 0.19$ &   1.07 &  $0.39 \pm 0.10$     &  1.24   \\
0.011 &  $1.13	\pm 0.24$ &  1.29 & $1.12	\pm 0.20$  &  1.50  \\
0.012 & $1.45	\pm 0.29$  &  1.54 & $1.56	\pm 0.26$  &  1.79   \\
0.013 &  $1.82	\pm 0.35$ &   1.80 &  $2.10	\pm 0.33$  & 2.10   \\
0.014 & $2.23	\pm 0.40$  &   2.09 &  $2.71	\pm 0.38$  &  2.43   \\
0.015 & $2.65	\pm 0.43$  &  2.34 & $3.30	\pm 0.40$  & 2.79    \\
0.016 & $3.07	\pm 0.45$  &  2.73 & $3.80	\pm 0.40$  & 3.18    \\
0.017 & $3.47	\pm 0.47$  &  3.08 & $4.17	\pm 0.40$  &  3.59   \\
0.018 &  $3.84\pm	0.49$ &  3.46 & $4.60	\pm 0.44$  & 4.02   \\
0.019 & $4.15\pm	0.51$  &  3.85 & $4.77	\pm 0.44$  &  4.48  \\
0.020 &  $4.43\pm	0.55$ &  4.27 & $5.03 \pm 0.82$    & 4.97   \\ \midrule
Relative RSS & \multicolumn{2}{c}{\textbf{0.75\%}} &  \multicolumn{2}{c}{\textbf{2.98\%}} \\\bottomrule
\end{tabular}
\end{small}
\end{table*}

Based on the Hessian approximation, consider applying a PAC-Bayes bound over the $L$-layer network $f_W$ (see Theorem \ref{lemma_pac} in Appendix \ref{sec_proofs} for a statement).
The KL divergence between the prior distribution and the posterior distribution is equal to
\begin{equation}
    \sum_{i=1}^L \biginner{\Sigma_i^{-1}}{v_i v_i^{\top}}. \label{eq_kl_sum}
\end{equation}
Minimizing the sum of the Hessian estimate and the above KL divergence \eqref{eq_kl_sum} in the PAC-Bayes bound will lead to a different covariance matrix for every layer, which depends on $v_i$ and $\ex{\bH_i^+}$:
\begin{align*}
    & \sum_{i=1}^L \Big(\inner{\Sigma_i}{\exarg{x,y}{\bH_i[\ell(f_W(x), y)]}} + \frac 1 n \inner{\Sigma_i^{-1}}{v_i v_i^{\top}}\Big) \\
    \le & \sum_{i=1}^L \Big( \inner{\Sigma_i}{\exarg{x,y}{\bH_i^+[\ell(f_W(x), y)]}} + \frac 1 n \inner{\Sigma_i^{-1}}{v_i v_i^{\top}} \Big),
\end{align*}
where the $1/n$ factor is inherited from the PAC-Bayes bound (cf. Theorem \ref{lemma_pac}).
Thus, minimizing the above over the covariance matrices leads to the Hessian distance-based generalization bound of Theorem \ref{theorem_hessian}. %
We note that this minimization yields a deterministic solution for the covariance matrix; this solution is fixed so that it does not change with the randomness of the training data.

The above sketch highlights the crux of the result.
The rigorous proof, on the other hand, is significantly more involved.
One technical challenge is arguing the uniform convergence to the population Hessian operator $\bH_i$ (over $\cD$) from the empirical loss Hessian.
This requires a perturbation analysis of the Hessian and assumes that the second-order derivatives of the activation functions (and the loss) are Lipschitz-continuous.

\begin{lemma}\label{lemma_union_bound}
    In the setting of Theorem \ref{theorem_hessian}, there exist  some fixed values $C_2,C_3$ that do not grow with $n$ and $1/\delta$, such that with probability at least $1- \delta$ over the randomness of the training set, we have
    \begin{align}
        \bignormFro{\frac 1 n \sum_{j=1}^n\bH_i[\ell(f_W(x_j), y_j)] - \exarg{(x,y)\sim\cD}{\bH_i[\ell(f_W(x), y)]}} \le \frac{C_2\sqrt{\log (C_3 n/\delta)}}{\sqrt n},
    \end{align}
    for any $i =1,\dots, L$, where $C_2 = 4\cH_i\sqrt{\sum_{i=1}^L d_i d_{i-1}}$, $C_3 = 5L\Big(\sqrt{\sum_{i=1}^L \normFro{W_i}^2}\Big)G/\cH_i$ and $G$ is a fixed value that depends on $\prod_{i=1}^L \norm{W_i}_2$, the Lipschitz-continuity of the activation functions, and the Lipschitz-continuity of their first-order and second-order derivatives (cf. equation \eqref{eq_G_constant} in Proposition \ref{prop_hessian_lip} for the precise value of $G$).
\end{lemma}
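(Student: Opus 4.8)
\textbf{Proof plan for Lemma \ref{lemma_union_bound}.}
The plan is to establish a uniform (over the weight configuration $W$) concentration bound for the empirical average of the per-sample loss Hessians around its population counterpart, and to do so by combining a pointwise matrix concentration inequality with an $\varepsilon$-net argument over the parameter space. Fix a layer index $i$. For a fixed weight matrix $W$, the quantity $\bH_i[\ell(f_W(x_j),y_j)]$ is a deterministic function of the i.i.d. sample $(x_j,y_j)$, and by the boundedness of the loss together with the Lipschitz-continuity assumptions on the activations and their first two derivatives, one can show these random matrices have Frobenius norm bounded by a fixed constant (this is exactly where the factor $\cH_i$ enters — it controls the relevant operator norms along the directions that matter). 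First I would invoke a vector/matrix Bernstein or Hoeffding inequality (treating the $d_id_{i-1}\times d_id_{i-1}$ symmetric matrix as a vector of dimension $\le \sum_i d_id_{i-1}$, hence the $\sqrt{\sum_i d_id_{i-1}}$ factor in $C_2$) to get, for each fixed $W$, a bound of the form $\normFro{\frac1n\sum_j \bH_i[\ell(f_W(x_j),y_j)] - \ex{\bH_i[\ell(f_W(x),y)]}} \le C_2'\sqrt{\log(1/\delta')}/\sqrt n$ with probability $1-\delta'$.

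Next I would upgrade this pointwise statement to a uniform one. The key ingredient is a Lipschitz bound on the map $W \mapsto \bH_i[\ell(f_W(x),y)]$ in Frobenius norm, uniformly over $x\in\cX$ and $y$; this is precisely the content of Proposition \ref{prop_hessian_lip}, whose Lipschitz constant is the quantity $G$ appearing in the statement, and which depends on $\prod_i \norm{W_i}_2$ and the Lipschitz moduli of $\phi_i, \phi_i', \phi_i''$ and of $\ell$ and its derivatives. Given such a $G$, I would build an $\varepsilon$-net $\cN$ over the ball of radius $\sqrt{\sum_i \normFro{W_i}^2}$ in parameter space; the covering number is $(3R/\varepsilon)^{\text{(total \# params)}}$, so taking $\varepsilon \approx \cH_i/(Gn)$ or so makes the net error $O(1/\sqrt n)$ while the $\log|\cN|$ term contributes only $\log(C_3 n/\delta)$ to the square-root, with $C_3 \sim L\sqrt{\sum_i\normFro{W_i}^2}\,G/\cH_i$ as claimed. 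A union bound over $\cN$ with failure probability $\delta/|\cN|$ at each net point, followed by the standard "approximate $W$ by the nearest net point and pay the Lipschitz error on both the empirical average and the population expectation" argument, yields the claimed bound for the specific $W=\hat W$ of interest (and in fact uniformly).

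The main obstacle I anticipate is establishing the Frobenius-norm Lipschitz bound for $W \mapsto \bH_i[\ell(f_W(x),y)]$ with an explicit, sample-independent constant $G$ — i.e., proving Proposition \ref{prop_hessian_lip}. The loss Hessian with respect to $W_i$ involves products of the Jacobians of the network from layer $i$ onward, the gradient and Hessian of $\ell$ at the output, and the activations $\phi_j$ and their first and second derivatives at each layer; differentiating this whole expression once more with respect to $W$ and bounding the result requires a careful layer-by-layer chain-rule bookkeeping, tracking how perturbations propagate through $L$ layers and controlling each factor by the spectral norms $\norm{W_j}_2$ and the Lipschitz constants of $\phi_j,\phi_j',\phi_j''$. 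This is conceptually routine but notationally heavy, and getting a clean closed form for $G$ (rather than an unwieldy recursive bound) is the delicate part; the rest of the argument — matrix concentration plus covering — is standard. A secondary subtlety is verifying that the per-sample Hessian Frobenius norm is bounded by (a constant multiple of) $\cH_i$ in the right sense, so that the constants $C_2, C_3$ come out as stated; this follows from the boundedness of $\ell$ on $\cX$ and the same Lipschitz/smoothness assumptions, but one must be slightly careful that the truncation $\bH_i^+$ versus $\bH_i$ does not affect the relevant norm bounds.
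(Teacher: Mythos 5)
Your proposal matches the paper's proof essentially step for step: a pointwise matrix Bernstein bound for each fixed $W$, a union bound over an $\varepsilon$-net of the parameter ball of radius $\sqrt{\sum_i\normFro{W_i}^2}$, and the transfer from net points to arbitrary $W$ via the Frobenius-norm Lipschitz continuity of $W\mapsto \bH_i[\ell(f_W(x),y)]$ established in Proposition \ref{prop_hessian_lip}, with the constants $C_2$ and $C_3$ arising exactly as you describe (the dimension factor from Bernstein and the $BG/\cH_i$ factor from the covering number). You also correctly identify that the real work is the explicit Lipschitz constant $G$ in Proposition \ref{prop_hessian_lip} and that the role of $\cH_i$ as the norm bound in the concentration step deserves care, so no gaps to flag.
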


See Appendix \ref{proof_union} for the proof of Lemma \ref{lemma_union_bound}, based on an $\epsilon$-net argument. The rest of the proof of Theorem \ref{theorem_hessian} can be found in Appendix \ref{sec_hessian_proof}.

The key idea for dealing with noisy labels is a consistency condition for the weighted loss $\bar\ell(f_W(x), \tilde y)$ with noisy label $\tilde y$:
{\[ \cL(f_W) = \mathop{\mathbb E}_{(x, y)\sim \cD}\exarg{\tilde y | y}{\bar{\ell}(f_W(x), \tilde y)}. \]}%
Thus, consider a noisy data distribution $\tilde D$, with the label $y$ of every $x$ flipped independently to $\tilde y$.
Our key observation is that $\bar{\ell}(f_W(x), \tilde y)$ enjoys similar noise stability properties as $\ell(f_W(x), y)$.
See Figure \ref{fig:heatmap} for an illustration.
A complete proof of Theorem \ref{theorem_noisy} can be found in Appendix \ref{sec_proof_noisy}.

\begin{figure*}[!t]
	\centering
	\begin{subfigure}[b]{0.24\textwidth}
		\centering
		\includegraphics[width=0.95\textwidth]{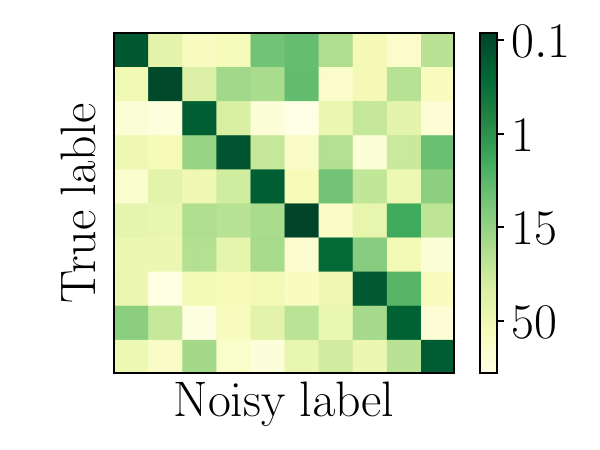}
		\captionsetup{justification = centering}
		\caption{ResNet-18 on \\ CIFAR-10}
	\end{subfigure}%
	\begin{subfigure}[b]{0.24\textwidth}
		\centering
		\includegraphics[width=0.95\textwidth]{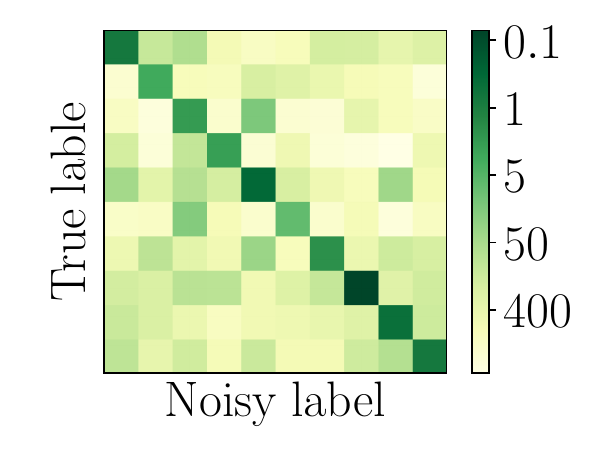}
		\captionsetup{justification = centering}
		\caption{ResNet-50 on \\ CIFAR-10}
	\end{subfigure}%
	\begin{subfigure}[b]{0.24\textwidth}
		\centering
		\includegraphics[width=0.95\textwidth]{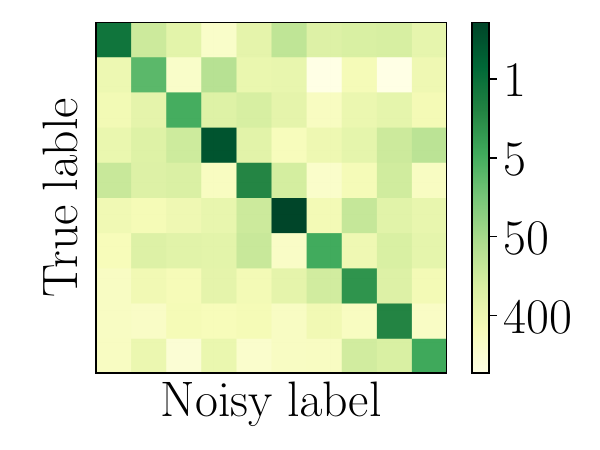}
		\captionsetup{justification = centering}
		\caption{ResNet-18 on \\ CIFAR-100}
	\end{subfigure}%
	\begin{subfigure}[b]{0.24\textwidth}
		\centering
		\includegraphics[width=0.95\textwidth]{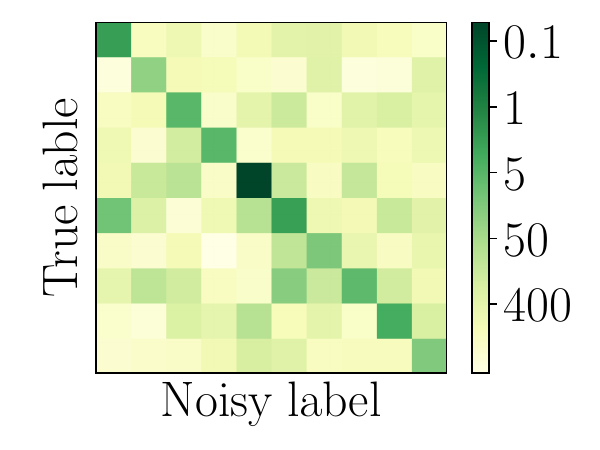}
		\captionsetup{justification = centering}
		\caption{ResNet-50 on \\ CIFAR-100}
	\end{subfigure}
	\caption{Illustrating the intuition behind Theorem \ref{theorem_noisy}: With consistent losses, the noisy stability of the loss Hessian with noisy labels equals the noise stability of the loss Hessian with class labels in expectation. Thus, the unbiased weighted losses enjoy similar noise stability properties as training with class labels. Further, the unbiased loss favors stable models as the trace of the Hessian is smallest when $\tilde y = y$.
	}\label{fig:heatmap}
\end{figure*}

\section{Experiments}\label{sec_exp}

This section conducts experiments to evaluate our proposed algorithm.
We evaluate the robustness of Algorithm \ref{alg:reg_weighted_loss} under various noisy environments and architectures.
First, we validate that when the training labels are corrupted with class conditional label noise, our approach is robust even under up to 60\% of label noise.
Second, on six weakly-supervised image classification tasks, our algorithm improves the prediction accuracy over prior approaches by 3.26\% on average.
Lastly, we present a detailed analysis that validates the theory, including comparing our generalization bound with previous results and comparing the Hessian distance measure for various fine-tuning methods.
The code repository for reproducing the experiments is at \url{https://github.com/VirtuosoResearch/Robust-fine-tuning}. 

\subsection{Experimental setup}\label{sec_exp_setup}

\noindent\textbf{Datasets.} 
We evaluate the robustness of our approach in image and text classification tasks. 
For image classification, we use six domains of object classification tasks from the DomainNet dataset.
We consider two kinds of label noise. First, we generate synthetic random noise by randomly flipping the labels of training samples uniformly with a given noise rate.
Second, we create labels using weak supervision approaches \cite{mazzetto2021adversarial}. %
The statistics of the six datasets are described in Table \ref{tab:dataset_statistics} from Section \ref{sec_add_exp_noisy}.
We refer the reader to the work of \citet{mazzetto2021adversarial} for more details.

For text classification, we use the MRPC dataset from the GLUE benchmark, which is to predict whether two sentences are semantically equivalent.

\paragraph{Models.} For fine-tuning from noisy labels, we use pretrained ResNet-18 and ResNet-101 models on image classification tasks and extend our results to Vision Transformer (ViT) model. We use the RoBERTa-Base model on text classification tasks.

\paragraph{Baselines.} %
We consider the following three kinds of baselines: 
(i) Regularization:  Fine-tuning with early stop (Early stopping), label smoothing, Mixup \cite{wu2020generalization}, and Early-Learning Regularization (ELR) \cite{liu2020early};
(iii) Self-training:  FixMatch \cite{sohn2020fixmatch}; %
(ii) Robust loss function: Active Passive Loss (APL) \cite{ma2020normalized}, dual transition estimator (DualT) \cite{yao2020dual}, Supervised Contrastive learning (SupCon), and Sharpness-Aware Minimization (SAM) \cite{foret2020sharpness}.

To implement our algorithm, we use the confusion matrix $F$ estimated by the method from the work of \citet{yao2020dual}. 
We use layer-wise distances \cite{li2021improved} for applying the regularization constraints in Equation \eqref{eq_constraint}. 
We describe the implementation details and hyper-parameters in Section \ref{sec_add_exp_noisy}.

\subsection{Experimental results}\label{sec_exp_noisy}

\begin{table*}[!t]
\centering
\caption{Test accuracy of fine-tuning with noisy labels. We consider two settings: one where the training labels are independently flipped to incorrect labels and another where the training labels are created using programmatic labeling approaches. The reported results are averaged over ten random seeds.}\label{tab:syn_and_ws_noise}
\begin{scriptsize}
\begin{tabular}{@{}lccccccc@{}}
\toprule
\multirow{2}{*}{Independent label noise}  & \multicolumn{2}{c}{Sketch, ResNet-18} & \multicolumn{2}{c}{Sketch, ResNet-101}
  & \multicolumn{2}{c}{MRPC, RoBERTa-Base}\\ \cmidrule(l){2-7} 
  & 40\% & 60\% & 40\%  & 60\%  & 20\% & 40\%   \\ \midrule
Early Stopping                      & 72.41$\pm$3.53 & 53.84$\pm$3.09 & 77.14$\pm$3.09 & 61.39$\pm$1.28 & 81.39$\pm$0.73 & 66.05$\pm$0.63  \\
Label Smooth  & 74.69$\pm$1.97 & 55.35$\pm$1.60 & 81.47$\pm$1.36 & 64.90$\pm$2.93 & 80.00$\pm$0.62 & 65.87$\pm$0.95  \\
Mixup          & 70.65$\pm$1.85 & 58.49$\pm$3.25 & 76.04$\pm$2.29 & 60.12$\pm$2.37 & 80.62$\pm$0.14 & 68.37$\pm$1.33  \\
FixMatch \cite{sohn2020fixmatch}    & 73.35$\pm$3.15 & 61.51$\pm$2.17 & 76.19$\pm$1.39 & 62.19$\pm$1.57 & 81.10$\pm$1.76 & 68.48$\pm$1.43  \\
ELR \cite{liu2020early}             & 74.29$\pm$2.52 & 63.14$\pm$2.05 & 80.00$\pm$1.45 & 64.35$\pm$3.98 & 82.78$\pm$0.86 & 67.86$\pm$1.44  \\
APL \cite{ma2020normalized}         & 75.63$\pm$1.81 & 64.69$\pm$2.72 & 78.69$\pm$2.45 & 64.82$\pm$2.41 & 80.49$\pm$0.24 & 66.49$\pm$0.93  \\
DualT \cite{yao2020dual}            & 72.49$\pm$3.17 & 59.59$\pm$3.44 & 77.96$\pm$0.33 & 62.31$\pm$3.98 & 82.49$\pm$0.53 & 66.49$\pm$0.93  \\
SupCon    & 75.14$\pm$1.73 & 61.06$\pm$3.20 & 78.86$\pm$1.80 & 63.92$\pm$2.15 & 82.30$\pm$1.80 & 68.32$\pm$1.16  \\
\midrule
\textbf{Ours (Alg. \ref{alg:reg_weighted_loss})}                                & \textbf{81.96$\pm$0.98} & \textbf{70.00$\pm$1.71} & \textbf{85.44$\pm$1.26} & \textbf{71.84$\pm$2.72} & \textbf{83.55$\pm$0.63} & \textbf{72.64$\pm$1.77}  \\ 
\midrule
\midrule
\multirow{3}{*}{Correlated label noise}  & \multicolumn{6}{c}{DomainNet, ResNet-18}  \\ \cmidrule(l){2-7} 
  & Clipart  & Infograph & Painting  &  Quickdraw & Real  & Sketch  \\
  & 41.47\% & 63.29\% & 44.50\% & 60.54\% & 34.64\% & 47.68\%  \\
  \midrule
Early Stopping & 73.88$\pm$2.04 & 38.82$\pm$2.59 & 69.69$\pm$1.35 & 44.16$\pm$1.92 & 78.52$\pm$1.03 & 61.84$\pm$3.67 \\
Label Smooth &  74.56$\pm$2.30 & 38.40$\pm$2.67 & 70.76$\pm$1.74 & 46.50$\pm$3.03 & 81.39$\pm$0.93 & 62.29$\pm$2.48 \\
Mixup  & 72.88$\pm$0.94 & 39.27$\pm$3.10 & 69.28$\pm$3.18 & 47.66$\pm$3.20 & 80.17$\pm$2.05 & 62.08$\pm$3.06 \\
FixMatch \cite{sohn2020fixmatch} & 77.04$\pm$2.52 & 41.95$\pm$1.52 & 73.31$\pm$2.10 & 48.74$\pm$2.08 & 86.33$\pm$2.54 & 64.61$\pm$3.28  \\
ELR \cite{liu2020early} & 76.08$\pm$2.03 & 40.14$\pm$2.74 & 72.06$\pm$1.73 & 47.40$\pm$3.09 & 83.64$\pm$2.09 & 65.76$\pm$3.19  \\
APL \cite{ma2020normalized} & 77.40$\pm$2.33 & 41.22$\pm$2.58 & 73.61$\pm$3.12 & 49.88$\pm$3.24 & 85.79$\pm$1.59 & 64.69$\pm$2.30 \\
DualT \cite{yao2020dual} & 75.24$\pm$2.02 & 38.75$\pm$2.12 & 70.27$\pm$2.24 & 46.62$\pm$3.16 & 83.33$\pm$3.01 & 65.47$\pm$1.91 \\
SupCon  & 76.56$\pm$3.53 & 40.38$\pm$1.94 & 72.51$\pm$2.45 & 49.20$\pm$2.63 & 81.87$\pm$0.84 & 65.67$\pm$2.90 \\
\midrule
\textbf{Ours (Alg. \ref{alg:reg_weighted_loss})}    & \textbf{83.28$\pm$1.64} & \textbf{43.38$\pm$2.45} & \textbf{76.32$\pm$1.08} & \textbf{50.32$\pm$2.74} & \textbf{92.36$\pm$0.78} & \textbf{66.86$\pm$3.29} \\ \bottomrule
\end{tabular}
\end{scriptsize}
\end{table*}

\noindent\textbf{Improved robustness against independent label noise.} 
Our result in Section \ref{sec_unbiased_loss} illustrates why our algorithm might be expected to achieve provable robustness under class conditional label noise. 
To validate its effectiveness, we apply our algorithm to image and text classification tasks with different levels of synthetic random labels. Table \ref{tab:syn_and_ws_noise} reports the test accuracy. 
\begin{itemize}
\item First, we fine-tune ResNet-18 on the Sketch domain from the DomainNet dataset with 40\% and 60\% synthetic noise. Our algorithm improves upon the baseline methods by  \textbf{6.43\%} on average. 
The results corroborate our theory that provides generalization guarantees for our algorithm.  
\item Second, to evaluate our algorithm with deeper models, we apply ResNet-101 on the same label noise settings. We observe a higher average performance with ResNet-101, and our algorithm achieves a similar performance boost of \textbf{6.03\%} on average over the previous methods.
\item Third, we fine-tune the RoBERTa-Base model on the MRPC dataset with 20\% and 40\% synthetic noise. Our algorithm outperforms previous methods by \textbf{2.91\%} on average. These results show that our algorithm is robust against label noise across various tasks and architectures.
\end{itemize}

\paragraph{Improved robustness against correlated label noise.} 
Next, we show that our algorithm can achieve competitive performance under real-world label noise.
We evaluate our algorithm on weakly-supervised image classification tasks.
We fine-tune ResNet-18 on the six domains from the DomainNet dataset and report the test accuracy in Table \ref{tab:syn_and_ws_noise}.
We find that Algorithm \ref{alg:reg_weighted_loss} outperforms previous methods by \textbf{3.26\%}, averaged over the six tasks.

We expect our results under correlated label noise to hold with various model architectures. 
As an extension of the above experiment, we fine-tune vision transformer models on the same set of tasks.
Table \ref{tab:vit_results} reports the results on the Clipart and Sketch domains.
We notice that using ViT as the base model boosts the performance across all approaches significantly (e.g., over 4\% for early stopping).
Our approach outperforms baseline methods by \textbf{2.42\%} on average.

\begin{table*}[!t]
\centering
\captionof{table}{Test accuracy of fine-tuning vision transformers with correlated noisy labels on Clipart and Sketch domains from the DomainNet dataset. Results are averaged over ten random seeds.}\label{tab:vit_results}
\begin{scriptsize}
\begin{tabular}{@{}lcccccc@{}}
\toprule
\multirow{2}{*}{Correlated label noise}  & \multicolumn{2}{c}{DomainNet, ViT-Base} \\ \cmidrule(l){2-3} 
 & Clipart   & Sketch \\ \midrule
Early Stopping & 80.48$\pm$2.65 & 66.29$\pm$3.82 \\
Label Smooth  &  80.00$\pm$3.23 & 67.02$\pm$1.38  \\
Mixup    & 81.44$\pm$2.33 & 69.06$\pm$1.05  \\
FixMatch \cite{sohn2020fixmatch} & 81.92$\pm$1.35 & 68.08$\pm$2.52  \\
ELR  \cite{liu2020early}    & 82.48$\pm$2.32 & 68.33$\pm$3.27  \\
APL  \cite{ma2020normalized}    & 79.12$\pm$3.75 & 68.33$\pm$2.39  \\
DualT \cite{yao2020dual}   & 82.24$\pm$3.55 & 67.92$\pm$2.97 \\
SupCon      & 82.64$\pm$2.80 & 65.22$\pm$2.18  \\
\midrule
\textbf{Ours (Alg. \ref{alg:reg_weighted_loss})} & \textbf{84.40$\pm$1.94} & \textbf{71.02$\pm$1.81} \\ \bottomrule
\end{tabular}
\end{scriptsize}
\end{table*}

\paragraph{Ablation study.} We study the influence of each component of our algorithm: the weighted loss scheme and the regularization constraints.
We ablate the effect of our weighted loss and regularization constraints, showing that both are crucial to the final performance.
We run the same experiments with only one component from the algorithm. As shown in Table \ref{tab:ablation_study}, removing either component negatively affects performance.
This suggests that the weighted loss and the regularization constraints are important in the final performance.

\begin{table*}[h]
\centering
\caption{Removing either component in Algorithm \ref{alg:reg_weighted_loss} degrades the test accuracy. Results are averaged over ten random seeds.}\label{tab:ablation_study}
\begin{scriptsize}
\begin{tabular}{@{}lcccccc@{}}
\toprule
\multirow{2}{*}{Correlated label noise}  & \multicolumn{6}{c}{DomainNet, ResNet-18} \\ \cmidrule(l){2-7} 
  & Clipart  & Infograph  & Painting  &  Quickdraw & Real  & Sketch \\ \midrule
Using only loss reweighting & 77.96$\pm$1.57 & 41.78$\pm$2.86 & 72.13$\pm$1.66 & 49.10$\pm$2.42 & 85.34$\pm$1.51 & 64.78$\pm$1.60 \\
Using only regularization &  82.84$\pm$1.11 & 42.34$\pm$1.92 & 75.81$\pm$2.07 & 49.76$\pm$2.86 & 91.67$\pm$0.85 & 65.96$\pm$2.73 \\
\textbf{Ours (Alg. \ref{alg:reg_weighted_loss})} & \textbf{83.28$\pm$1.64} & \textbf{43.38$\pm$2.45} & \textbf{76.32$\pm$1.08} & \textbf{50.32$\pm$2.74} & \textbf{92.36$\pm$0.78} & \textbf{66.86$\pm$3.29} \\ \bottomrule
\end{tabular}
\end{scriptsize}
\end{table*}

\subsection{Further numerical results}\label{sec:exp_analysis}

\paragraph{Comparison of Hessian-based distance measures.} 
The design of our algorithm is motivated by our Hessian-based generalization guarantees. 
We hypothesize that our algorithm can effectively reduce the Hessian distance measure $\sum_{i=1}^L\sqrt{\cH_i}$  (cf. Equation \ref{eq_main}) of fine-tuned models. 
We compare the Hessian quantity of models fine-tuned by different algorithms. We select five baseline methods to compare with our algorithm and expect similar results for comparison with other baselines.
Figure \ref{fig:reduced_hessians} shows the results of ResNet-18 models fine-tuned on the Clipart dataset with weak supervision noise. Our algorithm reduces the quantity by six times more than previous fine-tuning algorithms.

\paragraph{Comparison of generalization bounds.} 
Our result suggests that Hessian-based generalization bounds correlate with the empirical performance of fine-tuning.
Next, we compare our result with previous generalization bounds, including norm bounds \cite{long2020generalization,gouk2020distance,li2021improved}, and margin bounds \cite{neyshabur2017pac,pitas2017pac,arora2018stronger}.
See Appendix \ref{sec_add_setup} for references on the measurement quantities.
For our result, we numerically calculate ${\sum_{i=1}^L \sqrt{C\cdot \cH_i}/{\sqrt n}} + \xi$ from equation \ref{eq_main}.
We compute $\cH_i$ using Hessian vector product functions in PyHessian \cite{yao2020pyhessian}. We take the maximum over the training and test dataset.
Following prior works, we evaluate the results using the CIFAR-10 and CIFAR-100 data sets.
We use ResNet-50 as the base model for fine-tuning.
Additionally, we evaluate our results on text classification tasks, including MRPC and SST-2 from the GLUE benchmark.
We use BERT-Base-Uncased as the base model for fine-tuning.
Table \ref{tab:bound_measurement} shows the result. 
We find that our bound is orders of magnitude smaller than previous results.
These results show that our theory leads to practical bounds for real-world settings.

\begin{figure*}[!t]
    \begin{minipage}[t]{0.42\textwidth}
        \caption{Comparison of the Hessian-based distance measure (cf. Eq. \eqref{eq_main}) between existing and our fine-tuning approaches.}
        \label{fig:reduced_hessians}
        \centering
        \includegraphics[width=0.85\textwidth]{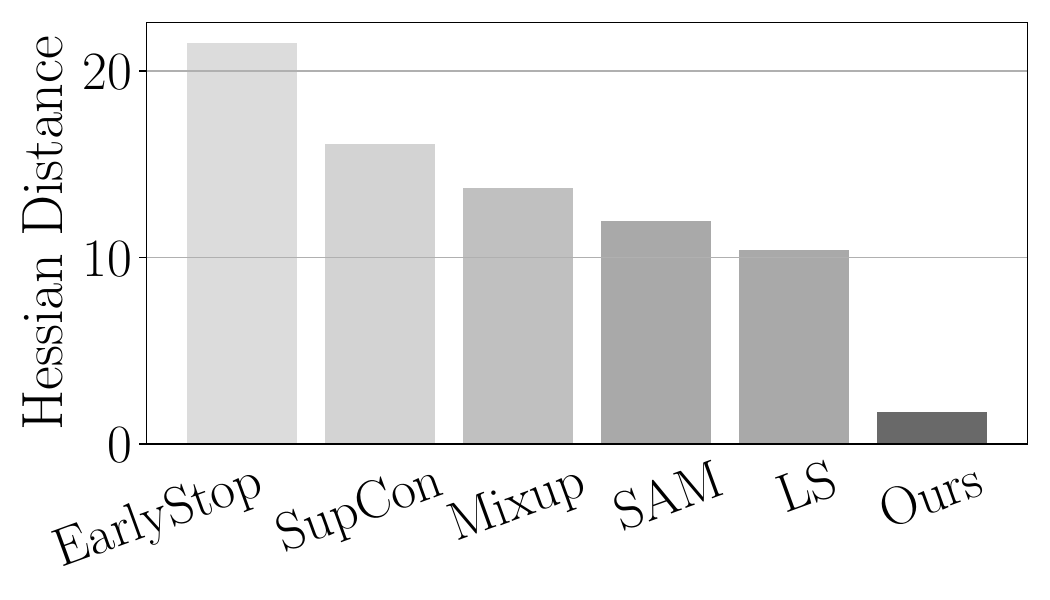}
    \end{minipage}\hfill
    \begin{minipage}[t]{0.56\textwidth}
       \captionof{table}{Numerical comparison between our and known generalization bounds measured with ResNet-50 and BERT-Base as the base model.}\label{tab:bound_measurement}
        \vspace{0.05in}
        \centering
        {\scriptsize\begin{tabular}{@{}lcccc@{}}
        \toprule
        {Generalization bound} & CIFAR-10 & CIFAR-100 & MRPC & SST2 \\
        \midrule
        \citet{pitas2017pac} & 5.51E+10 & 3.13E+12 & / & / \\
        \citet{arora2018stronger} & 1.62E+06 &	9.66E+07 & / & / \\
        \citet{long2020generalization} & 6.30E+13 & 8.32E+13 & / & / \\
        \citet{gouk2020distance} & 2.04E+69 & 2.72E+69 & / & / \\
        \citet{li2021improved} & 1.63E+27 & 1.31E+29 & / & / \\
        \citet{neyshabur2017pac} & 3.13E+11 & 1.62E+13 & / & / \\
        \textbf{Our result} & \textbf{2.26} & \textbf{7.23} & \textbf{3.83} & \textbf{9.71} \\
        \bottomrule
        \end{tabular}}
    \end{minipage}
\end{figure*}

\section{Conclusion}\label{sec_conclude}

This work approached generalization for fine-tuning deep networks using a PAC-Bayesian analysis.
The analysis shows that besides distance from initialization, Hessian plays a crucial role in measuring generalization.
Our theoretical contribution involves Hessian distance-based generalization bounds for various fine-tuning methods on deep networks.
We empirically show that the Hessian distance measure accurately correlates with the generalization error of fine-tuning in practice.
Although we state our results for fine-tuned models, it is worth mentioning that the same statement holds in the supervised learning setting.
In this case, the distance vector $v_i$ in statement \eqref{eq_main} corresponds to the difference between the random initialization of the network and the final trained weight matrix.
Besides, our theoretical result suggests a principled algorithm for fine-tuning with noisy labels.
Experiments demonstrate its robustness under synthetic and real-world label noise.

We describe two questions for future work.
First, our result requires the activation mappings of the neural network to be twice-differentiable. It would be interesting to extend our result to non-smooth activation functions such as ReLU networks (with a loss function such as the cross-entropy loss). For example, in the practical usage of ReLU activation, it is conceivable that non-differentiable states rarely occur. One way is to use smoothed analysis and argue that such non-differentiable states occur with negligible probability.
Second, it would be interesting to apply Hessian to understand generalization in other black-box models.
In a follow-up paper, we apply our techniques to improve the state-of-the-art generalization bounds for graph neural networks \cite{ju2023generalization}.
More broadly, we hope our work inspires further study on the interplay between second-order gradients and generalization.

\section*{Acknowledgment}

Thanks to the anonymous referees for their constructive feedback.
Thanks to Rajmohan Rajaraman, Jason Lee, Aaron Sidford, and Zhiyuan Li for the helpful discussions.
Thanks to Pratik Chaudhari for pointing out several references from the earlier literature.
H. J. and D. L. acknowledge the support of a startup fund and a seed/proof-of-concept grant from the Khoury College of Computer Sciences, Northeastern University.

\begin{refcontext}[sorting=nyt]
\printbibliography
\end{refcontext}

\appendix
\section{Proofs}\label{sec_proofs}

We will analyze the generalization performance of fine-tuning using PAC-Bayesian analysis.
One chooses a \textit{prior} distribution, denoted as $\cP$, and a \textit{posterior} distribution, denoted as $\cQ$.
The \textit{perturbed} empirical loss of $f_W$, denoted by $\hat{\cL}_{\cQ}(f_W)$, is the average of  $\ell_{\cQ}(f_U(x), y)$ over the training dataset.
The \textit{perturbed} expected loss of $f_W$, denoted by ${\cL}_{\cQ}(f_W)$, is the expectation of ${\ell_{\cQ}(f_W(x), y)}$, for a sample $x$ drawn from $\cD$ with class label $y$.
We will use the following PAC-Bayes bound from Theorem 2, \citet{mcallester2013pac} (see also Theorem 2.1, \citet{dziugaite2021role}).

\begin{theorem}\label{lemma_pac}
    Suppose the loss function $\ell(x,y)$ lies in a bounded range $[0, C]$ given any $x\in\cX$ with label $y$.
    For any $\beta\in (0,1)$ and $\delta\in (0,1]$, with probability at least $1-\delta$, the following holds
    \begin{equation}
        \cL_{\cQ}(f_W)\leq \frac{1}{\beta}\hat{\cL}_{\cQ}(f_W) + \frac{C  \Big(KL(\cQ||\cP)+\log\frac{1}{\delta}\Big)}{2\beta(1-\beta)n}. \label{eq_pac}
    \end{equation}
\end{theorem}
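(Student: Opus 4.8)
\section*{Proposal for the proof of Theorem \ref{lemma_pac}}

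Theorem \ref{lemma_pac} is the classical change-of-measure PAC--Bayes inequality in its multiplicative (``fast-rate-flavored'') form, so the plan is to reproduce the standard argument of \citet{mcallester2013pac}. Throughout, the prior $\cP$ is fixed in advance of the sample, the posterior $\cQ$ may depend on the sample, and I rescale the loss to $\ell/C\in[0,1]$. Recall that $\ell_{\cQ}(f_W(x),y)$ is the $\cQ$-average of $\ell(f_U(x),y)$, so by Fubini $\hat{\cL}_{\cQ}(f_W)=\mathbb{E}_{U\sim\cQ}\big[\tfrac1n\sum_j\ell(f_U(x_j),y_j)\big]$ and $\cL_{\cQ}(f_W)=\mathbb{E}_{U\sim\cQ}\big[\mathbb{E}_{(x,y)\sim\cD}\ell(f_U(x),y)\big]$.

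The first step is a one-sided exponential-moment bound for a single hypothesis. Fix any weights $U$ and set $X_j=\ell(f_U(x_j),y_j)/C\in[0,1]$, i.i.d.\ with mean $\mu(U)=\mathbb{E}_{(x,y)}[\ell(f_U(x),y)]/C$ and empirical mean $\hat\mu(U)=\frac1n\sum_j X_j$. Convexity of $t\mapsto e^{-\lambda t}$ on $[0,1]$ gives $e^{-\lambda X_j}\le 1-(1-e^{-\lambda})X_j\le \exp\!\big(-(1-e^{-\lambda})X_j\big)$, hence $\mathbb{E}\big[e^{-\lambda\sum_j X_j}\big]\le \exp\!\big(-n(1-e^{-\lambda})\mu(U)\big)$; equivalently, with $\Phi(U)=n\big((1-e^{-\lambda})\mu(U)-\lambda\hat\mu(U)\big)$, we get $\mathbb{E}_{\mathrm{sample}}\big[e^{\Phi(U)}\big]\le 1$ for every fixed $U$.

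Next I would combine this over the prior and invoke Markov's inequality. By Fubini, $\mathbb{E}_{\mathrm{sample}}\,\mathbb{E}_{U\sim\cP}\big[e^{\Phi(U)}\big]=\mathbb{E}_{U\sim\cP}\,\mathbb{E}_{\mathrm{sample}}\big[e^{\Phi(U)}\big]\le 1$, so with probability at least $1-\delta$ over the sample, $\mathbb{E}_{U\sim\cP}[e^{\Phi(U)}]\le 1/\delta$. On that event, the Donsker--Varadhan / Gibbs variational inequality yields, for \emph{any} posterior $\cQ$, $\mathbb{E}_{U\sim\cQ}[\Phi(U)]\le \mathrm{KL}(\cQ\|\cP)+\log\mathbb{E}_{U\sim\cP}[e^{\Phi(U)}]\le \mathrm{KL}(\cQ\|\cP)+\log\tfrac1\delta$. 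Since $\mathbb{E}_{U\sim\cQ}[\mu(U)]=\cL_{\cQ}(f_W)/C$ and $\mathbb{E}_{U\sim\cQ}[\hat\mu(U)]=\hat{\cL}_{\cQ}(f_W)/C$, this rearranges to $(1-e^{-\lambda})\cL_{\cQ}(f_W)\le \lambda\hat{\cL}_{\cQ}(f_W)+\frac{C(\mathrm{KL}(\cQ\|\cP)+\log\frac1\delta)}{n}$. Finally, the elementary inequality $1-e^{-\lambda}\ge \lambda-\lambda^2/2=\lambda(1-\lambda/2)$ for $\lambda\in(0,2)$, together with the substitution $\beta=1-\lambda/2\in(0,1)$, turns $\tfrac{\lambda}{1-e^{-\lambda}}$ into $\le\tfrac1\beta$ and $\tfrac{1}{n(1-e^{-\lambda})}$ into $\le\tfrac{1}{2\beta(1-\beta)n}$, which is exactly \eqref{eq_pac}.

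There is no real obstacle here, as this is a textbook bound; the only points requiring care are (i) using the exponential-moment estimate $e^{-\lambda t}\le 1-(1-e^{-\lambda})t$ rather than Hoeffding's lemma, so that the resulting bound is multiplicative in $\hat{\cL}_{\cQ}$ rather than carrying an additive $\sqrt{\cdot}$ term, and (ii) the cosmetic reparametrization $\beta=1-\lambda/2$ that produces the stated $\tfrac1\beta$ and $\tfrac{1}{2\beta(1-\beta)}$ constants. Alternatively, one may simply cite \citet{mcallester2013pac}, Theorem 2, verbatim.
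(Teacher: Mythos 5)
Your proposal is correct: the paper does not prove this theorem at all but imports it verbatim as Theorem 2 of McAllester (2013), which is precisely the fallback you offer in your last sentence. Your reconstruction of the standard argument --- the exponential-moment bound $\mathbb{E}[e^{-\lambda X}]\le e^{-(1-e^{-\lambda})\mathbb{E}[X]}$ for $X\in[0,1]$, Fubini over the prior, Markov, Donsker--Varadhan, and the reparametrization $\beta=1-\lambda/2$ with $1-e^{-\lambda}\ge\lambda(1-\lambda/2)$ --- checks out and yields exactly the stated constants $\tfrac{1}{\beta}$ and $\tfrac{1}{2\beta(1-\beta)n}$.
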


This result provides flexibility in setting $\beta$.
Our results will set $\beta$ to balance the perturbation error of $\cQ$ and the KL divergence between $\cP$ and $\cQ$.
We will need the KL divergence between the prior $\cP$ and the posterior $\cQ$ in the PAC-Bayesian analysis. This is stated in the following result.

\begin{proposition}\label{prop_kl}
    Suppose the noise perturbation at layer $i$ is drawn from a Gaussian distribution with mean zero and covariance $\Sigma_i$, for every $i=1,\dots, L$.
    Then, the KL divergence between $\cP$ and $\cQ$ is equal to
    \begin{align*}
        KL(\cQ||\cP) = \frac{1}{2}\sum_{i=1}^L\vect{W_i-\hat{W}_i^{(s)}}^\top\Sigma_i^{-1}\vect{W_i - \hat{W}_i^{(s)}}.
    \end{align*}
    As a corollary, if the population covariance of the noise distribution at every layer is isotropic, i.e., $\Sigma_i = \sigma_i^2\id$, for any $i = 1,\dots, L$, then the KL divergence between $\cP$ and $\cQ$ is equal to
    \begin{align*}
        KL(\cQ||\cP) = \sum_{i=1}^L\frac{\|W_i-\hat{W}_i^{(s)}\|_F^2}{2\sigma^2_i}.
    \end{align*}
\end{proposition}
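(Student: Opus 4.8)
The plan is to reduce the computation to the standard closed form for the Kullback--Leibler divergence between two multivariate Gaussians that share the same covariance. By construction, the prior $\cP$ perturbs each layer's weights independently by Gaussian noise centered at the pretrained weights, so $\cP = \bigotimes_{i=1}^L \mathcal{N}\big(\vect{\hat{W}_i^{(s)}}, \Sigma_i\big)$; likewise the posterior $\cQ$ perturbs each layer independently by the same per-layer Gaussian noise but centered at the fine-tuned weights, $\cQ = \bigotimes_{i=1}^L \mathcal{N}\big(\vect{W_i}, \Sigma_i\big)$. Since KL divergence is additive across independent blocks, $KL(\cQ||\cP) = \sum_{i=1}^L KL\big(\mathcal{N}(\vect{W_i}, \Sigma_i) \,||\, \mathcal{N}(\vect{\hat{W}_i^{(s)}}, \Sigma_i)\big)$, and it remains to evaluate each summand.

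For the per-layer term I would use (or quickly rederive) the identity $KL\big(\mathcal{N}(\mu_1, \Sigma)\,||\,\mathcal{N}(\mu_2, \Sigma)\big) = \tfrac12 (\mu_1-\mu_2)^\top \Sigma^{-1}(\mu_1-\mu_2)$. Starting from the general two-Gaussian formula $\tfrac12\big(\tr(\Sigma_2^{-1}\Sigma_1) - d + \log\tfrac{\det\Sigma_2}{\det\Sigma_1} + (\mu_1-\mu_2)^\top\Sigma_2^{-1}(\mu_1-\mu_2)\big)$, setting $\Sigma_1 = \Sigma_2 = \Sigma$ makes the trace term equal $\tr(\id) = d$, which cancels the $-d$, and makes the log-determinant ratio vanish, leaving only the Mahalanobis term. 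Alternatively, one can compute $\mathbb{E}_{U\sim\cQ}\big[\log\tfrac{d\cQ}{d\cP}\big]$ directly from the Gaussian densities: the normalizing constants cancel because the covariances agree, and the difference of the quadratic forms in the exponents integrates to exactly $\tfrac12(\mu_1-\mu_2)^\top\Sigma^{-1}(\mu_1-\mu_2)$. Substituting $\mu_1 - \mu_2 = \vect{W_i - \hat{W}_i^{(s)}}$ into each summand yields the stated formula for $KL(\cQ||\cP)$.

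The corollary then follows by plugging $\Sigma_i = \sigma_i^2\,\id$ into this expression and using $\vect{X}^\top\vect{X} = \normFro{X}^2$, so that the $i$-th term becomes $\frac{1}{2\sigma_i^2}\normFro{W_i - \hat{W}_i^{(s)}}^2$. There is no substantive obstacle in this argument; the only point needing a word of care is the additivity step, which relies on the prior and posterior having the same block-diagonal covariance across layers with the layers mutually independent, so that the joint KL divergence splits exactly into the sum of per-layer contributions.
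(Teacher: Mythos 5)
Your proposal is correct and matches the paper's argument in substance: the paper also factorizes $\cP$ and $\cQ$ across layers and evaluates $\mathbb{E}_{Z\sim\cQ}[\log\cQ(Z)-\log\cP(Z)]$ directly from the Gaussian densities with equal covariances, which is exactly your alternative derivation (and the standard same-covariance KL identity you quote is just the packaged form of that computation). The isotropic corollary via $\vect{X}^\top\vect{X}=\normFro{X}^2$ is likewise the same.
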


\begin{proof}
    The proof is based on the definition of multivariate normal distributions.
    Let $Z_i$ be the weight matrix of every layer $i$ in the posterior distribution, for $i$ from $1$ to $L$.
    By the definition of the KL divergence between two distributions, we have 
    \begin{align*}
        & KL(\cQ||\cP)
        = \exarg{Z\sim \cQ}{\log\Big(\frac{\cQ(Z)}{\cP(Z)}\Big)} = \exarg{Z\sim\cQ}{\log \cQ(Z) - \log \cP(Z)} \\
        =& \exarg{Z\sim \cQ}{\sum_{i=1}^L-\frac{1}{2}\vect{Z_i - W_i}^\top\Sigma_i^{-1}\vect{Z_i-W_i} + \frac{1}{2}\vect{Z_i-\hat{W}_i^{(s)}}^\top\Sigma_i^{-1} \vect{Z_i-\hat{W}_i^{(s)}} } \\
        =& -\frac 1 2\exarg{Z\sim \cQ}{\sum_{i=1}^L \bigtr{\vect{Z_i-W_i}\vect{Z_i-W_i}^\top\Sigma_i^{-1}} - \bigtr{ \vect{Z_i-\hat{W}_i^{(s)}}\vect{Z_i-\hat{W}_i^{(s)}}^\top\Sigma_i^{-1}}}.
    \end{align*}
    In the above equation, we recall that the expectation of $Z_i$ is $W_i$.
    Additionally, the population covariance of $Z_i$ is $\Sigma_i$.
    Therefore, after canceling out common terms, we get
    \begin{align*}
        KL(\cQ || \cP) = \sum_{i=1}^L \frac{1}{2} \vect{W_i-\hat{W}_i^{(s)}}^\top\Sigma_i^{-1} \vect{W_i-\hat{W}_i^{(s)}}.
    \end{align*}
    The proof is complete.
\end{proof}

As stated in Theorem \ref{theorem_hessian}, we require the activation functions $\phi_i(\cdot)$ (for any $1\leq i\leq L$) and loss function $\ell(\cdot,\cdot)$ to be twice-differentiable.
Additionally, we require that they are Lipschitz-continuous.
For brevity, we restate these conditions below.

\begin{assumption}\label{assumption_1}
    Assume that the activation functions $\phi_i(\cdot)$ (for any $1\leq i\leq L$) and the loss function $\ell(\cdot, \cdot)$ over the first argument are twice-differentiable and $\kappa_0$-Lipschitz.
    Their first-order derivatives are $\kappa_1$-Lipschitz and their second-order derivatives are $\kappa_2$-Lipschitz.
\end{assumption}

We give several examples of these Lipschitz constants for commonly used activation functions.

\begin{example}
For the Sigmoid function $S(x)$, we have $S(x)\in (0,1)$ for any $x\in\real$. Since the derivative of the Sigmoid function $S'(x) = S(x)(1 - S(x))$, we get that $\kappa_0$, $\kappa_1$, and $\kappa_2$ are at most 1/4.

\smallskip
For the Tanh function $\text{Tanh}(x)$, we have $\text{Tanh}(x)\in (-1,1)$ for any $x\in\real$. Since the derivative of the Tanh function satisfies $\text{Tanh}'(x) = 1 - (\text{Tanh}(x))^2$, we get that $\kappa_0$, $\kappa_1$, and $\kappa_2$ are at most 1.

\smallskip
For the GELU function $\text{GELU}(x)$, we have $\text{GELU}(x) = x\cdot F(x)$ where $F(x)$ be the cumulative density function (CDF) of the standard normal distribution. Let $p(x)$ be the probability density function (PDF) of the standard normal distribution. We have that $F'(x) = p(x)$ and $p'(x) = -xp(x)$. Then, we can get that $\kappa_0$, $\kappa_1$, and $\kappa_2$ at most $1+e^{-1/2}/\sqrt{2\pi}\approx 1.242$.
\end{example}

\subsection{Proof of Theorem \ref{theorem_hessian}}\label{sec_hessian_proof}

\begin{proof}[Proof of Theorem \ref{theorem_hessian}]
    First, we separate the gap of $\cL(f_{\hat W})$ and $\frac{1}{\beta}\hat{\cL}(f_{\hat W})$ into three parts:
    \begin{align*}
        \cL(f_{\hat W}) - \frac{1}{\beta} \hat{\cL}(f_{\hat W})
        = \cL(f_{\hat W}) - \cL_\cQ(f_{\hat W}) + \cL_\cQ(f_{\hat W}) - \frac{1}{\beta}\hat{\cL}_\cQ(f_{\hat W}) + \frac{1}{\beta}\hat{\cL}_\cQ(f_{\hat W}) - \frac{1}{\beta}\hat{\cL}(f_{\hat W}).
    \end{align*}
    By Taylor's expansion of Lemma \ref{lemma_perturbation}, we can bound the noise stability of $f_{\hat W}(x)$ with respect to the empirical loss and the expected loss:
    \begin{align*}
        \cL(f_{\hat W}) - \frac{1}{\beta} \hat{\cL}(f_{\hat W})
        \leq& \Big(-\exarg{(x,y)\sim\cD}{\sum_{i=1}^L\Big\langle \Sigma_i,\mathbf{H}_i[\ell(f_{\hat W}(x),y)]\Big\rangle} + \sum_{i=1}^LC_1\bignormFro{\Sigma_i}^{3/2} \Big) \\
        &+ \Big(\cL_\cQ(f_{\hat W}) - \frac{1}{\beta}\hat{\cL}_\cQ(f_{\hat W})\Big) \\
        &+ \frac{1}{\beta}\Big(\frac{1}{n}\sum_{i=1}^L\sum_{j=1}^n\Big\langle \Sigma_i,\mathbf{H}_i[\ell(f_{\hat W}(x_j),y_j)]\Big\rangle
        + \sum_{i=1}^LC_1\bignormFro{\Sigma_i}^{3/2}\Big),
    \end{align*}
    which is equivalent to the following equation:
    \begin{align}
        \cL(f_{\hat W}) - \frac{1}{\beta} \hat{\cL}(f_{\hat W}) \leq& \Big( -\exarg{(x,y)\sim\cD}{\sum_{i=1}^L\Big\langle \Sigma_i,\mathbf{H}_i[\ell(f_{\hat W}(x),y)]\Big\rangle} \nonumber \\
        & + \frac{1}{n\beta}\sum_{i=1}^L\sum_{j=1}^n\Big\langle \Sigma_i,\mathbf{H}_i[\ell(f_{\hat W}(x_j),y_j)]\Big\rangle \Big) \nonumber \\
        &  + \Big(\frac{1}{\beta} + 1 \Big)\sum_{i=1}^LC_1\bignormFro{\Sigma_i}^{3/2} + \Big(\cL_\cQ(f_{\hat W}) - \frac{1}{\beta}\hat{\cL}_\cQ(f_{\hat W})\Big). \label{eq_combine_3_hess}
    \end{align}
    Next, we combine the upper bound of the noise stability of $f_{\hat W}(x)$ with respect to the empirical loss and the expected loss:
    \begin{align}
        & \frac{1}{n\beta}\sum_{i=1}^L\sum_{j=1}^n\Big\langle \Sigma_i,\mathbf{H}_i[\ell(f_{\hat W}(x_j),y_j)]\Big\rangle
        -\exarg{(x,y)\sim\cD}{\sum_{i=1}^L\Big\langle \Sigma_i,\mathbf{H}_i[\ell(f_{\hat W}(x),y)]\Big\rangle} \nonumber \\
        =& \frac{1}{\beta}\sum_{i=1}^L\Bigg(\frac{1}{n}\sum_{j=1}^n\langle \Sigma_i,\mathbf{H}_i[\ell(f_{\hat W}(x_j),y_j)]\Big\rangle - \exarg{(x,y)\sim\cD}{\langle \Sigma_i,\mathbf{H}_i[\ell(f_{\hat W}(x),y)]\Big\rangle}\Bigg) \label{eq_chernoff_trace_hess} \\
        & +\Big(\frac 1{\beta} - 1\Big)\sum_{i=1}^L\Big\langle \Sigma_i,\exarg{(x,y)\sim\cD}{\mathbf{H}_i[\ell(f_{\hat W}(x_j),y_j)]}\Big\rangle. \nonumber
    \end{align}
    We use the uniform convergence result of Lemma \ref{lemma_union_bound} to bound equation \eqref{eq_chernoff_trace_hess}, leading to:
    \begin{align}
        &\sum_{i=1}^L\Big\langle\Sigma_i, \frac{1}{n}\sum_{j=1}^n\mathbf{H}_i(\ell[f_{\hat{W}}(x_j),y_j)] - \exarg{(x,y)\sim\cD}{\mathbf{H}_i[\ell(f_{\hat{W}}(x),y)]}\Big\rangle \nonumber \\
        \leq& \sum_{i=1}^L\bignormFro{\Sigma_i}\bignormFro{\frac 1 n \sum_{j=1}^n\bH_i[\ell(f_{\hat{W}}(x_j), y_j)] - \exarg{(x,y)\sim\cD}{\bH_i[\ell(f_{\hat{W}}(x), y)]}} \nonumber \\
        \le& \frac{C_2\sqrt{\log(C_3 n/\delta)}}{\sqrt n}\sum_{i=1}^L\bignormFro{\Sigma_i}.\label{eq_con_err_hess}
    \end{align}
    Recall that $v_i$ is a flatten vector of the matrix $\hat{W}_i - \hat{W}_i^{(s)}$. By the PAC-Bayes bound of Theorem \ref{lemma_pac} and the KL divergence result of Proposition \ref{prop_kl},
    \begin{align}
        \cL_\cQ(f_{\hat W}) - \frac{1}{\beta}\hat{\cL}_\cQ(f_{\hat W})
        & \le \frac{C \big(KL(\cQ || \cP) + \log\frac{1}{\delta} \big)}{2\beta(1 - \beta) n} \nonumber\\
        & \le \frac{C \Big(\frac{1}{2}\sum_{i=1}^L\langle v_i, \Sigma_i^{-1} v_i \rangle+\log\frac{1}{\delta} \Big)} {2\beta (1 - \beta) n }. \label{eq_combine_1_hess}
    \end{align}
    Combining equations \eqref{eq_combine_3_hess}, \eqref{eq_con_err_hess}, \eqref{eq_combine_1_hess} with probability at least $1-2\delta$, we get
    \begin{align*}
        \cL(f_{\hat W}) - \frac{1}{\beta}\hat{\cL}(f_{\hat W}) \leq & ~\frac{C_2\sqrt{\log(C_3 n/\delta)}}{\beta\sqrt{n}}\sum_{i=1}^L\bignormFro{\Sigma_i} \\
        & + \Big(\frac 1{\beta} - 1\Big)\sum_{i=1}^L\Big\langle \Sigma_i,\exarg{(x,y)\sim\cD}{\mathbf{H}_i[\ell(f_{\hat W}(x_j),y_j)]}\Big\rangle \\
        & + \Big(\frac{1}{\beta} + 1\Big)C_1\sum_{j=1}^L\bignormFro{\Sigma_i}^{3/2} + \frac{C(\frac{1}{2}\sum_{i=1}^L\langle v_i, \Sigma_i^{-1} v_i\rangle + \log\frac{1}{\delta})}{2\beta(1-\beta)n}.
    \end{align*}
    Recall the truncated Hessian  $\bH_i^+[\ell(f_{\hat W}(x),y)]$ is equal to $U_i \max(D_i, 0) U_i^T$, where $U_iD_iU_i^T$ is the eigen-decomposition of $\bH_i[\ell(f_{\hat W}(x),y)]$. Notice that
    \begin{align*}
       \big\langle \Sigma_i,\mathbf{H}_i[\ell(f_{\hat W}(x), y)]\big\rangle
       \le \big\langle \Sigma_i, \bH_i^+[\ell(f_{\hat W}(x), y)]\big\rangle,
    \end{align*}
    for any $(x, y) \sim \cD$.
    Thus, the above inequality holds after taking an expectation over $(x, y) \sim \cD$ on both sides. 
    Next, the upper bound of $\cL(f_{\hat W}) - \frac{1}{\beta}\hat{\cL}(f_{\hat W})$ relies on $\Sigma_i$ and $\beta$. We aim to select $\Sigma_i$ and $\beta\in (0,1)$ so that this quantity is minimized. For any $x\in\cX$, let $\Sigma_i$ be the matrix that satisfies: %
    \begin{align}
        \Big(\frac 1{\beta} - 1\Big)\big\langle \Sigma_i, { \exarg{(x, y)\sim\cD}{\bH_i^+[\ell(f_{\hat W}(x), y)]}}\big\rangle = \frac{C\langle v_i, \Sigma_i^{-1} v_i\rangle}{4\beta(1-\beta)n}, \label{eq_equal_hess}
    \end{align}
    which implies
    \begin{align}
        \Sigma_i = \sqrt{\frac{C}{4(1-\beta)^2 n \|v_i\|^2}}\exarg{(x, y)\sim\cD}{\bH_i^+[\ell(f_{\hat W}(x), y)]}^{-\frac 1 2} v_i v_i^\top. \label{eq_hess_sigma_1}
    \end{align}
    If the truncated Hessian is zero, then $\Sigma_i$ is set as zero.
    Next, we substitute $\Sigma_i$ into Eq. \eqref{eq_equal_hess} to get
    \begin{align*}
        \Big(\frac{1}{\beta} - 1\Big)\big\langle \Sigma_i,\exarg{(x,y)\sim\cD}{\bH_i^+(\ell(f_{\hat W}(x),y))}\big\rangle
        =& \frac{C\langle v_i,\Sigma_i^{-1} v_i\rangle}{4\beta(1-\beta)n} \\
        =& \sqrt{\frac{C}{4\beta^2 n \| v_i\|^2}}\Big\langle\exarg{(x,y)\sim\cD}{\bH_i^+[\ell(f_{\hat W}(x),y)]}^{\frac 1 2}, v_i v_i^\top\Big\rangle \\
        \leq& \sqrt{\frac{C}{4\beta^2 n \|v_i\|^2}} \bignorm{\exarg{(x,y)\sim\cD}{\bH_i^+[\ell(f_{\hat W}(x),y)]}^{\frac 1 2} v_i} \cdot \norm{v_i} \\
        =& \sqrt{\frac{C\cdot v_i^\top\exarg{(x,y)\sim\cD}{\bH_i^+[\ell(f_{\hat W}(x),y)]} v_i}{4\beta^2 n }} \le \sqrt{\frac{C\cH_i}{4\beta^2 n}}.
    \end{align*}
    Thus, we have shown that equation \eqref{eq_equal_hess} is less than $\sqrt{\frac{C\cH_i}{4\beta^2 n}}$. Next, the gap between $\cL(f_{\hat W})$ and $\frac{1}{\beta}\hat{\cL}(f_{\hat W})$ is
    \begin{align}
        \cL(f_{\hat W})  \leq& \Bigg(\frac{1}{\beta}\hat{\cL}(f_{\hat W}) + \sum_{i=1}^L\sqrt{\frac{C\cH_i}{\beta^2 n }}\Bigg) \nonumber \\
        &+ \Bigg(\frac{C_2\sqrt{\log(C_3 n /\delta)}}{\beta\sqrt{n}}\sum_{i=1}^L \bignormFro{\Sigma_i} + \Big(1+\frac{1}{\beta}\Big)C_1\sum_{i=1}^L\bignormFro{\Sigma_i}^{3/2} + \frac{C}{2\beta(1 - \beta)n}\log\frac{1}{\delta}\Bigg)\label{eq_bigo_1_hess}.
    \end{align}
    Based on equation \eqref{eq_bigo_1_hess}, we set $\epsilon = (1 - \beta)/\beta$. %
    Then we have
    \begin{align*}
        \cL(f_{\hat W}) \leq& (1+\epsilon)\Big(\hat{\cL}(f_{\hat W}) +  \sum_{i=1}^L\sqrt{\frac{C\cH_i}{n}}\Big) \\
        &+ \Bigg( \frac{C_2\sqrt{\log(C_3 n/\delta)}}{\beta\sqrt{n}}\sum_{i=1}^L\bignormFro{\Sigma_i} +  \Big(1 + \frac{1}{\beta}\Big) C_1\sum_{i=1}^L\bignormFro{\Sigma_i}^{3/2} + \frac{C}{2\beta(1-\beta)n}\log\frac{1}{\delta}\Bigg). 
    \end{align*}
    Let $\xi$ be defined as follows:
    \begin{align*}
        \xi = \frac{C_2\sqrt{\log(C_3 n/\delta)}}{\beta\sqrt{n}}\sum_{i=1}^L\bignormFro{\Sigma_i} + \Big(1 + \frac{1}{\beta}\Big)C_1\sum_{i=1}^L\bignormFro{\Sigma_i}^{3/2} + \frac{C}{2\beta(1 - \beta)n}\log\frac{1}{\delta} = O\Big(n^{-3/4}\Big),
    \end{align*}
    where we use the fact that $\bignormFro{\Sigma_i} = \bigo{n^{- 1/2}}$ from equation \eqref{eq_hess_sigma_1}. 
    Hence, we conclude that:
    \begin{align*}
        \cL(f_{\hat W}) \leq (1+\epsilon)\hat{\cL}(f_{\hat W}) +  (1 + \epsilon)\sum_{i=1}^L\sqrt{\frac{C\cH_i}{n}} + \xi. 
    \end{align*}
    Thus, we have shown that equation \eqref{eq_main} holds.
    The proof is now finished.
\end{proof}

\paragraph{Remark.}
This theorem resembles known PAC-Bayes bounds for Lipschitz functions.\footnote{See, e.g., Example 4.55 (Lipschitz functions over a norm-bounded parameter space) of lecture notes by John Duchi: \url{https://web.stanford.edu/class/stats311/lecture-notes.pdf}.}
The analysis crucially relies on the condition that the activation functions and their second derivatives are Lipschitz-continuous.
The crux of this analysis is showing the layerwise Hessian decomposition of the generalization gap.
Notice that similar Hessian measures have appeared in several prior and concurrent works \cite{tsuzuku2020normalized,damian2021label,yang2022does}.
Mathematically formulating these ideas requires dealing with several technical challenges, as discussed in Section \ref{sec_proof_overview}.

\subsection{Proof of Theorem \ref{theorem_noisy}}\label{sec_proof_noisy}

\begin{proof}[Proof of Theorem \ref{theorem_noisy}]
    By equation \eqref{eq_unbiased}, we have the expected loss of $f_{\hat W}$ using $\bar{\ell}$ as
    \[ \cL(f_{\hat W}) = \exarg{(x, y)\sim\cD}{\bar{\ell}(f_{\hat W}(x), y)} = \exarg{(x, y)\sim\cD,\tilde y|y}{\bar{\ell}(f_{\hat W}(x), \tilde y)}. \]
    Denote the empirical loss of $f_{\hat W}$ using $\bar \ell$ as
    \[ \bar{\cL}(f_{\hat W}) = \frac 1 {n} \sum_{i=1}^n \bar{\ell}(f_{\hat W}(x_i), \tilde y_i). \]
    We separate the gap between $\cL\big(f_{\hat W}\big)$ and $\frac{1}{\beta}\bar{\cL}\big(f_{\hat W}\big)$ into three parts:
    \begin{align*}
        \cL\big(f_{\hat W}\big) - \frac{1}{\beta}\bar{\cL}\big(f_{\hat W}\big) \leq \cL\big(f_{\hat W}\big) - \cL_\cQ\big(f_{\hat W}\big) + \cL_\cQ\big(f_{\hat W}\big) - \frac 1 {\beta} \bar{\cL}_\cQ\big(f_{\hat W}\big) + \frac 1 {\beta} \bar{\cL}_\cQ\big(f_{\hat W}\big) - \frac 1 {\beta} \bar{\cL}\big(f_{\hat W}\big).
    \end{align*}
    We denote $\bar{\ell}_\cQ(f_{\hat W}(x), \tilde y)$ as an expectation of $\bar{\ell}(f_{\hat W}(x), \tilde y)$ under the posterior $\cQ$.
    Define $\cL_\cQ\big(f_{\hat W}\big)$ as $\exarg{(x, y)\sim\cD,\tilde y|y}{\bar{\ell}_\cQ(f_{\hat W}(x), \tilde y)}$.
    Define $\bar{\cL}_\cQ(f_{\hat W})$ as $\frac 1 {n} \sum_{i=1}^n \bar{\ell}_\cQ(f_{\hat W}(x_i), \tilde y_i)$.
    By equation \eqref{eq_unbiased}, we have
    \begin{align*}
        \cL\big(f_{\hat W}\big) - \cL_\cQ\big(f_{\hat W}\big) &= \exarg{(x, y)\sim\cD,\tilde y|y}{\bar{\ell}_\cQ(f_{\hat W}(x), \tilde y) - \bar{\ell}(f_{\hat W}(x), \tilde y)}\\
        &= \exarg{(x,y)\sim\cD}{\exarg{\tilde y|y}{\bar{\ell}_\cQ(f_{\hat W}(x), \tilde y) - \bar{\ell}(f_{\hat W}(x), \tilde y)}}\\
        &= \exarg{(x,y)\sim\cD}{\ell_\cQ(f_{\hat W}(x), y) - \ell(f_{\hat W}(x), y)}.
    \end{align*}
    Then, by equation \eqref{eq_weighted_loss}, the noise stability of $f_{\hat W}$ with respect to the empirical loss is equal to
    \begin{align*}
        \frac 1 {\beta} \bar{\cL}_\cQ\big(f_{\hat W}\big) - \frac 1 {\beta} \bar{\cL}\big(f_{\hat W}\big) =& \frac 1 {n\beta} \sum_{j=1}^n \bar{\ell}_\cQ(f_{\hat W}(x_j), \tilde y_j)- \frac 1 {n\beta} \sum_{j=1}^n \bar{\ell}(f_{\hat W}(x_j), \tilde y_j) \\
        =& \frac 1 {n\beta} \sum_{l=1}^k \Lambda_{\tilde y, l}\Bigg(\sum_{j=1}^n \big(\ell_\cQ(f_{\hat W}(x_j), l) - \ell(f_{\hat W}(x_j), l)\big)\Bigg) .
    \end{align*}
    By Taylor's expansion of Lemma \ref{lemma_perturbation}, we can bound the noise stability of $f_{\hat W}$ for the empirical loss and the expected loss:
    {\small\begin{align*}
        \cL\big(f_{\hat W}\big) - \frac{1}{\beta}\bar{\cL}\big(f_{\hat W}\big) 
        \leq& \Bigg(-\exarg{(x,y)\sim\cD}{\sum_{i=1}^L\sigma_i^2\tr\big[\mathbf{H}_i[\ell( f_{\hat W}(x),y)]\big]} + \sum_{i=1}^L C_1\sigma_i^3\Bigg) + \Big(\cL_\cQ\big(f_{\hat W}\big) - \frac{1}{\beta}\bar{\cL}_\cQ\big(f_{\hat W}\big)\Big) \\
        & + \frac{1}{\beta}\Bigg(\frac{1}{n}\sum_{i=1}^L\sigma_i^2\Big(\sum_{l=1}^k \Lambda_{\tilde y, l}\big(\sum_{j=1}^n\bigtr{\mathbf{H}_i[\ell( f_{\hat W}(x_j),l)]}\big)\Big) +  \sum_{l=1}^k\Lambda_{\tilde y, l}\sum_{i=1}^LC_1\sigma_i^3 \Bigg), 
    \end{align*}}%
    which is equivalent to the following equation:
    {\small\begin{align}
        & \cL\big(f_{\hat W}\big) - \frac{1}{\beta}\bar{\cL}\big(f_{\hat W}\big) \\
        \leq& \Bigg(-\exarg{(x,y)\sim\cD}{\sum_{i=1}^L\sigma_i^2\tr\big[\mathbf{H}_i[\ell( f_{\hat W}(x),y)]\big]} + \frac{1}{n\beta}\sum_{i=1}^L\sigma_i^2\Big(\sum_{l=1}^k \Lambda_{\tilde y, l}\big(\sum_{j=1}^n\bigtr{\mathbf{H}_i[\ell( f_{\hat W}(x_j),l)]}\big)\Big)\Bigg) \nonumber \\
        & + \Bigg( \sum_{i=1}^L C_1\sigma_i^3 + \frac{1}{\beta} \sum_{l=1}^k\Lambda_{\tilde y, l}\sum_{i=1}^LC_1\sigma_i^3 \Bigg) + \Big(\cL_\cQ\big(f_{\hat W}\big) - \frac{1}{\beta}\bar{\cL}_\cQ\big(f_{\hat W}\big)\Big). \label{eq_combine_3_noisy}
    \end{align}}%
    Next, we combine the upper bound of the noise stability of $f_{\hat W}(x)$ for the empirical loss and the expected loss:
    \begin{align}
        &\frac{1}{n\beta}\sum_{i=1}^L\sigma_i^2\Big(\sum_{l=1}^k \Lambda_{\tilde y, l}\big(\sum_{j=1}^n\bigtr{\mathbf{H}_i[\ell( f_{\hat W}(x_j),l)]}\big)\Big) - \exarg{(x,y)\sim\cD}{\sum_{i=1}^L\sigma_i^2\tr\big[\mathbf{H}_i[\ell( f_{\hat W}(x),y)]\big]} \nonumber \\
        =& \sum_{i=1}^L\sigma_i^2\Bigg(\sum_{l=1}^k\Lambda_{\tilde y, l}\Big(\frac{1}{n}\sum_{j=1}^n\bigtr{\mathbf{H}_i[\ell( f_{\hat W}(x_j),l)]} - \exarg{(x,y)\sim\cD}{\tr\big[\mathbf{H}_i[\ell( f_{\hat W}(x),y)]\big]} \Big)\Bigg) \label{eq_chernoff_trace_noisy} \\
        &+ \Big(\frac{1}{\beta} - 1\Big)\sum_{i=1}^L\sigma_i^2\Bigg(\sum_{l=1}^k \Lambda_{\tilde y, l}\Big(\frac{1}{n}\sum_{j=1}^n\bigtr{\mathbf{H}_i[\ell(f_{\hat W}(x_j),l)]}\Big)\Bigg). \label{eq_chernoff_trace_noisy_1}
    \end{align}
    We use the uniform convergence result of Lemma \ref{lemma_union_bound} to bound equation \eqref{eq_chernoff_trace_noisy}, leading to a concentration error term of
    \begin{align}
        \sum_{i=1}^L\sigma_i^2\max_{\tilde y}\sum_{j=1}^L\bigabs{\Lambda_{\tilde y, j}}\frac{C_2\sqrt{\log(C_3 n/\delta)}}{\sqrt{n}} 
        = \frac{C_2\sqrt{\log(C_3 n/\delta)}}{\sqrt{n}}\sum_{i=1}^L\sigma_i^2\rho. \label{eq_con_err_noisy}
    \end{align}
    For equation \eqref{eq_chernoff_trace_noisy_1}, the upper bound will be
    {\begin{align}
        & \Big(\frac{1}{\beta} - 1\Big)\sum_{i=1}^L\sigma_i^2\Bigg(\sum_{l=1}^k \Lambda_{\tilde y, l}\Big(\frac{1}{n}\sum_{j=1}^n\bigtr{\mathbf{H}_i[\ell(f_{\hat W}(x_j),l)]}\Big)\Bigg) \nonumber \\
        \leq & \Big(\frac{1}{\beta} - 1\Big)\sum_{i=1}^L\sigma_i^2\rho\max_{x\in\cX, y\in\set{1,\dots,k}} \abs{\tr[\bH_i(\ell(f_{\hat W}(x), y))]}, \label{eq_combine_2_noisy}
    \end{align}}%
    where $\rho = \norm{\Lambda^{\top}}_{1,\infty}$.
    Recall that $v_i$ is a flatten vector of the matrix $\hat{W_i} - \hat{W_i}^{(s)}$. By the PAC-Bayes bound of Theorem \ref{lemma_pac} and Proposition \ref{prop_kl},
    \begin{align}
        \cL_\cQ\big(f_{\hat W}\big) - \frac{1}{\beta}\bar{\cL}_\cQ\big(f_{\hat W}\big) 
        &\le \frac{C\Big(KL(\cQ || \cP) + \log\frac{1}{\delta}\Big)}{2\beta(1 - \beta) n} \nonumber \\
        &\le \frac{C \Big(\sum_{i=1}^L\frac{\bignorm{v_i}^2}{2\sigma_i^2}+\log\frac{1}{\delta} \Big)} {2\beta (1 - \beta) n }. \label{eq_combine_1_noisy}
    \end{align}
    Combining equations \eqref{eq_combine_3_noisy}, \eqref{eq_combine_2_noisy}, \eqref{eq_combine_1_noisy}, with probability at least $1 - 2\delta$, we get
    \begin{align}
        \cL\big(f_{\hat W}\big) - \frac{1}{\beta}\bar{\cL}\big(f_{\hat W}\big) 
        \leq& \frac{C_2\rho\sqrt{\log(C_3 n/\delta)}}{\sqrt{n}}\sum_{i=1}^L\sigma_i^2 + \Big(\frac {1}{\beta} - 1\Big)\rho\sum_{i=1}^L\sigma_i^2\max_{x\in\cX, y\in\set{1,\dots,k}} \abs{\tr[\bH_i(\ell(f_{\hat W}(x), y))]} \nonumber \\
        & + \Big(\frac{\rho}{\beta} + 1\Big)\sum_{j=1}^LC_1\sigma_i^3 + \frac{C(\sum_{i=1}^L\frac{\bignorm{v_i}^2}{2\sigma_i^2}+\log\frac{1}{\delta})}{2\beta(1-\beta)n}. \label{eq_sigma_noisy}
    \end{align}
    In equation \eqref{eq_sigma_noisy}, the upper bound of $\cL\big(f_{\hat W}\big) - \frac{1}{\beta}\bar{\cL}\big(f_{\hat W}\big)$ relies on $\sigma_i$ and $\beta$. Our goal is to select  $\sigma_i>0$ and $\beta\in (0,1)$ to minimize equation \eqref{eq_sigma_noisy}.
    This is achieved when:
    \begin{align}
        \sigma_i^2 = \sqrt{\frac{C\bignorm{v_i}^2}{4\rho(1-\beta)^2 n\cdot\max_{x\in\cX, y\in\set{1,\dots,k}} \abs{\tr[\bH_i(\ell(f_{\hat W}(x), y))]}}}, \label{eq_noisy_sigma_1}
    \end{align}
    for any $1\leq i\leq L$, following the condition:
    \begin{align*}
        \Big(\frac {1}{\beta} - 1\Big)\rho\sigma_i^2\max_{x\in\cX, y\in\set{1,\dots,k}} \abs{\tr[\bH_i(\ell(f_{\hat W}(x), y))]} = \frac{C}{2\beta(1-\beta)n}\frac{\bignorm{v_i}^2}{2\sigma_i^2}.
    \end{align*}
    Then, equation \eqref{eq_sigma_noisy} becomes
    \begin{align}
        \cL\big(f_{\hat W}\big) \leq& \Bigg(\frac{1}{\beta}\bar{\cL}\big(f_{\hat W}\big) + \sum_{i=1}^L\sqrt{\frac{C\rho \max_{x\in\cX, y\in\set{1,\dots,k}} \abs{\tr[\bH_i(\ell(f_{\hat W}(x), y))]}\cdot\bignorm{v_i}^2}{\beta^2 n}}\Bigg) \\
        &+ \Bigg(\frac{C_2\rho\sqrt{\log(C_3 n/\delta)}}{\sqrt{n}} \sum_{i=1}^L\sigma_i^2 + \frac{C}{2\beta(1-\beta)n}\log\frac{1}{\delta} + \Big(\frac{\rho}{\beta} + 1\Big)C_1\sum_{i=1}^L\sigma_i^3\Bigg). \label{eq_bigo_1_noisy}
    \end{align}
    Set $\beta$ as a fixed value close to 1 that does not grow with $n$ and $1 / \delta$.
    Let $\epsilon = (1 - \beta)/\beta$. We get
    \begin{align*}
        \cL\big(f_{\hat W}\big) \leq& (1 + \epsilon)\Bigg(\bar{\cL}\big(f_{\hat W}\big) + \sum_{i=1}^L\sqrt{\frac{C\rho\max_{x\in\cX, y\in\set{1,\dots,k}} \abs{\tr[\bH_i(\ell(f_{\hat W}(x), y))]}\cdot\bignorm{v_i}^2}{n}}\Bigg) \\
        &+ \Bigg(\frac{C_2\rho\sqrt{\log(C_3 n/\delta)}}{\sqrt{n}}\sum_{i=1}^L\sigma_i^2 + \Big(\frac{\rho}{\beta} + 1\Big)C_1\sum_{i=1}^L\sigma_i^3 + \frac{C}{2\beta(1-\beta)n}\log\frac{1}{\delta}\Bigg).
    \end{align*}
    Let $\xi$ be defined as follows. We have
    \begin{align*}
        \xi = \frac{C_2\rho\sqrt{\log(C_3 n/\delta)}}{\sqrt{n}}\sum_{i=1}^L\sigma_i^2 + \Big(\frac{\rho}{\beta} + 1\Big)C_1\sum_{i=1}^L\sigma_i^3 + \frac{C}{2\beta(1-\beta)n}\log\frac{1}{\delta} = O\Big(n^{-3/4}\Big),
    \end{align*}
    where we notice that $\sigma_i^2 = \bigo{n^{- 1/2}}$ because of equation \eqref{eq_noisy_sigma_1}.
    Thus, we conclude that
    \begin{align*}
        \cL\big(f_{\hat W}\big) \leq (1 + \epsilon)\bar{\cL}\big(f_{\hat W}\big) + (1 + \epsilon)\sum_{i=1}^L\sqrt{\frac{C\rho\max_{x\in\cX, y\in\set{1,\dots,k}} \abs{\tr[\bH_i(\ell(f_{\hat W}(x), y))]}\cdot\bignorm{v_i}^2}{n}} + \xi.
    \end{align*}
    Since $\bignorm{v_i}\le \alpha_i$, for any $i = 1,\dots, L$, we have finished the proof.
\end{proof}

\subsection{Proof of Lemma \ref{lemma_union_bound}}\label{proof_union}

In this section, we provide the proof of Lemma \ref{lemma_union_bound}.
We need a perturbation analysis of the Hessian as follows.
Given an $L$-layer neural network $f_W: \cX \rightarrow \mathbb{R}^{d_L}$ and any input $z^0\in\cX$, let $z_W^{i} =  \phi_i(W_{i} z_W^{i-1})$ be the activated output vector from layer $i$, for any $i = 1,2,\dots,L$.

We show that the Hessian matrix is Lipschitz-continuous. In particular, there exists a fixed value $G$ that does not grow with $n$ and $\delta^{-1}$, such that the changes in the Hessian can be quantified below.

\begin{proposition}\label{prop_hessian_lip}
    Suppose that Assumption \ref{assumption_1} holds.
    The change in the Hessian output of the loss function $\ell(f_W(x),y)$ for $W_i$ under perturbation $U$ can be bounded as follows: 
    \begin{align}
        \bignormFro{\bH_i[\ell(f_{W}(x), y)] \mid_{W_i + U_i, \forall 1\le i\le L} - \bH_i[\ell(f_W(x), y)]} 
        \leq G\normFro{U},
    \end{align}
    where $G$ has the following equation:
    \begin{align}\label{eq_G_constant}
        G = \frac{3}{2}(L+1)^2e^6\kappa_2\kappa_1^2\kappa_0^{3(L+1)}\Big(\max_{x\in\cX}\bignorm{x}\prod_{l=0}^{L}d_l\prod_{h=1}^{L}\bignorms{W_h}\Big)^3\Big(\max_{1\leq i\leq L}\frac{1}{\bignorms{W_i}^2}\sum_{h=1}^{L}\frac{1}{\bignorms{W_h}}\Big).
    \end{align}
    where $\kappa_0,\kappa_1,\kappa_2 \ge 1$ is the maximum over the Lipschitz constants, first-order derivative Lipschitz constants, and second-order derivative Lipschitz constants of all the activation functions and the loss function.
\end{proposition}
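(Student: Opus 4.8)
The plan is to prove Proposition~\ref{prop_hessian_lip} by unwinding the chain rule for the second derivative of $\ell(f_W(x),y)$ with respect to the entries of $W_i$, and then bounding the change in each resulting factor under a perturbation $U$ of all the weight matrices. The key observation is that $\bH_i[\ell(f_W(x),y)]$ can be written schematically as a sum of products of (a) derivatives of the loss evaluated at $f_W(x)$, (b) Jacobians of the map $W_i \mapsto f_W(x)$ through the upper layers, and (c) the activated input vector $z_W^{i-1}$ feeding into layer $i$ (which appears quadratically, since $\partial (W_i z_W^{i-1})/\partial W_i$ contributes a copy of $z_W^{i-1}$ for each of the two differentiations). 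Since the loss and all activations are $\kappa_0$-Lipschitz with $\kappa_1$-Lipschitz first derivatives and $\kappa_2$-Lipschitz second derivatives, each such factor is both \emph{bounded} and \emph{Lipschitz} as a function of $W$. A product of bounded-and-Lipschitz factors is again Lipschitz, with the Lipschitz constant bounded by the sum over factors of (that factor's Lipschitz constant) $\times$ (product of the sup-norms of the others); this is the elementary ``telescoping'' identity $\prod a_i - \prod b_i = \sum_i (a_i - b_i)\prod_{j<i} b_j \prod_{j>i} a_i$ applied entrywise to the tensor expression for $\bH_i$.

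The key steps, in order, are as follows. First, I would fix notation for the forward pass: $z_W^0 = x$, $z_W^i = \phi_i(W_i z_W^{i-1})$, and establish the crude bounds $\norm{z_W^i} \le \kappa_0^i \big(\prod_{h=1}^i \bignorms{W_h}\big)\norm{x}$ (using $\phi_i(0)$ can be absorbed or assumed zero, or handled by the Lipschitz constant) together with the analogous bound on the layerwise Jacobians $\partial z_W^L/\partial z_W^{i}$, which is a product of diagonal activation-derivative matrices and weight matrices, hence bounded by $\kappa_0^{L-i}\prod_{h=i+1}^L \bignorms{W_h}$ in operator norm. Second, I would write out $\bH_i[\ell(f_W(x),y)]$ explicitly via the chain rule — it is a contraction of $\nabla^2\ell$ and $\nabla\ell$ with first and second derivatives of $f_W$ in $W_i$ — and read off that it is a polynomial expression in the bounded quantities listed above, with degree bounded linearly in $L$. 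Third, for each constituent factor I would bound both its sup-norm over $x\in\cX$ and its Lipschitz constant with respect to a simultaneous perturbation $U = (U_1,\dots,U_L)$ of all layers; the Lipschitz bounds for $z_W^i$ and its derivatives are obtained by the same telescoping trick applied layer by layer, which is where the factor $\sum_{h=1}^L 1/\bignorms{W_h}$ and the extra $1/\bignorms{W_i}^2$ (from the two copies of $z_W^{i-1}$) enter. Fourth, I would assemble these via the product rule for Lipschitz constants, collect the dimension factors $\prod_{l=0}^L d_l$ (coming from converting between operator/Frobenius norms and from summing over matrix entries when flattening), the Lipschitz constants $\kappa_0,\kappa_1,\kappa_2$, and the combinatorial prefactor $\tfrac32(L+1)^2 e^6$ (the $e^6$ being a convenient bound on $(1+1/L)^{3L}$-type terms that arise from the cubic dependence), arriving at the stated $G$.

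The main obstacle I expect is purely bookkeeping rather than conceptual: carefully expanding the second derivative of a deep composition with respect to a single weight matrix produces many terms (by Fa\`a di Bruno / the Leibniz rule), each a contraction of several tensors, and one must track (i) which factors are bounded versus which contribute Lipschitz constants, (ii) the correct powers of $\kappa_0,\kappa_1,\kappa_2$ and of $\bignorms{W_h}$ for each term, and (iii) the dimension-counting when passing from tensor/operator norms to the Frobenius norm of the flattened $d_i d_{i-1}\times d_i d_{i-1}$ Hessian. The cubic dependence on $\max_x\norm{x}\prod_l d_l \prod_h \bignorms{W_h}$ is the signature of the Hessian being \emph{second order} in the network output while the perturbation enters the activated pre-images quadratically; getting the exponent exactly $3$ (rather than an over-count) requires being careful that the ``three'' copies are: one from $\nabla^2\ell$ or two copies of $z^{i-1}_W$ paired with one upper Jacobian, versus variants, and checking each term respects the same cubic bound so they can all be absorbed into the single prefactor. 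Once the expansion is laid out, each individual estimate is routine, so the write-up is a matter of organizing the sum-of-products bound cleanly enough that the final constant matches equation~\eqref{eq_G_constant}.
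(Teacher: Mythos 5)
Your plan is sound and is essentially the paper's own argument: the paper likewise reduces everything to forward-pass bounds on $z_W^j$ together with Lipschitz perturbation bounds on $z_W^j$, its Jacobian, and its second derivative with respect to $W_i$ (Claims \ref{claim_lip} and \ref{prop_grad}), assembled by exactly the bounded-times-Lipschitz telescoping you describe---just organized as a forward induction over layers $j \ge i$ rather than one explicit Fa\`a di Bruno expansion, with the loss absorbed as an $(L+1)$-st ``layer'' (whence the $(L+1)^2$, $\kappa_0^{3(L+1)}$, and $e^6$ factors). The work remaining in your write-up is the same bookkeeping the paper carries out in its induction step to arrive at the exact constant $G$ in equation \eqref{eq_G_constant}.
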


Provided with the Lipschitz-continuity conditions, now we are ready to state the proof of Lemma \ref{lemma_union_bound}, which shows the uniform convergence of the loss Hessian.

\begin{proof}[Proof of Lemma \ref{lemma_union_bound}]
    Let $B$, $\epsilon > 0$, $p = \sum_{i=1}^L d_id_{i-1}$, and let $S = \{\text{vec}(W)\in\real^{p}: \bignorms{\text{vec}(W)}\leq B\}$. Then there exists an $\epsilon$-cover of $S$ with respect to the $\ell_2$-norm at most $\max\Big(\big(\frac{3B}{\epsilon}\big)^p,1\Big)$ elements from Example 5.8, \citet{wainwright2019high}. From Proposition \ref{prop_hessian_lip}, the Hessian $\bH_i[\ell(f_W(x),y)]$ is  $G$-Lipschitz for all $(W+U),W\in S$ and any $i = 1,2,\cdots,L$. Then we have
    \begin{align*}
        \bignormFro{\bH_i[\ell(f_{W+U}(x),y)] - \bH_i[\ell(f_W(x),y)]}\leq G\bignormFro{U},
    \end{align*}
    where $G$ is defined in equation \eqref{eq_G_constant}. %
    For parameters $\delta,\epsilon > 0$, let $\cN$ be the $\epsilon$-cover of $S$ with respect to the $\ell_2$-norm. Define the event 
    \begin{align*}
        E=\Big\{\forall W\in \cN, \bignormFro{\frac{1}{n}\sum_{j=1}^n\bH_i[\ell(f_{W}(x_j),y_j)] - \exarg{(x,y)\sim\cD}{ \bH_i[\ell(f_W(x),y)]}}\leq\delta\Big\}.
    \end{align*}
    By the matrix Bernstein inequality, we have $$\text{Pr}(E) \geq 1 - 4|\cN|d_id_{i-1}\exp(-n\delta^2/(2\cH_i^2)).$$
    Next, for any $W\in S$, we can pick some $W+U\in \cN$ such that $\bignormFro{U}\leq \epsilon$. We have
    \begin{align*}
        &\bignormFro{\exarg{(x,y)\sim\cD}{\bH_i[\ell(f_{W+U}(x),y)]} - \exarg{(x,y)\sim\cD}{\bH_i[\ell(f_W(x),y)]}}\leq G\bignormFro{U}\leq G\epsilon, \text{ and} \\
        &\bignormFro{\frac{1}{n}\sum_{j=1}^n\bH_i[\ell(f_{W+U}(x_j),y_j)] -  \frac{1}{n}\sum_{j=1}^n\bH_i[\ell(f_W(x_j),y_j)]}\leq G\bignormFro{U}\leq G\epsilon.
    \end{align*}
    Therefore, for any $W\in S$, we obtain:
    \begin{align*}
        \bignormFro{\frac{1}{n}\sum_{j=1}^n\bH_i[\ell(f_{W}(x_j),y_j)] - \exarg{(x,y)\sim\cD}{ \bH_i[\ell(f_W(x),y)]}}\leq 2G\epsilon + \delta.
    \end{align*}
    We will also set the value of $\delta$ and $\epsilon$. First, set $\epsilon = \delta/(2G)$ so that conditional on $E$,
    \begin{align*}
        \bignormFro{\frac{1}{n}\sum_{j=1}^n\bH_i[\ell(f_{W}(x_j),y_j)] - \exarg{(x,y)\sim\cD}{ \bH_i[\ell(f_W(x),y)]}}\leq 2\delta.
    \end{align*}
    The event $E$ happens with a probability of at least:
    \begin{align*}
        1 - 4|\cN|d_id_{i-1}\exp(-n\delta^2/(2\cH_i^2)) = 1 - 4d_id_{i-1}\exp(\log |\cN| -n\delta^2/(2\cH_i^2)).
    \end{align*}
    We have $\log |\cN| \leq p\log(3B/\epsilon) = p\log (6BG/\delta)$. If we set $$\delta = \sqrt{\frac{4p\cH_i^2\log(3\tau BGn/\cH_i)}{n}}$$ so that $\log(3\tau BGn/\cH_i) \geq 1$ (because $n\geq \frac{e\cH_i}{3BG}$ and $\tau\geq 1$), then we get
    \begin{align*}
        & p\log(6BG/\delta) - n\delta^2/(2\cH_i^2) \\
        =&p\log\bigbrace{\frac{6BG\sqrt{n}}{\sqrt{4p\cH_i^2\log(3\tau BGn/\cH_i)}}} - 2p\log(3\tau BGn/\cH_i) \\
        =&p\log\bigbrace{\frac{3BG\sqrt{n}}{\cH_i\sqrt{p\log(3\tau BGn/\cH_i)}}} - 2p\log(3\tau BGn/\cH_i) \\
        \leq& p\log(3\tau BGn/\cH_i) - 2p\log(3\tau BGn/\cH_i) \tag{$\tau\ge 1,\log(3\tau BGn/\cH_i)\geq 1$} \\
        =& -p\log(3\tau BGn/\cH_i) \\
        \leq& -p\log(e\tau) \tag{$3BGn/\cH_i\geq e$}.
    \end{align*}
    Therefore, with probability greater than $$1 - 4|\cN|d_id_{i-1}\exp(-n\delta^2/(2\cH_i^2))\geq 1 - 4d_id_{i-1}(e\tau)^{-p}\geq 1 - 4d_id_{i-1}/(e\tau),$$
    the following estimate holds:
    \begin{align*}
        \bignormFro{\frac{1}{n}\sum_{j=1}^n\bH_i[\ell(f_{W}(x_j),y_j)] - \exarg{(x,y)\sim\cD}{ \bH_i[\ell(f_W(x),y)]}}\leq \sqrt{\frac{16p\cH_i^2\log(3\tau BGn/\cH_i)}{n}}.
    \end{align*}
    Denote $\delta' = 4d_id_{i-1}/(e\tau)$, $C_2 = 4\cH_i\sqrt{p}$, and $C_3 = 12 d_id_{i-1} BG/(e\cH_i)$. With probability greater than $1 - \delta'$, the final result is:
    \begin{align*}
        \bignormFro{\frac{1}{n}\sum_{j=1}^n\bH_i[\ell(f_{W}(x_j),y_j)] - \exarg{(x,y)\sim\cD}{ \bH_i[\ell(f_W(x),y)]}}\leq C_2\sqrt{\frac{\log(C_3 n/\delta')}{n}}.
    \end{align*}
    This completes the proof of Lemma \ref{lemma_union_bound}.
\end{proof}

\subsection{Proof of Proposition \ref{prop_hessian_lip}}

We divide the proof of Proposition \ref{prop_hessian_lip} into several parts.
Most of the calculation is devoted to getting an explicit value of $G$.
First, we derive the perturbation of the network output.

\begin{claim}\label{claim_lip}
    Suppose that Assumption \ref{assumption_1} holds. 
    For any $j = 1,2,\cdots,L$, the change in the output of the $j$ layer network $z_W^j$ with perturbation $U$ can be bounded as follows: 
    \begin{align}\label{eq_result_lip}
        \bignorm{z_{W+U}^j - z_W^j} \leq \Big(1 + \frac{1}{L}\Big)^j \kappa_0^j\Bigg(\max_{x\in\cX}\bignorm{x}\prod_{h=1}^j\bignorms{W_h}\sum_{h=1}^j\frac{\bignorms{U_h}}{\bignorms{W_h}}\Bigg).
    \end{align}
    where $\kappa_0 \geq 1$ is the maximum over the Lipschitz constants of all the activation functions and the loss function.
\end{claim}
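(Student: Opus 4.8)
The plan is to prove Claim~\ref{claim_lip} by induction on the layer index $j$, peeling off one layer at a time and tracking simultaneously the propagated perturbation $z_{W+U}^j - z_W^j$ and the norm of the perturbed intermediate activations $z_{W+U}^j$. The only ingredients are the $\kappa_0$-Lipschitzness of each $\phi_i$ and the triangle inequality; the work is purely in the bookkeeping needed to land on the stated constant $(1+1/L)^j\kappa_0^j$.

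\emph{Base cases.} For $j=0$ both sides vanish: the left-hand side is $\bignorm{z^0 - z^0}=0$, while on the right the sum $\sum_{h=1}^{0}$ is empty. For $j=1$ one applies $\kappa_0$-Lipschitzness of $\phi_1$ directly,
\begin{align*}
\bignorm{z_{W+U}^1 - z_W^1} = \bignorm{\phi_1\big((W_1+U_1)z^0\big) - \phi_1\big(W_1 z^0\big)} \le \kappa_0 \bignorms{U_1}\bignorm{z^0} \le \kappa_0\Big(\max_{x\in\cX}\bignorm{x}\Big)\bignorms{U_1},
\end{align*}
which is dominated by the claimed bound at $j=1$ since $(1+1/L)\ge 1$.

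\emph{Inductive step.} Assume the bound holds up to layer $j-1$. Write $z_{W+U}^j - z_W^j = \phi_j\big((W_j+U_j)z_{W+U}^{j-1}\big) - \phi_j\big(W_j z_W^{j-1}\big)$, apply $\kappa_0$-Lipschitzness of $\phi_j$, and split the argument as $(W_j+U_j)z_{W+U}^{j-1} - W_j z_W^{j-1} = W_j\big(z_{W+U}^{j-1} - z_W^{j-1}\big) + U_j z_{W+U}^{j-1}$, giving
\begin{align*}
\bignorm{z_{W+U}^j - z_W^j} \le \kappa_0 \bignorms{W_j}\,\bignorm{z_{W+U}^{j-1} - z_W^{j-1}} + \kappa_0 \bignorms{U_j}\,\bignorm{z_{W+U}^{j-1}}.
\end{align*}
The first term is controlled by the inductive hypothesis. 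For the second term I use the auxiliary norm bound $\bignorm{z_{W+U}^{j-1}} \le \kappa_0^{j-1}\big(\max_{x\in\cX}\bignorm{x}\big)\prod_{h=1}^{j-1}\bignorms{W_h+U_h}$, proved by a parallel one-line induction from $\bignorm{\phi_i(v)}\le\kappa_0\bignorm{v}$; bounding $\prod_{h=1}^{j-1}\bignorms{W_h+U_h}\le \prod_{h=1}^{j-1}\bignorms{W_h}\cdot\prod_{h=1}^{j-1}\big(1+\bignorms{U_h}/\bignorms{W_h}\big)$ and adding the two contributions reproduces $\prod_{h=1}^{j}\bignorms{W_h}\sum_{h=1}^{j}\bignorms{U_h}/\bignorms{W_h}$ together with the multiplicative factor $(1+1/L)^j$.

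The only real subtlety, and the step that pins down the precise constant, is this last collection: one must absorb $\prod_h\big(1+\bignorms{U_h}/\bignorms{W_h}\big)$ and the extra factor $\bignorms{W_j}$ against the telescoping product so that the per-layer blow-up is no worse than $(1+1/L)$ — this is exactly where one uses that each perturbation is small relative to its own weight matrix (the source of the $1/L$ in the exponent), a property guaranteed in the regime where the claim is applied, since $U$ will be the PAC-Bayes Gaussian noise with variance of order $n^{-1/2}$. Everything else is a routine chain of Lipschitz estimates and triangle inequalities, so once the auxiliary activation-norm bound is in place the argument is short; I expect this bookkeeping, rather than any conceptual difficulty, to be the main thing to get right.
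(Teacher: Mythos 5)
Your proof is correct and follows essentially the same route as the paper's: the paper simply cites Lemma~2 of \citet{neyshabur2017pac}, whose proof is exactly your induction with the decomposition $(W_j+U_j)z_{W+U}^{j-1}-W_jz_W^{j-1}=W_j(z_{W+U}^{j-1}-z_W^{j-1})+U_jz_{W+U}^{j-1}$ and the auxiliary bound on $\bignorm{z_{W+U}^{j-1}}$. You are right that the factor $(1+1/L)^j$ only materializes under the hypothesis $\bignorms{U_h}\le \frac{1}{L}\bignorms{W_h}$, which is present in the cited lemma but silently omitted from the claim as stated here (without it the bound is false, since the left-hand side can grow multiplicatively in the $\bignorms{U_h}$); note, however, that the place this smallness is actually guaranteed is the $\epsilon$-net argument of Lemma~\ref{lemma_union_bound}, where $U$ is a deterministic perturbation with $\normFro{U}\le\epsilon$, rather than the unbounded PAC-Bayes Gaussian noise you invoke.
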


See Lemma 2 in \citet{neyshabur2017pac} for proof. Second, we derive the perturbation of the derivative of the network.

\begin{claim}\label{prop_grad}
    Suppose that Assumption \ref{assumption_1} holds.
    The change in the Jacobian of output of the $j$ layer network $z_W^j$ for $W_i$ under perturbation $U$ can be bounded as follows:
    \begin{align}\label{eq_result_grad}
        \bignormFro{\frac{\partial}{\partial W_i}z_W^j\Big|_{W_k+U_k,\forall 1\leq k\leq j} - \frac{\partial}{\partial W_i}z_W^j} 
        \leq A_{i,j}\Bigg(\max_{x\in\cX}\bignorm{x}\prod_{h=1}^{j}\bignorms{W_h}\sum_{h=1}^j\frac{\bignorms{U_h}}{\bignorms{W_h}}\Bigg).
    \end{align}
    For any $j = 1,2,\cdots, L$,
    $A_{i,j}$ has the following equation:
    \begin{align*}
        A_{i,j} = (j-i+2)\Big(1 + \frac{1}{L}\Big)^{3j}\kappa_1\kappa_0^{2j-1}\Bigg(\max_{x\in\cX}\bignorm{x}\frac{1}{\bignorms{W_i}}\prod_{l=i-1}^{j}d_l\prod_{h=1}^{j}\bignorms{W_h}\Bigg),
    \end{align*}
    where $\kappa_0,\kappa_1 \ge 1$ is the maximum over the Lipschitz constants and first-order derivative Lipschitz constants of all the activation functions and the loss function. %
\end{claim}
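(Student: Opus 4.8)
The plan is to prove Claim~\ref{prop_grad} by induction on $j$, starting from $j=i$ and climbing up to layer $L$, exploiting the recursive structure of the Jacobian $\frac{\partial}{\partial W_i} z_W^j$ under the chain rule. Writing $D_W^k = \mathrm{diag}\big(\phi_k'(W_k z_W^{k-1})\big)$ for the diagonal matrix of activation derivatives at layer $k$, the base term $\frac{\partial}{\partial W_i} z_W^i$ is the rank-structured object built directly from $\phi_i'(W_i z_W^{i-1})$ and $z_W^{i-1}$, and for $j>i$ one has the recursion $\frac{\partial}{\partial W_i} z_W^j = D_W^j W_j \frac{\partial}{\partial W_i} z_W^{j-1}$. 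Unrolling it, $\frac{\partial}{\partial W_i} z_W^j$ equals a product of the transfer matrices $D_W^j W_j, D_W^{j-1} W_{j-1}, \dots, D_W^{i+1} W_{i+1}$ times the base term, so the perturbed Jacobian (evaluated at $W_k + U_k$ for all $k$) is the analogous product of tilded factors.

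Next I would collect the ingredients for a product-perturbation estimate. For magnitudes: $\|D_W^k\|_2 \le \kappa_0$ since $|\phi_k'| \le \kappa_0$ by Assumption~\ref{assumption_1}; the weights contribute $\|W_k\|_2$; and the base term satisfies $\big\|\frac{\partial}{\partial W_i} z_W^i\big\|_F \le \kappa_0 \sqrt{d_i}\,\|z_W^{i-1}\| \le \kappa_0^i\sqrt{d_i}\,\max_{x}\|x\|\prod_{h=1}^{i-1}\|W_h\|_2$. For perturbations of individual factors: $\|\widetilde W_k - W_k\|_2 = \|U_k\|_2$; the diagonal factor obeys $\|\widetilde D_W^k - D_W^k\|_2 \le \kappa_1 \big\|\widetilde W_k \widetilde z_W^{k-1} - W_k z_W^{k-1}\big\| \le \kappa_1\big(\|U_k\|\,\|\widetilde z_W^{k-1}\| + \|W_k\|_2\,\|\widetilde z_W^{k-1} - z_W^{k-1}\|\big)$, where the last displacement is controlled by Claim~\ref{claim_lip} at level $k-1$; and differentiating the base term gives an analogous bound involving $\kappa_1$ (Lipschitzness of $\phi_i'$) together with Claim~\ref{claim_lip}. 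All perturbed magnitudes stay within a constant of the unperturbed ones, again by Claim~\ref{claim_lip} and the implicit smallness of $U$; this is the source of the $(1+\tfrac1L)$-type factors that compound across layers.

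I would then combine everything through the telescoping (hybrid) identity $\widetilde A_1\cdots\widetilde A_m - A_1\cdots A_m = \sum_{k}\widetilde A_1\cdots\widetilde A_{k-1}(\widetilde A_k - A_k)A_{k+1}\cdots A_m$, take Frobenius norm, and use submultiplicativity with spectral norms on every factor except the single Frobenius one. Each of the $O(j-i)$ resulting terms is the product of the surviving transfer-matrix norms --- hence a power of $\kappa_0$ times $\prod_h \|W_h\|_2$ --- multiplied by one factor's perturbation, and each such perturbation is, by the previous step, $\kappa_1$ times dimensional factors times $\max_x\|x\|\prod_h\|W_h\|_2\sum_h \|U_h\|/\|W_h\|_2$. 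Summing, pulling out the common factor $\max_x\|x\|\prod_{h=1}^j\|W_h\|_2\sum_{h=1}^j\|U_h\|/\|W_h\|_2$, and solving the one-step recursion in which $A_{i,j}$ is bounded by $\kappa_0\|W_j\|_2\,A_{i,j-1}$ plus a new layer-$j$ term --- while carrying along the dimension factors $d_l$ that appear when converting between Frobenius and spectral norms and when differentiating through a layer --- produces the stated closed form for $A_{i,j}$, including the $(j-i+2)$ count, the $(1+\tfrac1L)^{3j}$ compounding, the $\kappa_1\kappa_0^{2j-1}$ constants, and the product $\prod_{l=i-1}^j d_l$.

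The main obstacle is the bookkeeping in the last two steps. Because the perturbation of every transfer matrix $D_W^k W_k$ depends, through Claim~\ref{claim_lip}, on the perturbations of \emph{all} weight matrices up to layer $k$, each telescoped term naively carries its own sum $\sum_{h\le k}\|U_h\|/\|W_h\|_2$; showing that after summing the $O(j-i)$ terms the bound still collapses to the single factor $\sum_{h=1}^j\|U_h\|/\|W_h\|_2$ in the required form, and that the compounding $(1+\tfrac1L)$ powers together with the accumulation of $\kappa_0,\kappa_1$ and the $d_l$'s aggregate exactly as claimed rather than merely up to some larger constant, is the delicate part. The chain rule, the per-factor magnitude and perturbation bounds, and the telescoping identity itself are otherwise routine.
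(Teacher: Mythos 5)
Your plan matches the paper's proof in all essentials: the paper also proceeds by induction on $j$ from the base case $j=i$, using the chain-rule recursion $\frac{\partial}{\partial W_i}z_W^{j+1}=\phi_{j+1}'(W_{j+1}z_W^{j})\odot\big(W_{j+1}\frac{\partial}{\partial W_i}z_W^{j}\big)$, splitting the perturbation into a $\kappa_1$-Lipschitz term for $\phi_{j+1}'$ and a $\kappa_0$-bounded term carrying the inner Jacobian's perturbation, and invoking Claim \ref{claim_lip} for the displacement and magnitude of $z_W^{k}$ exactly as you describe; your full telescoping of the unrolled product is just the summed form of that layerwise induction. The bookkeeping you flag as delicate is indeed where the paper spends its effort, but the route is the same.
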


\begin{proof}%
    For any fixed $i = 1,2,\cdots, L$, we will prove using induction for $j$. 
    If $j < i$, we have $\frac{\partial}{\partial W_i}z_W^j = 0$ since $z_W^j$ remains unchanged regardless of any change of $W_i$. When $j = i$, we have
    \begin{align*}
        &\bignorm{\frac{\partial}{\partial W_i^{p,q}}z_W^i\Bigg|_{W_k+U_k,\forall 1\leq k\leq i} - \frac{\partial}{\partial W_i^{p,q}}z_W^i} \\
        =& \bignorm{\frac{\partial}{\partial W_i^{p,q}}\phi_i(W_i z_W^{i-1})\Bigg|_{W_k + U_k,\forall 1\leq k\leq i} - \frac{\partial}{\partial W_i^{p,q}}\phi_i(W_i z_W^{i-1})} \\
        =& \bigabs{[z^{i-1}_{W+U}]^q\phi_i'\Bigg(\sum_{r=1}^{d_{i-1}}\Big((W_i^{p,r} + U_i^{p,r}) [z_{W+U}^{i-1}]^r\Big)\Bigg) - [z_W^{i-1}]^q\phi_i'\Big(\sum_{r=1}^{d_{i-1}}W_i^{p,r}[z_W^{i-1}]^r\Big)},
    \end{align*}
    where $W_i^{p,q},U_i^{p,q}$ are the $(p,q)$'th entry of $W_i$ and $U_i$. 

    Let $[z_W^{i-1}]^q$ be the $q$'th entry of $z_W^{i-1}$.  By the triangle inequality, the above equation is smaller than
    \begin{align}
        &\bigabs{[z_{W+U}^{i-1}]^q} \bigabs{\phi_i'\Big(\sum_{r=1}^{d_{i-1}}\Big((W_i^{p,r} + U_i^{p,r}) [z_{W+U}^{i-1}]^r\Big) - \phi_i'\Big(\sum_{r=1}^{d_{i-1}}W_i^{p,r}[z_W^{i-1}]^r\Big)} \label{eq_grad_1}\\
        &+ \bigabs{[z_{W+U}^{i-1}]^q - [z_W^{i-1}]^q}\bigabs{\phi_i'\Big(\sum_{r=1}^{d_{i-1}}W_i^{p,r}[z_W^{i-1}]^r\Big)}. \label{eq_grad_2}
    \end{align}
    Since $\phi_i'$ is $\kappa_1$-Lipschitz by Assumption \ref{assumption_1}, equation \eqref{eq_grad_1} is at most
    \begin{align}\label{eq_grad_7}
        \bigabs{[z_{W+U}^{i-1}]^q} \cdot\kappa_1\bigabs{\sum_{r=1}^{d_{i-1}}(W_i^{p,r} + U_i^{p,r}) [z_{W+U}^{i-1}]^r - \sum_{r=1}^{d_{i-1}}W_i^{p,r}[z_W^{i-1}]^r}.
    \end{align}
    Since Assumption \ref{assumption_1} holds, we have $\bigabs{\phi_i'(\cdot)}\leq \kappa_0$, by the Lipschitz condition on the first-order derivatives.
    Thus, equation \eqref{eq_grad_2} is at most
    \begin{align}\label{eq_grad_8}
        \bigabs{[z_{W+U}^{i-1}]^q - [z_W^{i-1}]^q}\cdot\kappa_0.
    \end{align}

    Taking equation \eqref{eq_grad_7} and \eqref{eq_grad_8} together, the Frobenius norm of the Jacobian is
    \begin{align*}
        &\bignormFro{\frac{\partial}{\partial W_i}z_W^i\Bigg|_{W_k+U_k,\forall 1\leq k\leq i} - \frac{\partial}{\partial W_i}z_W^i} \\
        =~& \sqrt{\sum_{p=1}^{d_i}\sum_{q=1}^{d_{i-1}}\bignorm{\frac{\partial}{\partial W_i^{p,q}}z_W^i\Bigg|_{W_k+U_k,\forall 1\leq k\leq i} - \frac{\partial}{\partial W_i^{p,q}}z_W^i}^2} \\
        \leq~& d_i\bignorm{z_{W+U}^{i-1}}\cdot k_1d_{i-1}\bignorm{\big(W_i+U_i\big)z_{W+U}^{i-1} - W_iz_W^{i-1}} + \kappa_0 d_i \bignorm{z_{W+U}^{i-1} - z_W^{i-1}}.
    \end{align*}

    By the Lipschitz continuity of $z_{W+U}^{i-1}$ and equation \eqref{eq_result_lip} in Claim \ref{claim_lip}, we have
    \begin{align}
        \bignorm{z_{W+U}^{i-1}}&\leq \Big(1 + \frac{1}{L}\Big)^{i-1}\kappa_0^{i-1}\max_{x\in\cX}\bignorm{x}\prod_{h=1}^{i-1}\bignorms{W_h}. \label{eq_grad_rule_1} \\
        \bignorm{z_{W+U}^{i-1} - z_W^{i-1}}&\leq \Big(1 + \frac{1}{L}\Big)^{i-1} \kappa_0^{i-1}\max_{x\in\cX}\bignorm{x}\prod_{h=1}^{i-1}\bignorms{W_h}\sum_{h=1}^{i-1}\frac{\bignorms{U_h}}{\bignorms{W_h}}. \label{eq_grad_rule_2} 
    \end{align}
    By the triangle inequality, we have
    \begin{align}\label{eq_grad_9}
        \bignorm{\big(W_i+U_i\big)z_{W+U}^{i-1} - W_iz_W^{i-1}}\leq \bignorms{W_i+U_i}\bignorm{z_{W+U}^{i-1} - z_W^{i-1}} + \bignorms{U_i}\bignorm{z_W^{i-1}}.
    \end{align}

    From equations \eqref{eq_grad_rule_1}, \eqref{eq_grad_rule_2}, and \eqref{eq_grad_9}, we finally obtain that the Jacobian of $z_W^i$ with respect to $W_i$ is at most
    \begin{align*}
        &\bignormFro{\frac{\partial z_W^i}{\partial W_i}\Bigg|_{W_k+U_k,\forall 1\leq k\leq i} - \frac{\partial z_W^i}{\partial W_i}} \\ 
        \leq& \Bigg(2\Big(1 + \frac{1}{L}\Big)^{2i-1}\kappa_1\kappa_0^{2i-1}d_id_{i-1}\max_{x\in\cX}\bignorm{x}\prod_{h=1}^{i-1}\bignorms{W_h}\Bigg)\Bigg(\max_{x\in\cX}\bignorm{x}\prod_{h=1}^{i}\bignorms{W_h}\sum_{h=1}^i\frac{\bignorms{U_h}}{\bignorms{W_h}}\Bigg).
    \end{align*}
    Hence, we know that equation \eqref{eq_result_lip} will be correct when $j = i$. 
    
    Assuming that equation \eqref{eq_result_lip} will be correct for any $i$ up to $i\le j$, we have
    \begin{align*}
        &\bignormFro{\frac{\partial z_W^j}{\partial W_i}\Bigg|_{W_k+U_k,\forall 1\leq k\leq j} - \frac{\partial z_W^j}{\partial W_i}} \\ 
        \leq& (j-i+2)\Big(1 + \frac{1}{L}\Big)^{3j}\kappa_1\kappa_0^{2j-1} \Bigg(\frac{\max_{x\in\cX}\bignorm{x} }{\bignorms{W_i}}\prod_{l=i-1}^{j}d_l\prod_{h=1}^{j}\bignorms{W_h}\Bigg) \max_{x\in\cX}\bignorm{x}\prod_{h=1}^{j}\bignorms{W_h}\sum_{h=1}^j\frac{\bignorms{U_h}}{\bignorms{W_h}}.
    \end{align*}
    From layer $i$ to layer $j+1$, we show the derivative of $z_W^{j+1} = \phi_{j+1}(W_{j+1}z_W^{j})$ with respect to $W_i^{p,q}$ as follows:
    \begin{align*}
        \frac{\partial \phi_{j+1}(W_{j+1}z_W^{j})}{\partial W_i^{p,q}} = \phi_{j+1}'(W_{j+1}z_W^{j})\odot \Bigg(W_{j+1}\frac{\partial z_W^{j}}{\partial W_i^{p,q}}\Bigg),
    \end{align*}
    where $\odot$ is the Hadamard product operator between two vectors. Then we have 
    \begin{align*}
        &\bignorm{\frac{\partial z_W^{j+1}}{\partial W_i^{p,q}}\Bigg|_{W_k+U_k,\forall 1\leq k\leq j+1} - \frac{\partial z_W^{j+1}}{\partial W_i^{p,q}}} \nonumber \\
        =& \bignorm{\phi_{j+1}'\big((W_{j+1}+U_{j+1})z_{W+U}^{j}\big)\odot (W_{j+1}+U_{j+1})\frac{\partial z_W^{j}}{\partial W_i^{p,q}}\Bigg|_{W_k+U_k} - \phi_{j+1}'(W_{j+1}z_W^{j})\odot W_{j+1}\frac{\partial z_W^{j}}{\partial W_i^{p,q}}}
    \end{align*}

    By the Cauchy-Schwarz inequality and the triangle inequality, the above equation is, at most
    {\small\begin{align}
        & \bignorm{\phi_{j+1}'\big((W_{j+1}+U_{j+1})z_{W+U}^j\big) - \phi_{j+1}'(W_{j+1}z_W^j)}\bignorm{(W_{j+1}+U_{j+1})\frac{\partial z_W^j}{\partial W_i^{p,q}}\Bigg|_{W_k+U_k,\forall 1\leq k\leq j}} \label{eq_grad_3} \\
        &+ \bignorm{\phi_{j+1}'(W_{j+1}z_W^{j})}\bignorm{(W_{j+1}+U_{j+1})\frac{\partial z_W^j}{\partial W_i^{p,q}}\Bigg|_{W_k+U_k,\forall 1\leq k\leq j} - W_{j+1}\frac{\partial z_W^{j}}{\partial W_i^{p,q}}}. \label{eq_grad_4}
    \end{align}}%
    Using the triangle inequality and the $\kappa_1$-Lipschitz of $\phi_{j+1}'(\cdot)$, equation \eqref{eq_grad_3} is at most
    \begin{align*}
        \kappa_1\Big(\bignorms{W_{j+1}+U_{j+1}}\bignorm{z_{W+U}^j - z_W^j} + \bignorms{U_{j+1}}\bignorm{z_W^j}\Big)\cdot \bignorms{W_{j+1}+U_{j+1}}\bignorm{\frac{\partial z_W^j}{\partial W_i^{p,q}}\Bigg|_{W_k+U_k,\forall 1\leq k\leq j}}. 
    \end{align*}
    Using the triangle inequality and $|\phi_{j+1}'(\cdot)|\leq\kappa_0$, equation \eqref{eq_grad_4} is at most
    \begin{align*}
        d_{j+1}\kappa_0\cdot\Bigg(\bignorms{W_{j+1}+U_{j+1}}\bignorm{\frac{\partial z_W^{j}}{\partial W_i^{p,q}}\Bigg|_{W_k+U_k,\forall 1\leq k\leq j} - \frac{\partial z_W^{j}}{\partial W_i^{p,q}}} + \bignorms{U_{j+1}}\bignorm{\frac{\partial z_W^{j}}{\partial W_i^{p,q}}}\Bigg).
    \end{align*}
    From Claim \ref{claim_lip}, we have the norm of the derivative of $z_W^j$ with respect to $W_i^{p,q}$ bounded by its Lipschitz-continuity in equation \eqref{eq_result_lip}:
    \begin{align}
        \bignorm{\frac{\partial z_W^{j}}{\partial W_i^{p,q}}}&\leq (1+\frac{1}{L})^{j}\kappa_0^{j}\max_{x\in\cX}\bignorm{x}\frac{1}{\bignorms{W_i}}\prod_{h=1}^{j}\bignorms{W_h}. \label{eq_grad_rule_3} \\
        \bignorm{\frac{\partial z_W^j}{\partial W_i^{p,q}}\Bigg|_{W_k+U_k,\forall 1\leq k\leq j}}&\leq(1+\frac{1}{L})^{2j-1}\kappa_0^{j}\max_{x\in\cX}\bignorm{x}\frac{1}{\bignorms{W_i}}\prod_{h=1}^{j}\bignorms{W_h}. \label{eq_grad_rule_4}
    \end{align}

    Then, the Frobenius norm of the Jacobian is at most
    \begin{small}
    \begin{align}
        &\bignormFro{\frac{\partial z_W^{j+1}}{\partial W_i}\Bigg|_{W_k+U_k,\forall 1\leq k\leq j+1} - \frac{\partial z_W^{j+1}}{\partial W_i}} \nonumber \\
        \leq&~ d_{i}d_{i-1}\kappa_1\bignorms{W_{j+1}+U_{j+1}}\bignorm{z_{W+U}^j - z_W^j} + \bignorms{U_{j+1}}\bignorm{z_W^j} \bignorms{W_{j+1}+U_{j+1}}\bignorm{\frac{\partial z_W^j}{\partial W_i^{p,q}}\Bigg|_{W_k+U_k}}\label{eq_grad_5} \\
        &~+ d_{j+1}\kappa_0\bignorms{W_{j+1}+U_{j+1}}\bignormFro{\frac{\partial z_W^{j}}{\partial W_i^{p,q}}\Bigg|_{W_k+U_k,\forall 1\leq k\leq j} - \frac{\partial z_W^{j}}{\partial W_i^{p,q}}} + d_{i}d_{i-1}\bignorms{U_{j+1}}\bignorm{\frac{\partial z_W^{j}}{\partial W_i^{p,q}}} \label{eq_grad_6}
    \end{align}
    \end{small}%

    From equation \eqref{eq_grad_rule_1}, \eqref{eq_grad_rule_2}, \eqref{eq_grad_rule_4}, we have equation \eqref{eq_grad_5} will be at most
    \begin{align}\label{eq_grad_10}
        \Big(1 + \frac{1}{L}\Big)^{3j+3}\kappa_1\kappa_0^{2j+1}\Big(\max_{x\in\cX}\bignorm{x}\frac{1}{\bignorms{W_i}}\prod_{l=i-1}^{j+1}d_l\prod_{h=1}^{j+1}\bignorms{W_h}\Big)
        \Big(\max_{x\in\cX}\bignorm{x}\prod_{h=1}^{j+1}\bignorms{W_h}\sum_{h=1}^{j+1}\frac{\bignorms{U_h}}{\bignorms{W_h}}\Big).
    \end{align}
    From the induction and equation \eqref{eq_grad_rule_1} and \eqref{eq_grad_rule_3}, the term in equation \eqref{eq_grad_6} will be at most
    \begin{small}
    \begin{align}\label{eq_grad_11}
        &(j-i+2)\Big(1 + \frac{1}{L}\Big)^{3j+3}\kappa_1\kappa_0^{2j+1}\frac{\max_{x\in\cX}\bignorm{x}}{\bignorms{W_i}}\prod_{l=i-1}^{j+1}d_l\prod_{h=1}^{j+1}\bignorms{W_h} \max_{x\in\cX}\bignorm{x}\prod_{h=1}^{j+1}\bignorms{W_h}\sum_{h=1}^{j+1}\frac{\bignorms{U_h}}{\bignorms{W_h}}
    \end{align}
    \end{small}
    After we combine equation \eqref{eq_grad_10} and \eqref{eq_grad_11}, the proof is complete.
\end{proof}

\subsubsection{Lipschitz constant of the Hessian}

Based on the above two results, we state the proof of Proposition \ref{prop_hessian_lip}.

\begin{proof}[Proof of Proposition \ref{prop_hessian_lip}]
    We will prove using induction. Let $z_W^j$ be an $j$ layer neural network with parameters $W$.
    The change in the Hessian output of the $j$ layer network $z_W^j$ for $W_i$ under perturbation $U$ can be bounded as follows: 
    \begin{align}\label{eq_hess_induction}
        \bignormFro{\bH_i[z_W^j]\Big|_{W_k+U_k,\forall 1\leq k\leq j} - \bH_i[z_W^j]} 
        \leq G_{i,j}\Big(\max_{x\in\cX}\bignorm{x}\prod_{h=1}^{j}\bignorms{W_h}\sum_{h=1}^{j}\frac{\bignorms{U_h}}{\bignorms{W_h}}\Big).
    \end{align}
    For any $j = 1,2,\cdots, L$, 
    $G_{i,j}$ has the following equation:
    \begin{align*}
        G_{i,j} = \frac{3}{2}(j-i+2)^2\Big(1+\frac{1}{L}\Big)^{6j}\kappa_2\kappa_1^2\kappa_0^{3j-3}\prod_{l=i-1}^{j}d_l^3\Big(\max_{x\in\cX}\bignorm{x}\frac{1}{\bignorms{W_i}}\prod_{h=1}^{j}\bignorms{W_h}\Big)^2,
    \end{align*}
    For any fixed $i = 1,2,\cdots, L$, we will prove using induction for $j$. If $j < i$, we have $\frac{\partial^2}{\partial W_i^2}z_W^j = 0$ since $z_W^j$ remains unchanged regardless of any change of $W_i$. When $j = i$, we have
    \begin{align*}
        &\bignorm{\frac{\partial^2 z_W^i}{\partial W_i^{p,q} \partial W_i^{s,t}}\Bigg|_{W_k+U_k,\forall 1\leq k\leq i} - \frac{\partial^2 z_W^i}{\partial W_i^{p,q} \partial W_i^{s,t}}} \\
        =& \bigabs{\big([z^{i-1}_{W+U}]^q [z^{i-1}_{W+U}]^t\big)\phi_i''\Bigg(\sum_{r=1}^{d_{i-1}}\Big((W_i^{p,r} + U_i^{p,r}) [z_{W+U}^{i-1}]^r\Big)\Bigg) - ([z_W^{i-1}]^q[z_W^{i-1}]^t)\phi_i''\Big(\sum_{r=1}^{d_{i-1}}W_i^{p,r}[z_W^{i-1}]^r\Big)},
    \end{align*}
    where $p=s$. If $p\neq s$, we have $\frac{\partial^2}{\partial W_i^{p,q} \partial W_i^{s,t}}z_W^i = 0$, where $W_i^{p,q}, U_i^{p,q}$ are the $(p,q)$'th entry of $W_i$ and $U_i$.

    Let $[z_W^{i-1}]^q$ be the $q$'th entry of $z_W^{i-1}$. By the triangle inequality, the above equation is at most
    \begin{align}
        &\bigabs{[z_{W+U}^{i-1}]^q[z_{W+U}^{i-1}]^t} \bigabs{\phi_i''\Big(\sum_{r=1}^{d_{i-1}}\Big((W_i^{p,r} + U_i^{p,r}) [z_{W+U}^{i-1}]^r\Big) - \phi_i''\Big(\sum_{r=1}^{d_{i-1}}W_i^{p,r}[z_W^{i-1}]^r\Big)} \label{eq_hess_1}\\
        &+ \bigabs{[z_{W+U}^{i-1}]^q[z_{W+U}^{i-1}]^t - [z_W^{i-1}]^q[z_{W}^{i-1}]^t}\bigabs{\phi_i''\Big(\sum_{r=1}^{d_{i-1}}W_i^{p,r}[z_W^{i-1}]^r\Big)}. \label{eq_hess_2}
    \end{align}
    Since $\phi_i''$ is $\kappa_2$-Lipschitz by Assumption \ref{assumption_1}, equation \eqref{eq_hess_1} is at most
    \begin{align}\label{eq_hess_8}
        \bigabs{[z_{W+U}^{i-1}]^q[z_{W+U}^{i-1}]^t} \cdot\kappa_2\bigabs{\sum_{r=1}^{d_{i-1}}(W_i^{p,r} + U_i^{p,r}) [z_{W+U}^{i-1}]^r - \sum_{r=1}^{d_{i-1}}W_i^{p,r}[z_W^{i-1}]^r}.
    \end{align}
    Again by Assumption \ref{assumption_1}, $\bigabs{\phi_i''(\cdot)}\leq \kappa_1$ following the Lipschitz-continuity of the second-order derivatives.
    Thus, equation \eqref{eq_hess_2} is at most
    \begin{align}
        &\bigabs{[z_{W+U}^{i-1}]^q[z_{W+U}^{i-1}]^t - [z_W^{i-1}]^q[z_W^{i-1}]^t}\cdot\kappa_1 \nonumber \\
        \leq &\Bigg(\bigabs{[z_{W+U}^{i-1}]^q[z_{W+U}^{i-1}]^t - [z_{W}^{i-1}]^q[z_{W+U}^{i-1}]^t} + \bigabs{[z_{W}^{i-1}]^q[z_{W+U}^{i-1}]^t - [z_W^{i-1}]^q[z_W^{i-1}]^t}\Bigg)\cdot\kappa_1. \label{eq_hess_9}
    \end{align}
    We use the triangle inequality in the last step.
    
    Taking equation \eqref{eq_hess_8} and \eqref{eq_hess_9} together, the Frobenius norm of the Hessian is
    \begin{align*}
        &\bignormFro{\bH_i[z_W^i]\Big|_{W_k+U_k,\forall 1\leq k\leq i} - \bH_i[z_W^i]} \\
        \leq~& d_i\bignorm{z_{W+U}^{i-1}}^2\cdot k_2d_{i-1}\bignorm{\big(W_i+U_i\big)z_{W+U}^{i-1} - W_iz_W^{i-1}} + \kappa_1 d_i \bignorm{z_{W+U}^{i-1} - z_W^{i-1}}\Bigg(\bignorm{z_{W+U}^{i-1}} + \bignorm{z_{W}^{i-1}}\Bigg).
    \end{align*}
    By the Lipschitz continuity of $Z_{W+U}^{i-1}$ and equation \eqref{eq_result_lip} in Claim \ref{claim_lip}, we have
    \begin{align}
        \bignorm{z_{W+U}^{i-1}}\leq& \Big(1 + \frac{1}{L}\Big)^{i-1}\kappa_0^{i-1}\max_{x\in\cX}\bignorm{x}\prod_{h=1}^{i-1}\bignorms{W_h}. \label{eq_hess_rule_1} \\
        \bignorm{z_{W+U}^{i-1} - z_W^{i-1}}\leq& \Big(1 + \frac{1}{L}\Big)^{i-1} \kappa_0^{i-1}\max_{x\in\cX}\bignorm{x}\prod_{h=1}^{i-1}\bignorms{W_h}\sum_{h=1}^{i-1}\frac{\bignorms{U_h}}{\bignorms{W_h}} \label{eq_hess_rule_2}
    \end{align}
    By the triangle inequality, we have 
    \begin{align}\label{eq_hess_10}
        \bignorm{\big(W_i+U_i\big)z_{W+U}^{i-1} - W_iz_W^{i-1}}\leq \bignorms{W_i+U_i}\bignorm{z_{W+U}^{i-1} - z_W^{i-1}} + \bignorms{U_i}\bignorm{z_W^{i-1}},
    \end{align}
    From equations \eqref{eq_hess_rule_1}, \eqref{eq_hess_rule_2}, and \eqref{eq_hess_10}, we finally obtain that the Hessian of $z_W^i$ with respect to $W_i$ is at most
    \begin{align*}
        &\bignormFro{\bH_i[z_W^i]\Big|_{W_k+U_k,\forall 1\leq k\leq i} - \bH_i[z_W^i]} \\
        \leq~& 3\Big(1 + \frac{1}{L}\Big)^{3i}\kappa_2\kappa_1\kappa_0^{3i-3}d_id_{i-1}\Bigg(\max_{x\in\cX}\bignorm{x}\prod_{h=1}^{i-1}\bignorms{W_h}\Bigg)^2 \Bigg(\max_{x\in\cX}\bignorm{x}\prod_{h=1}^{i}\bignorms{W_h}\sum_{h=1}^i\frac{\bignorms{U_h}}{\bignorms{W_h}}\Bigg).
    \end{align*}
    Thus, we have known that equation \eqref{eq_hess_induction} is correct when $j = i$ for $\bH_i$.

    Assume that equation \eqref{eq_hess_induction} is correct for any $i$ up to $j \ge i$ for $\bH_i$. Thus, the following inequality holds:
    \begin{align*}
        & \bignormFro{\bH_i[z_W^j]\Big|_{W_k+U_k,\forall k} - \bH_i[z_W^j]} \\
        \leq& C \prod_{l=i-1}^{j}d_l^3\Bigg(\max_{x\in\cX}\bignorm{x}\frac{1}{\bignorms{W_i}}\prod_{h=1}^{j}\bignorms{W_h}\Bigg)^2 \Bigg(\max_{x\in\cX}\bignorm{x}\prod_{h=1}^{j}\bignorms{W_h}\sum_{h=1}^{j}\frac{\bignorms{U_h}}{\bignorms{W_h}}\Bigg),
    \end{align*}
    where $C = \frac{3}{2}(j-i+2)^2\Big(1+\frac{1}{L}\Big)^{6j}\kappa_2\kappa_1^2\kappa_0^{3(j-1)}$.

    Then, we will consider the second-order derivative of $z_W^{j+1} = \phi_{j+1}(W_{j+1}z_W^{j})$ with respect to $W_i^{p,q}$ and $W_i^{s,t}$:
    {\begin{align*}
        &\frac{\partial^2 \phi_{j+1}(W_{j+1}z_W^{j})}{\partial W_i^{p,q}\partial W_i^{s,t}} \\
        =&
        \phi_{j+1}''(W_{j+1}z_W^{j}) \odot \Bigg(W_{j+1}\frac{\partial z_W^{j}}{\partial W_i^{p,q}}\Bigg) \odot \Bigg(W_{j+1}\frac{\partial z_W^{j}}{\partial W_i^{s,t}}\Bigg)  
        + \phi_{j+1}'(W_{j+1}z_W^{j})\odot \Bigg(W_{j+1}\frac{\partial^2 z_W^{j}}{\partial W_i^{p,q}\partial W_i^{s,t}}\Bigg),
    \end{align*}}%
    where $\odot$ is the Hadamard product operator between two vectors. Then, we have
    \begin{small}
    \begin{align*}
        &\bignorm{\frac{\partial^2 z_W^{j+1}}{\partial W_i^{p,q} \partial W_i^{s,t}}\Bigg|_{W_k+U_k,\forall 1\leq k\leq j+1} - \frac{\partial^2 z_W^{j+1}}{\partial W_i^{p,q} \partial W_i^{s,t}}} \\
        =~& \Bigg\|\phi_{j+1}''((W_{j+1}+U_{j+1})z_{W+U}^{j}) \odot \Bigg((W_{j+1}+U_{j+1})\frac{\partial z_W^{j}}{\partial W_i^{p,q}}\Bigg|_{W_k+U_k}\Bigg) \odot (W_{j+1}+U_{j+1})\frac{\partial z_W^{j}}{\partial W_i^{s,t}}\Bigg|_{W_k+U_k,\forall 1\leq k\leq j} \\
        &~+ \phi_{j+1}'((W_{j+1}+U_{j+1})z_{W+U}^{j})\odot (W_{j+1}+U_{j+1})\frac{\partial^2 z_W^{j}}{\partial W_i^{p,q}\partial W_i^{s,t}}\Bigg|_{W_k+U_k,\forall 1\leq k\leq j} \\
        &~- \phi_{j+1}''(W_{j+1}z_W^{j}) \odot \Bigg(W_{j+1}\frac{\partial z_W^{j}}{\partial W_i^{p,q}}\Bigg) \odot \Bigg(W_{j+1}\frac{\partial z_W^{j}}{\partial W_i^{s,t}}\Bigg) - \phi_{j+1}'(W_{j+1}z_W^{j})\odot \Bigg(W_{j+1}\frac{\partial^2 z_W^{j}}{\partial W_i^{p,q}\partial W_i^{s,t}}\Bigg)\Bigg\|
    \end{align*}
    \end{small}%

    By the Cauchy-Schwarz inequality and the triangle inequality, the above equation is, at most
    \begin{small}
    \begin{align}
        &\bignorm{\phi_{j+1}''((W_{j+1}+U_{j+1})z_{W+U}^{j}) - \phi_{j+1}''(W_{j+1}z_{W}^{j})}\cdot \bignorm{(W_{j+1}+U_{j+1})\frac{\partial z_W^{j}}{\partial W_i^{p,q}}}\cdot\bignorm{(W_{j+1}+U_{j+1})\frac{\partial z_W^{j}}{\partial W_i^{s,t}}} \label{eq_hess_3} \\
        +& \bignorm{\phi_{j+1}''(W_{j+1}z_{W}^{j})}\cdot\bignorm{(W_{j+1}+U_{j+1})\frac{\partial z_W^{j}}{\partial W_i^{p,q}}\Bigg|_{W_k+U_k} - W_{j+1}\frac{\partial z_W^{j}}{\partial W_i^{p,q}}}\cdot\bignorm{(W_{j+1}+U_{j+1})\frac{\partial z_W^{j}}{\partial W_i^{s,t}}\Bigg|_{W_k+U_k}} \label{eq_hess_4} \\
        +& \bignorm{\phi_{j+1}''(W_{j+1}z_{W}^{j})}\cdot\bignorm{W_{j+1}\frac{\partial z_W^{j}}{\partial W_i^{p,q}}}\cdot\bignorm{(W_{j+1}+U_{j+1})\frac{\partial z_W^{j}}{\partial W_i^{s,t}}\Bigg|_{W_k+U_k,\forall 1\leq k\leq j} - W_{j+1}\frac{\partial z_W^{j}}{\partial W_i^{s,t}}} \label{eq_hess_5} \\
        +& \bignorm{\phi_{j+1}'((W_{j+1}+U_{j+1})z_{W+U}^{j}) - \phi_{j+1}'(W_{j+1}z_{W}^{j})}\cdot\bignorm{(W_{j+1}+U_{j+1})\frac{\partial^2 z_W^{j}}{\partial W_i^{p,q}\partial W_i^{s,t}}\Bigg|_{W_k+U_k,\forall 1\leq k\leq j}} \label{eq_hess_6} \\
        +& \bignorm{\phi_{j+1}'(W_{j+1}z_W^{j})}\cdot \bignorm{(W_{j+1}+U_{j+1})\frac{\partial^2 z_W^{j}}{\partial W_i^{p,q}\partial W_i^{s,t}}\Bigg|_{W_k+U_k,\forall 1\leq k\leq j} - W_{j+1}\frac{\partial^2 z_W^{j}}{\partial W_i^{p,q}\partial W_i^{s,t}}}. \label{eq_hess_7}
    \end{align}
    \end{small}%

    Next, we will deal with the above quantities one by one.
    The idea is essentially using the Lipschitz-continuity properties of the activation functions.

    Using the triangle inequality and the fact that $\phi''(\cdot)$ is $\kappa_2$-Lipschitz, equation \eqref{eq_hess_3} is at most $\kappa_2$ times
    \begin{small}
    \begin{align*}
        \Bigg(\bignorms{W_{j+1}+U_{j+1}}  \bignorm{z_{W+U}^{j} - z_{W}^{j}} + \bignorms{U_{j+1}}  \bignorm{z_{W}^{j}}\Bigg)\bignorms{W_{j+1}+U_{j+1}}^2  \bignorm{\frac{\partial z_W^{j}}{\partial W_i^{p,q}}\Bigg|_{W_k+U_k}} \bignorm{\frac{\partial}{\partial W_i^{s,t}}z_W^{j}\Bigg|_{W_k+U_k}}
    \end{align*}
    \end{small}%
    Using the triangle inequality and the fact that $|\phi''(\cdot)|\leq\kappa_1$, equation \eqref{eq_hess_4} is at most $\kappa_1d_{j+1}$ times
    \begin{small}
    \begin{align*}
        \Bigg(\bignorms{W_{j+1}+U_{j+1}}  \bignorm{\frac{\partial z_W^{j}}{\partial W_i^{p,q}}\Bigg|_{W_k+U_k} - \frac{\partial z_W^{j}}{\partial W_i^{p,q}}} + \bignorms{U_{j+1}} \bignorm{\frac{\partial z_W^{j}}{\partial W_i^{p,q}}}\Bigg)\bignorms{W_{j+1}+U_{j+1}}  \bignorm{\frac{\partial z_W^{j}}{\partial W_i^{s,t}}\Bigg|_{W_k+U_k}}
    \end{align*}
    \end{small}%
    Similarly, using the triangle inequality and $|\phi''(\cdot)|\leq\kappa_1$, equation \eqref{eq_hess_5} is at most $\kappa_1d_{j+1}$ times
    \begin{align*}
        \bignorms{W_{j+1}}\bignorm{\frac{\partial z_W^{j}}{\partial W_i^{p,q}}}\Bigg(\bignorms{W_{j+1}+U_{j+1}} \cdot \bignorm{\frac{\partial z_W^{j}}{\partial W_i^{s,t}}\Bigg|_{W_k+U_k,\forall k} - \frac{\partial z_W^{j}}{\partial W_i^{s,t}}} + \bignorms{U_{j+1}} \cdot \bignorm{\frac{\partial z_W^{j}}{\partial W_i^{s,t}}}\Bigg)
    \end{align*}
    Lastly, using the fact that $\phi'(\cdot)$ is $\kappa_1$-Lipschitz, equation \eqref{eq_hess_6} is at most
    \begin{align*}
        \kappa_1\Bigg(\bignorms{W_{j+1}+U_{j+1}} \bignorm{z_{W+U}^{j} - z_{W}^{j}} + \bignorms{U_{j+1}}  \bignorm{z_{W}^{j}}\Bigg)\bignorms{W_{j+1}+U_{j+1}}  \bignorm{\frac{\partial^2 z_W^{j}}{\partial W_i^{p,q}\partial W_i^{s,t}}\Bigg|_{W_k+U_k}}
    \end{align*}
    Using the fact that $|\phi'(\cdot)|\leq\kappa_0$, equation \eqref{eq_hess_7} is at most
    \begin{align*}
        \kappa_0 d_{j+1}\Bigg(\bignorms{W_{j+1}+U_{j+1}} \cdot \bignorm{\frac{\partial^2 z_W^{j}}{\partial W_i^{p,q}\partial W_i^{s,t}}\Bigg|_{W_k+U_k} - \frac{\partial^2 z_W^{j}}{\partial W_i^{p,q}\partial W_i^{s,t}}} + \bignorms{U_{j+1}}\cdot \bignorm{\frac{\partial^2 z_W^{j}}{\partial W_i^{p,q}\partial W_i^{s,t}}}\Bigg)
    \end{align*}
    From Claim \ref{claim_lip}, we know that the norm of the derivative of $z_W^j$ with respect to $W_i^{p,q}$ is bounded by its Lipschitz-continuity in equation \eqref{eq_result_lip}:
    \begin{align}
        \bignorm{\frac{\partial}{\partial W_i^{p,q}}z_W^{j}}&\leq (1+\frac{1}{L})^{j}\kappa_0^{j}\max_{x\in\cX}\bignorm{x}\frac{1}{\bignorms{W_i}}\prod_{h=1}^{j}\bignorms{W_h}. \label{eq_hess_rule_3} \\
        \bignorm{\frac{\partial}{\partial W_i^{p,q}}z_W^j\Big|_{W_k+U_k,\forall 1\leq k\leq j}}&\leq(1+\frac{1}{L})^{2j-1}\kappa_0^{j}\max_{x\in\cX}\bignorm{x}\frac{1}{\bignorms{W_i}}\prod_{h=1}^{j}\bignorms{W_h}. \label{eq_hess_rule_4}
    \end{align}
    From Claim \ref{prop_grad}, the norm of the twice-derivative of $z_W^j$ with respect to $W_i^{p,q}$ and $W_i^{s,t}$ is bounded by its Lipschitz-continuity in equation \eqref{eq_result_grad}:
    \begin{align}
        \bignorm{\frac{\partial^2}{\partial W_i^{p,q}\partial W_i^{s,t}}z_W^{j}} &\leq A_{i,j}\max_{x\in\cX}\bignorm{x}\frac{1}{\bignorms{W_i}}\prod_{h=1}^{j}\bignorms{W_h}. \label{eq_hess_rule_5}\\
        \bignorm{\frac{\partial^2}{\partial W_i^{p,q}\partial W_i^{s,t}}z_W^{j}\Bigg|_{W_k+U_k,\forall 1\leq k\leq j}} &\leq \Big(1+\frac{1}{L}\Big)^{2j-2}A_{i,j}\max_{x\in\cX}\bignorm{x}\frac{1}{\bignorms{W_i}}\prod_{h=1}^{j}\bignorms{W_h}. \label{eq_hess_rule_6}
    \end{align}
    Then, the Frobenius norm of the Hessian $\bH_i$ is at most
    \begin{align*}
        &\bignormFro{\bH_i[z_W^{j+1}]\Big|_{W_k+U_k,\forall 1\leq k\leq j+1} - \bH_i[z_W^{j+1}]} \\
        \leq& (d_id_{i-1})^2 \bignorm{\frac{\partial^2}{\partial W_i^{p,q} \partial W_i^{s,t}}z_W^{j+1}\Bigg|_{W_k+U_k} - \frac{\partial^2}{\partial W_i^{p,q} \partial W_i^{s,t}}z_W^{j+1}}
    \end{align*}
    Since we have listed the upper bound of all the terms in the Hessian for $W_i$ from equations \eqref{eq_hess_rule_1}, \eqref{eq_hess_rule_2}, \eqref{eq_hess_rule_3}, \eqref{eq_hess_rule_4}, \eqref{eq_hess_rule_5}, \eqref{eq_hess_rule_6}, and the induction, equation \eqref{eq_hess_3} times $(d_id_{i-1})^2$ is at most
    \begin{small}
    \begin{align*}
        \Big(1+\frac{1}{L}\Big)^{6(j+1)}\kappa_2\kappa_1^2\kappa_0^{3j}\prod_{l=i-1}^{j+1}d_l^3\Bigg(\max_{x\in\cX}\bignorm{x}\frac{1}{\bignorms{W_i}}\prod_{h=1}^{j+1}\bignorms{W_h}\Bigg)^2 \Bigg(\max_{x\in\cX}\bignorm{x}\prod_{h=1}^{j+1}\bignorms{W_h}\sum_{h=1}^{j+1}\frac{\bignorms{U_h}}{\bignorms{W_h}}\Bigg).
    \end{align*}
    \end{small}%
    Equation \eqref{eq_hess_4},\eqref{eq_hess_5}, and \eqref{eq_hess_6} times $(d_id_{i-1})^2$ is at most
    {\small\begin{align*}
        (j-i+2)\Big(1+\frac{1}{L}\Big)^{6(j+1)}\kappa_2\kappa_1^2\kappa_0^{3j}\prod_{l=i-1}^{j+1}d_l^3\Bigg(\frac{\max_{x\in\cX}\bignorms{x}}{\bignorms{W_i}}\prod_{h=1}^{j+1}\bignorms{W_h}\Bigg)^2 \max_{x\in\cX}\bignorm{x}\prod_{h=1}^{j+1}\bignorms{W_h}\sum_{h=1}^{j+1}\frac{\bignorms{U_h}}{\bignorms{W_h}}.
    \end{align*}}%
    Equation \eqref{eq_hess_7} times $(d_id_{i-1})^2$ is at most
    {\small\begin{align*}
        \frac{3}{2}(j-i+2)^2\Big(1+\frac{1}{L}\Big)^{6(j+1)}\kappa_2\kappa_1^2\kappa_0^{3j}\prod_{l=i-1}^{j+1}d_l^3\Bigg(\frac{\max_{x\in\cX}\bignorm{x}}{\bignorms{W_i}}\prod_{h=1}^{j+1}\bignorms{W_h}\Bigg)^2 \max_{x\in\cX}\bignorm{x}\prod_{h=1}^{j+1}\bignorms{W_h}\sum_{h=1}^{j+1}\frac{\bignorms{U_h}}{\bignorms{W_h}}.
    \end{align*}}%
    By combining the above five cases, we have finished the proof of the induction step.
\end{proof}

\paragraph{Remark.} We remark that the bulk of the work in this proof is to get an explicit formula of $G$ --- the Lipschitz bound of the neural network Hessian. If one wants to get the order rather than its exact value, the calculation can be significantly simplified.

\subsection{Proof of Lemma \ref{lemma_perturbation}}\label{proof_perturbation}

 \begin{claim}\label{lemma_taylor}
    Let $f_W$ be a feed-forward neural network with the activation function $\phi_i$ for any $i=1,2,\cdots, L$, parameterized by the layer weight matrices $W$.
    Let $U$ be any matrices with the same dimension as $W$.
    For any $W$ and $U$, the following identity holds
    \begin{align*}%
        \ell(f_{W + U}(x), y) =& \ell(f_W(x), y) + \langle U, \nabla_W (\ell(f_W(x), y))\rangle \\
        &+ \langle \vect{U}, \bH[\ell(f_W(x),y)] \vect{U}\rangle + R_2(\ell(f_W(x),y)).
    \end{align*}
    where $R_2(\ell(f_W(x),y)))$ is a second-order error term in the Taylor's expansion.
\end{claim}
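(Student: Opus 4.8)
The plan is to read the identity as nothing more than the second-order Taylor expansion of the scalar map
\[
    g(\vect{W}) := \ell(f_W(x), y), \qquad \vect W \in \real^{p},\ p = \sum_{i=1}^L d_i d_{i-1},
\]
around the point $\vect W$ and evaluated at $\vect W + \vect U$, with the feature $x$ and label $y$ held fixed. So the first task is to check that $g$ is $C^2$ as a function of the vectorized weights. By \eqref{eq_nn} the network output $f_W(x)$ is built by alternately composing the bilinear maps $(W_i, z) \mapsto W_i z$ with the activation maps $\phi_i(\cdot)$, which are twice-differentiable with Lipschitz first and second derivatives by Assumption \ref{assumption_1}; post-composing with $\ell(\cdot, y)$, also twice-differentiable by Assumption \ref{assumption_1}, and applying the chain rule shows $g \in C^2$ on a neighbourhood of $\vect W$. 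The same chain-rule computation identifies $\nabla g(\vect W)$ with the vectorized loss gradient $\vect{\nabla_W \ell(f_W(x), y)}$ and $\nabla^2 g(\vect W)$ with the loss Hessian $\bH[\ell(f_W(x),y)]$; this is exactly the layerwise backpropagate-and-differentiate-once-more calculation already performed in Claims \ref{claim_lip} and \ref{prop_grad} and Proposition \ref{prop_hessian_lip}.

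Next I would invoke the multivariate Taylor theorem with integral remainder along the segment $s \mapsto \vect W + s\vect U$, $s\in[0,1]$, on which $g$ is $C^2$:
\[
    g(\vect W + \vect U) = g(\vect W) + \big\langle \nabla g(\vect W), \vect U\big\rangle + \int_0^1 (1 - s)\,\big\langle \vect U,\, \nabla^2 g(\vect W + s\vect U)\,\vect U\big\rangle\, ds.
\]
Using $\int_0^1 (1-s)\,ds = \tfrac12$ and peeling off the Hessian at $s=0$, this becomes the stated identity with
\[
    R_2\big(\ell(f_W(x), y)\big) = \int_0^1 (1 - s)\,\big\langle \vect U,\, \big(\nabla^2 g(\vect W + s\vect U) - \nabla^2 g(\vect W)\big)\,\vect U\big\rangle\, ds,
\]
modulo the harmless factor $\tfrac12$ in front of the quadratic term, which can be absorbed into the second-order term. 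By continuity of $\nabla^2 g$ this remainder is $o(\normFro{U}^2)$; if a quantitative rate is wanted, the Lipschitzness of the loss Hessian from Proposition \ref{prop_hessian_lip} (applied block-by-block and summed over layers) upgrades it to $R_2 = O(\normFro{U}^3)$, which is what Lemma \ref{lemma_perturbation} later uses.

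The only step needing genuine care — and hence the main obstacle — is the $C^2$ verification together with the chain-rule bookkeeping that identifies $\nabla g$ and $\nabla^2 g$ with the loss gradient and loss Hessian: one must track how a weight perturbation propagates forward through all $L$ layers and through the two differentiations, which is precisely the content of Claims \ref{claim_lip}, \ref{prop_grad} and Proposition \ref{prop_hessian_lip}. Once that is in place, the statement is an immediate application of Taylor's theorem, so no further work is required.
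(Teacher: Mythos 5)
Your proposal is correct and follows essentially the same route as the paper: both are a direct application of the multivariate Taylor theorem to $g(\vect{W})=\ell(f_W(x),y)$, with the remainder expressed as a Hessian difference along the segment from $\vect{W}$ to $\vect{W}+\vect{U}$ (the paper uses the mean-value form at an intermediate point $\eta$ where you use the integral form --- an immaterial difference, since both are ultimately controlled by the Hessian Lipschitz bound of Proposition \ref{prop_hessian_lip}). Your remark about the factor of $\tfrac12$ on the quadratic term is well taken: the paper's statement and proof silently omit it, but this only affects constants, not the order of the error term used downstream in Lemma \ref{lemma_perturbation}.
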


\begin{proof}
    The proof of Claim \ref{lemma_taylor} follows by the fact that the activation $\phi_i(x)$ is twice-differentiable.
    From the mean value theorem, let $\eta$ be a matrix that has the same dimension as $W$ and $U$. There must exist an $\eta$ between $W$ and $U+W$ so that the following equality holds:
    \begin{align*}
        R_2(\ell(f_W(x),y)) = \langle \vect{U}, \big(\bH[\ell(f_W(x),y)]\mid_{\eta} - \bH[\ell(f_W(x),y)]\big)\vect{U}\rangle.
    \end{align*}
    This completes the proof of the claim.
\end{proof}

Based on the above claim, we prove Taylor's expansion of noise stability.
 
\begin{proof}[Proof of Lemma \ref{lemma_perturbation}]
We take the expectation over $U$ for both sides of the equation in Lemma \ref{lemma_taylor}. The result becomes
\begin{align*}
    \mathbb{E}_U[\ell(f_{W+U}(x),y)] = & ~ \mathbb{E}_U\big[\ell(f_W(x),y) + \langle U, \nabla_W (\ell(f_W(x),y))\rangle \\
    &+ \langle \vect{U},\bH[\ell(f_W(x),y)]\vect{U}\rangle + R_2(\ell(f_W(x),y))\big].
\end{align*}
Then, we use the notation of the distribution $\cQ$ on $\mathbb{E}_U[\ell(f_{W+U}(x),y)]$. We have
\begin{align*}
    \ell_\cQ(f_W(x),y) =& \mathbb{E}_U[\ell(f_W(x),y)] + \mathbb{E}_U[\langle U, \nabla_W(\ell(f_W(x),y))\rangle] \\
    &+ \mathbb{E}_U[\langle \vect{U},\mathbf{H}[\ell(f_W(x),y)]\vect{U}\rangle] + \mathbb{E}_U[ R_2(\ell(f_W(x),y))].
\end{align*}
Since $\mathbb{E}[U_i] = 0$ for every $i = 1,2,\cdots,L$, the first-order term will be
\begin{align}
    \mathbb{E}_U[\langle U, \nabla_W(\ell(f_W(x),y))\rangle] = \sum_{i=1}^L\mathbb{E}_{U_i}[\langle U_i, \nabla_W(\ell(f_W(x),y))\rangle]  = 0 \label{eq_taylor_1}.
\end{align} 
Since we assume that $U_i$ and $U_j$ are independent variables for any $i\neq j$, we have
\begin{align*}
    \mathbb{E}_U[\langle \vect{U},\mathbf{H}[\ell(f_W(x),y)]\vect{U}\rangle] 
    =& \sum_{i,j=1}^L \mathbb{E}_U[\langle \vect{U_i}, \frac{\partial^2}{\partial \vect{W_i}\partial \vect{W_j}}\ell(f_W(x),y)\vect{U_j}\rangle] \\
    =& \sum_{i=1}^L\mathbb{E}_U[\langle \vect{U_i},\mathbf{H}_i[\ell( f_W(x),y)]\vect{U_i}\rangle].
\end{align*}
For each $U_i$, let $\Sigma_i$ be the variance of $U_i$. The second-order term will be
\begin{align}
    \mathbb{E}_{U_i}[\langle \vect{U_i},\bH_i[\ell( f_W(x),y)]\vect{U_i}\rangle] = \big\langle \Sigma_i,\mathbf{H}_i[\ell(f_W(x),y)]\big\rangle. \label{eq_taylor_2}
\end{align}
The expectation of the error term $R_2(\ell(f_W(x),y))$ be the following
\begin{align*}
    \mathbb{E}_U[R_2(\ell(f_W(x),y))] &= \mathbb{E}_U[\langle \vect{U}\vect{U}^\top, \big(\bH[\ell(f_W(x),y)]\mid_\eta - \bH[\ell(f_W(x),y)]\big)\rangle] \\
    &= \sum_{i=1}^L\mathbb{E}_{U_i}[\langle \vect{U_i}\vect{U_i}^\top, \big(\bH_i[\ell(f_W(x),y)]\mid_{\eta_i} - \bH_i[\ell(f_W(x),y)]\big)\rangle].
\end{align*}
From Proposition \ref{prop_hessian_lip}, let $\frac{\sqrt{3}}{9}C_1$ equal to the following number
\begin{align*}
    \frac{3}{2}(L+1)^2e^6\kappa_2\kappa_1^2\kappa_0^{3L}\Big(\max_{x\in\cX}\bignorms{x}\max_{1\leq i\leq L}\frac{1}{\bignorms{W_i}}\prod_{h=0}^{L}d_h\prod_{h=1}^{L}\bignorms{W_h}\Big)^3.
\end{align*}
Hence, we have the following inequality:
\begin{align*}
    &\mathbb{E}_{U_i}\Big[\langle \vect{U_i}\vect{U_i}^\top, \big(\bH_i[\ell(f_W(x),y)]\mid_{\eta_i} - \bH_i[\ell(f_W(x),y)]\big)\rangle\Big] \\
    \leq\,& \mathbb{E}_{U_i}\Big[\bignormFro{U_i}^2\bignormFro{\mathbf{H}_i[\ell(f_W(x),y)]\mid_{\eta_i} - \bH_i[\ell(f_W(x),y)]}\Big] \\
    \leq\,& \mathbb{E}_{U_i}\Big[\bignormFro{U_i}^2 \frac{\sqrt{3}}{9}C_1\bignormFro{U_i}\Big] \\
    =\,& \frac{\sqrt{3}}{9}C_1 \mathbb{E}_{U_i}\Big[\bignormFro{U_i}^3\Big].
\end{align*}
By standard facts, we have that $(\mathbb{E}_{U_i}\bignormFro{U_i}^3)^{1/3}\leq \sqrt{3}\bignormFro{\Sigma_i}$.
As a result, the error term $R_2(\ell(f_W(x),y))$ is smaller than 
$    \frac{\sqrt{3}}{9}C_1\sum_{i=1}^L(\sqrt{3}\bignormFro{\Sigma_i})^3 = C_1\sum_{i=1}^L\bignormFro{\Sigma_i}^{3/2}.$
Thus, the proof is complete.
\end{proof}

\section{Experiment Details}\label{sec_add_exp}

The code repository for reproducing our experiments can be found at \url{https://github.com/VirtuosoResearch/Robust-fine-tuning}.
Below, we outline several aspects to complement the experiment section.

\subsection{Experimental setup for measuring generalization}\label{sec_add_setup}


Figure \ref{fig_intro} compares our Hessian distance measure with previous distance measures and generalization errors of fine-tuning. In this illustration, we fine-tune the ViT-Base model on the Clipart dataset with weakly-supervised label noise. 
We control the model distance from the initialization $v_i$ in fine-tuning and sweep the range of distance constraints in $\{0.1, 0.2, 0.3, 0.4, 0.5, 0.6, 0.7, 0.8, 0.9, 1.0\}$. 
For the distance measure, we evaluate the generalization bound from \citet{pitas2017pac}. For the scale of the bound is orders of magnitude larger than the generalization error, we show their values divided by $10^{74}$ in Figure \ref{fig_intro}. 
For Hessian distance measure, we evaluate the quantity $${ \sum_{i=1}^L \sqrt{C \cdot\cH_i}/{\sqrt n}}$$ from Equation \eqref{eq_main}.
The difference between averaged test and training losses is the generalization error.

Figure \ref{fig_reg_res1} and \ref{fig_reg_res2} compare our generalization measure from Theorem \ref{theorem_hessian} with the empirical generalization errors in fine-tuning ResNet.
We use ResNet-50 models fine-tuned on the Indoor and CalTech-256 data sets.
We study the following regularization method: (1) fine-tuning with early stopping; (2) weight decay; (3) label smoothing; (4) Mixup; (5) $\ell^2$ soft penalty; and (6) $\ell^2$ projected gradient methods. 
Figure \ref{fig_reg_bert} compares our generalization measure with the generalization errors in BERT fine-tuning. We consider BERT-Base on the MRPC dataset. We study the following regularization methods:
(1) early stopping;
(2) weight decay.
(3) $\ell^2$ soft penalty.
(4) $\ell^2$ projected gradient descent \cite{gouk2020distance}.
Figure \ref{fig_avg_d} numerically measures our Hessian distances $\sum_{i}^L \sqrt{\cH_{i}}$ on ResNet-50 and BERT-Base models. 
We measure these quantities for models fine-tuned with early stopping (implicit regularization) and $\ell^2$ projection gradient methods (explicit regularization).
Hyper-parameters are selected based on the accuracy of the validation dataset.
Figure \ref{fig:heatmap} measures the traces of the loss's Hessian by varying the noisy label.
We fine-tune both ResNet-18 and ResNet-50 on CIFAR-10 and CIFAR-100.
Each subfigure is a matrix corresponding to ten classes.
We use the ten classes for CIFAR-10 and randomly select ten classes for CIFAR-100.
The $(i, j)$-th entry of the matrix is the trace of the loss's Hessian calculated from using noisy label $\tilde{y} = j$ on samples with true label $i$.
For presentation, we select samples whose loss value under the true label is less than 1e-4. The trace values are then averaged over 200 samples from each class.
 
Figure \ref{fig:reduced_hessians} shows our Hessian distance measure of models fine-tuned with different algorithms, including early stopping, label smoothing, Mixup, SupCon, SAM, and our algorithm. 
We fine-tune ResNet-18 models on the Clipart dataset with weakly-supervised label noise. The Hessian distance measures are evaluated as $\sum_{i=1}^L\sqrt{\cH_i}$ from Equation \ref{eq_main}.

Table \ref{tab:bound_measurement} compares our generalization measure from Equation \ref{eq_main} with previous generalization bounds. We report the results of fine-tuning ResNet-50 on CIFAR-10 and CIFAR-100. We also include the results of fine-tuning BERT-Base on MRPC and SST-2. We describe the precise quantity for previous results:

\begin{itemize}
    \item Theorem 1 from \citet{gouk2020distance}: $ \frac{1}{\sqrt{n}}\prod_{i=1}^{L}(2 ||W_i||_{\infty}) \sum_{i=1}^{L}(\frac{||W_i - W_i^{(s)}||_{\infty}}{||W_i||_{\infty}})$ where $||\cdot||_{\infty}$ is the maximum absolute row sum matrix norm.
    \item Theorem 4.1 from~\citet{li2021improved}: $\sqrt{
    \frac{1}{\epsilon^2 n} \left ( \sum_{i=1}^L \frac{\prod_{j=1}^L (B_j + D_j) }{B_i + D_i} \right )^2 \left ( \sum_{i=1}^L D_i^2 \right) 
    }$, where $\epsilon$ is a small constant value set as $0.1$ of the average training loss, $B_i = ||W_i^{(s)}||_2$, and $D_i = ||W_i - W_i^{(s)}||_F$.  
    \item Theorem 2.1 from \citet{long2020generalization}: $\sqrt{\frac{M}{n} \prod_{i=1}^{L}||W_i||_2^2  \sum_{i=1}^{L} ||W_i - W_i^{(s)} ||_2}$ where $M$ is the total number of trainable parameters.
    \item Theorem 1 from \citet{neyshabur2017pac}: $\sqrt{
    \frac{1}{\gamma^2 n} \prod_{i=1}^{L}||W_i||_2^2\sum_{i=1}^L\frac{||W_i||_F^2}{||W_i||_2^2}
    }$.
    \item Theorem 4.2 from~\citet{pitas2017pac}: 
    $\sqrt{
    \frac{1}{\gamma^2 n} \prod_{i=1}^{L}||W_i||_2^2\sum_{i=1}^L\frac{||W_i - W_i^{(s)}||_F^2}{||W_i||_2^2}
    }$.
    \item Theorem 5.1 from~\citet{arora2018stronger}: 
    $\sqrt{
    \frac{1}{\gamma^2 n} \max_x ||f_w(x)||_2^2 \sum_{i=1}^{L} \frac{\beta^2 c_i^2 \left \lceil \kappa/s \right \rceil }{\mu_{i\rightarrow}^2 \mu_i^2}
    }$.
\end{itemize}
The margin bounds require specifying the desired margin $\gamma$.
In Figure \ref{fig:margin}, we show the classification error rates, i.e., the margin loss values, of fine-tuned ResNet-50 on CIFAR-10 and CIFAR-100 about the margin. 
The results show that the classification error depends heavily on $\gamma$.
For a fair comparison, we select the largest margin so that the difference between margin loss and zero-one loss is less than 1\%.
This leads to a margin of 0.4 on CIFAR-10 and 0.01 on CIFAR-100.

\begin{figure*}[!t]
    \centering
    \begin{subfigure}[b]{0.475\textwidth}
        \centering
        \includegraphics[width=0.8\textwidth]{./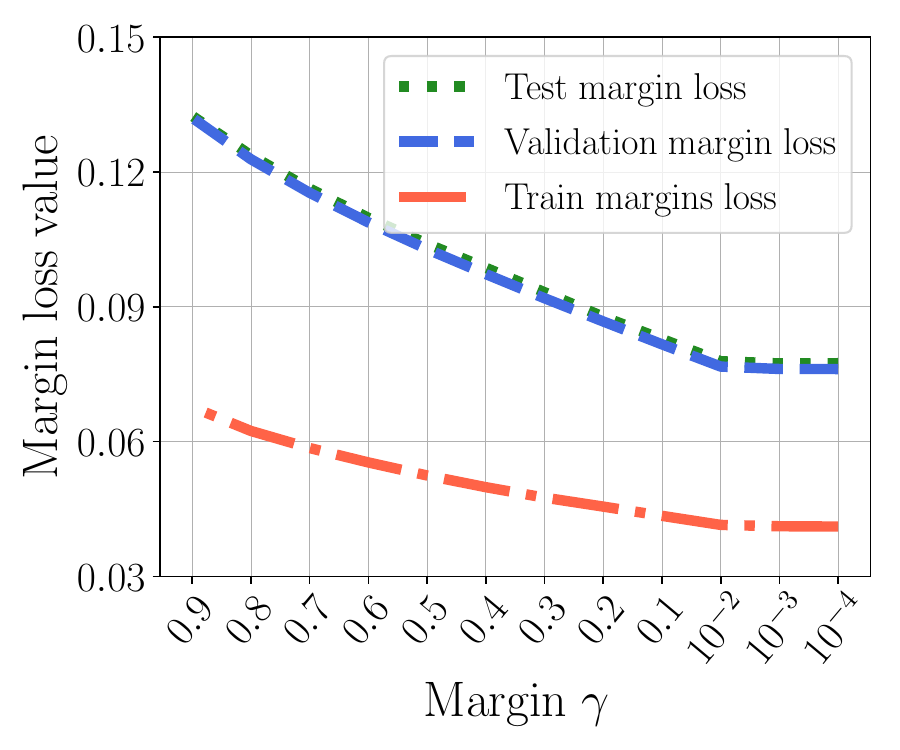}
    \end{subfigure}
    \begin{subfigure}[b]{0.475\textwidth}
         \centering
         \includegraphics[width=0.8\textwidth]{./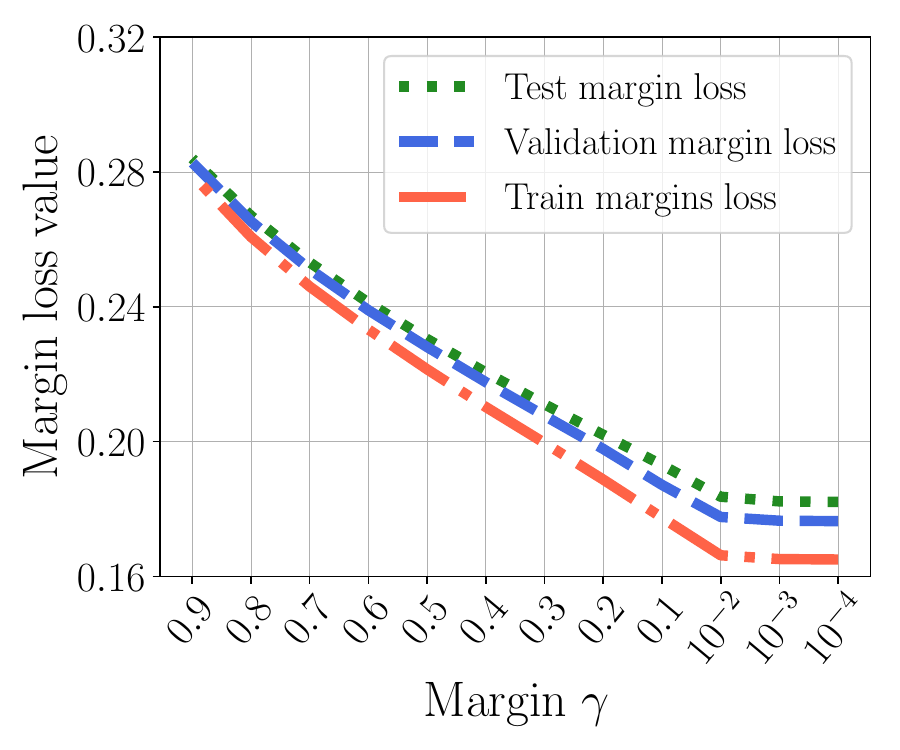}
    \end{subfigure}
    \caption{Margin loss with various margins for a fine-tuned network on CIFAR data sets.}\label{fig:margin}     
 \end{figure*}


In the experiments on image classification data sets, we fine-tune the model for 30 epochs. We use Adam optimizer with learning rate 1e-4 and decay the learning rate by ten every ten epochs.
In the experiments on text classification data sets, we fine-tune the BERT-Base model for 5 epochs. We use Adam optimizer with an initial learning rate of 5e-4 and then linearly decay the learning rate.

\subsection{Experimental setup for fine-tuning with noisy labels}\label{sec_add_exp_noisy}

\noindent\textbf{Datasets.} We evaluate our algorithm for both image and text classification tasks.
We use six image classification tasks derived from the DomainNet dataset. The DomainNet dataset contains images from 345 classes in 6 domains, including Clipart, Infograph, Painting, Quickdraw, Real, and Sketch. We use the same data processing as in \citet{mazzetto2021adversarial} and use five classes randomly selected from the 25 classes with the most instances.
We also use MRPC dataset from the GLUE benchmark.

We consider independent  and correlated label noise in our experiments. Independent label noise is generated by flipping the labels of a given proportion of training samples to other classes uniformly.
Correlated label noise is generated following the  \citet{mazzetto2021adversarial} for annotating images from the DomainNet dataset. For completeness, we include a description of the protocol.
For each domain, we learn a multi-class predictor for the five classes. The predictor is trained by fine-tuning a pretrained ResNet-18 network, using 60\% of the labeled data for that domain. For each domain, we consider the classifiers trained in other domains as weak supervision sources. We generate noisy labels by taking majority votes from weak supervision sources.  Table \ref{tab:dataset_statistics} shows the statistics of the dataset.

\begin{table}[h!]
\centering
\caption{Basic statistics for six data sets with noisy labels (cf. \citet{mazzetto2021adversarial}).}\label{tab:dataset_statistics}
\begin{scriptsize}
\begin{tabular}{@{}lcccccc@{}}
\toprule
  & \multicolumn{6}{c}{DomainNet} \\ \cmidrule(l){2-7} 
  & Clipart  & Infograph  & Painting  &  Quickdraw & Real  & Sketch \\ \midrule
Number of training Samples & 750 &	858 & 872 & 1500 & 1943 &	732 \\
Number of validation Samples & 250 & 286	& 291	& 500 & 648 & 245 \\
Number of test Samples & 250 & 287 & 291 & 500 & 648 & 245 \\
Noise rate & 0.4147 & 0.6329 & 0.4450 & 0.6054 & 0.3464 & 0.4768\\ \bottomrule
\end{tabular}
\end{scriptsize}
\end{table} 

\paragraph{Models.} For the experiments on the six tasks from DomainNet, we fine-tune ResNet-18 and ResNet-101 with the Adam optimizer.
We use an initial learning rate of $0.0001$ for $30$ epochs and decay the learning rate by $0.1$ every $10$ epochs.
For the experiments on MRPC, we fine-tune the RoBERTa-Base model for ten epochs.
We use the Adam optimizer with an initial learning rate of 5e-4 and then linearly decay the learning rate.
We report the Top-1 accuracy on the test set.
We average the result over ten random seeds.

\paragraph{Implementation.}
In our algorithm, we use the confusion matrix $F$ obtained by the estimation method from \citet{yao2020dual}. For applying regularization constraints, we use the layer-wise regularization method in \citet{li2021improved}.
The distance constraint parameter $D_i = \gamma^{i-1}D$ is set for layer $i$ in equation \eqref{eq_constraint}. $\gamma$ is a hyper-parameter controlling the scaling of the constraints, and $D$ is the constraint set for layer $1$. 
We search the distance constraint parameter $D$ in $[0.05, 10]$ and the scaling parameter $\gamma$ in $[1, 5]$.
For the results reported in Table \ref{tab:syn_and_ws_noise}, we searched the hyper-parameters for $30$ times via cross-validation. 

\paragraph{Hyper-parameters.} For the baselines, we report the results from running their open-sourced implementations.
For label smoothing, we search  $\alpha$ in $\{0.2, 0.4, 0.6, 0.8\}$.
For Mixup, we search  $\alpha$ in $\{0.2, 0.4, 0.6, 0.8, 1.0, 2.0\}$.
For FixMatch, we adopt it on fine-tuning from noisy labels. We set a starting epoch to apply the FixMatch algorithm. We search its pseudo-labeling threshold in $\{ 0.7, 0.8, 0.9 \}$. The starting epoch is searched in $\{2, 4, 6, 8\}$. 
For APL, we choose the active loss as normalized cross-entropy loss and the passive loss as reversed cross-entropy loss. We search the loss factor $\alpha$ in $\{1, 10, 100\}$ and $\beta$ in $\{0.1, 1.0, 10\}$.
For ELR, we search the momentum factor $\beta$ in $\{0.5, 0.7, 0.9, 0.99\}$ and the weight factor $\lambda$ in $\{0.05, 0.3, 0.5, 0.7\}$.
For SAT, the start epoch is searched in $\{3, 5, 8, 10, 13\}$, and the momentum is searched in $\{0.6, 0.8, 0.9, 0.99\}$.
For GJS, we search the weight in generalized Jensen-Shannon Divergence in $\{0.1, 0.3, \allowbreak 0.5, 0.7, 0.9\}$.
For DualT, we search the epoch to start reweighting in $\{1, 3, 5, 7, 9\}$.
For SupCon, we search the contrastive loss weighting parameter $\lambda$ in  $\{0.1, 0.3, \allowbreak 0.5, 0.7, 0.9, 1.0\}$ and the temperature parameter $\tau$ in $\{0.1, 0.3, 0.5, 0.7\}$.
For SAM, we search the neighborhood size parameter $\rho$ in $\{0.01, 0.02, 0.05, 0.1, 0.2, \allowbreak 0.5\}$.

\subsection{Comparison with related Hessian-based generalization measures}

A similar Hessian-based measure is proposed in a concurrent work \cite{yang2022does}. The method from \citet{yang2022does} optimizes PAC-Bayesian bounds with the posterior covariance matrix constructed from Hessian's spectrum.
We compare our result from the layer-wise Hessian measure from equation \ref{eq_main} with this alternative Hessian measure, defined as $$\cH:= \max_{(x,y) \in \cD} v^{\top} \bH^+[\ell(f_{\hat{W}}(x), y)] v,$$
where $v$ is the concatenation of the $v_i$ vectors and $\bH^+$ is the truncated Hessian matrix over all model parameters of $v$.
To compute the above measure, we treat $\cD$ as the union of the training and testing data sets.
We use ResNet-18 as the base model, fine tuned on CIFAR-10 and CIFAR-100 data sets. We also evaluate our results of the BERT-Based-Uncased model on MRPC and SST-2 data sets from the GLUE benchmark.
In Table \ref{tab_further_bound_comparison}, we find that the Hessian measure $\sqrt{\cH / n}$ is slightly smaller than the layer-wise Hessian measure from equation \ref{eq_main}.
Rigorously formulating this is an interesting question for future work. We remark that this likely requires an analysis of the Lipschitz-continuity of the full Hessian matrix.

\begin{table}[t!]
    \captionof{table}{Numerical comparison between the layer-wise Hessian measure and the trace of the Hessian of the full model's  loss. The experiments are conducted on ResNet-50 (for CIFAR-10 and CIFAR-100) and BERT-Base (for MRPC and SST2).}\label{tab_further_bound_comparison}
    \centering
    {\small\begin{tabular}{@{}ccccc@{}}
    \toprule
    {Generalization measure} & CIFAR-10 & CIFAR-100 & MRPC & SST2 \\
    \midrule
    $\sum_{i=1}^{L} \frac{\sqrt{\cH_i}}{\sqrt{n}}$ & 0.0853 & 0.1074 & 0.1417 & 0.0271 \\
    $\frac{\sqrt{\cH}}{\sqrt{n}}$ & 0.0396 & 0.0429 & 0.1066 & 0.0047 \\
    $\frac{\sqrt{\sum_{i=1}^{L} \cH_i}}{\sqrt{n}}$ & 0.0144 & 0.0180 & 0.0199 & 0.0034\\    \bottomrule
    \end{tabular}}
\end{table}


\end{document}